%% file: main_arxiv_2026.tex
\documentclass[11pt]{article} 

\usepackage{amssymb,amsthm}
\usepackage[T1]{fontenc}
\usepackage{graphicx} 
\usepackage{amsfonts}
\usepackage{enumitem}
\usepackage[margin=1in]{geometry}
\usepackage{nicefrac}
\usepackage{dsfont}
\usepackage{bbding}
\usepackage[extdef=true]{delimset}
\usepackage[colorlinks,allcolors=blue]{hyperref}
\usepackage{xcolor}

\hypersetup{
           breaklinks=true,   
}
\usepackage{algorithm}
\usepackage{algorithmic}
\usepackage[round]{natbib}

\usepackage{xargs}

\usepackage{newtxtext}
\usepackage[vvarbb]{newtxmath}
\usepackage[cal=stix]{mathalpha}

\usepackage[title]{appendix}
\usepackage{ifthen}

\usepackage{tkdefs_arxiv}
\usepackage[capitalise]{cleveref}
\input{macros.tex}
\newcommand\nnfootnote[2]{%
  \begin{NoHyper}
  \renewcommand\thefootnote{#1}\footnotetext{#2}%
  \end{NoHyper}
}

\usepackage{color-edits}
 \addauthor{fc}{red}
 \addauthor{sr}{orange}
 \addauthor{le}{magenta}
 \addauthor{tk}{cyan}
\renewcommand{\ind}[1]{\mathbf{1}\{#1\}}

\newcommand{\yprob}[3]{Q_{#2,#3}(#1)}

\newcommand{\drew}{R_\calD}

\newcommand{\calX}{\mathcal{X}}
\newcommand{\calY}{\mathcal{Y}}
\newcommand{\calH}{\mathcal{H}}
\newcommand{\calD}{\mathcal{D}}
\newcommand{\calA}{\mathcal{A}}
\newcommand{\calF}{\mathcal{F}}

\renewcommand{\N}{\mathbb{N}}

\newcommand{\combA}{\calA}
\newcommand{\comba}{a}

\theoremstyle{plain}
\newtheorem{theorem}{Theorem}
\newtheorem{lemma}{Lemma}

\newtheorem{cor}{Corollary}

\renewcommand{\eqref}[1]{\texorpdfstring{\hyperref[#1]{(\ref*{#1})}}{(\ref*{#1})}}

\title{The Sample Complexity of Multiclass and Sparse \\ Contextual Bandits}
\author{}
\date{}

\begin{document}

\author{
Liad Erez$^{*,1}$
\and
Fan Chen$^{*,2}$
\and
Alon Cohen$^{1,3}$
\and
Tomer Koren$^{1,3}$
\and
\and
Yishay Mansour$^{1,3}$
\and
Shay Moran$^{3,4}$
\and
Alexander Rakhlin$^{2}$
}

\maketitle







\maketitle

\nnfootnote{*}{Equal contribution}
\nnfootnote{1}{Tel Aviv University, Israel}
\nnfootnote{2}{Massachusetts Institute of Technology, USA}
\nnfootnote{3}{Google Research Tel Aviv, Israel}
\nnfootnote{4}{Technion---Israel Institute of Technology, Haifa, Israel}

\begin{abstract}
We study contextual bandits in the stochastic i.i.d.\ setting, where a learner observes contexts drawn from an unknown distribution, selects actions from a finite set $\cA$, and aims to identify an approximately optimal policy from a given class based on bandit feedback.
Motivated by the important special case of bandit multiclass classification with zero-one rewards, we focus on the \emph{$s$-sparse} setting in which, for every context, the underlying reward vector has $L_1$-norm at most $s \ll |\cA|$. Our main result is the design of algorithms that, with probability at least~$1-\delta$, output an $\eps$-optimal policy compared to policy class $\Pi$ using
\begin{align*}
    \wt{O} \brk*{\brk*{\frac{s}{\eps^2} + \frac{|\cA|}{\eps}} \log \frac{|\Pi|}{\delta}}
\end{align*}
samples. 
We further extend this bound to general Natarajan classes and complement it with a matching lower bound (up to logarithmic factors), thereby closing a substantial gap left by prior work~\citep{erez2024real,erez2024fast,erez2025bandit}, which incurred an additional $\Theta(|\cA|^9)$ dependence.

We obtain these results via two complementary approaches. First, we analyze contextual bandits through the lens of contextual decision making with structured observations, designing an exploration-by-optimization algorithm whose sample complexity is governed by the \emph{decision-estimation coefficient} (DEC; \citealp{foster2021statistical,foster2022complexity}). We show that, with $s$-sparse rewards, the induced model class admits a sharp DEC bound that scales with $s$ and directly yields the optimal rate. Since this approach is largely information-theoretic and involves solving complex min-max optimization problems, we also develop a second, more specialized algorithmic method based on a low-variance exploration technique. This approach leads to concrete, tractable algorithms and naturally extends to contextual combinatorial semi-bandits, leading to improved sample complexity guarantees for bandit multiclass list classification.
\end{abstract}


\section{Introduction}

Contextual bandits~\citep{auer2002non,langford2007epoch} are a central model in online learning and decision-making under uncertainty, capturing the fundamental tradeoff between exploration and exploitation in the presence of side information. The framework has been extensively studied over the past two decades, motivated by a wide range of applications including online advertising, recommendation systems, medical decision-making, and adaptive experimentation. From a theoretical perspective, contextual bandits provide a rich setting that interpolates between supervised learning and reinforcement learning, and have driven the development of powerful algorithmic techniques and analytical tools. As a result, the problem has attracted sustained attention in the learning theory community, with a large body of work characterizing achievable performance guarantees under various feedback, structural, and complexity assumptions~\citep[e.g.,][]{beygelzimer2011contextual,dudik2011efficient,agarwal2014taming,foster2020beyond}.

We focus on the \emph{stochastic contextual bandit} setting, in which learning proceeds over a sequence of rounds $t = 1,2,\ldots,{T}$. In each round, a context $x_t \in \cX$ is drawn i.i.d.\ from an unknown population distribution, and the learner selects an action $a_t \in \cA$. The learner then observes a bounded reward corresponding only to the chosen action, drawn from an unknown reward distribution that may depend on the context. Given a policy class $\Pi \subseteq (\cX \to \cA)$, the learner's goal is to compete with the best policy in $\Pi$.
While much of the contextual bandit literature has focused on minimizing \emph{regret}, which measures the cumulative reward accrued during the learning process, an alternative objective in the stochastic setting is \emph{sample complexity}. Here, success is defined as outputting, after the final round $T$, a policy whose expected reward is within $\eps>0$ of that of the best policy in $\Pi$ with probability at least $1-\delta$; the sample complexity is the minimal $T = T(\eps,\delta)$ for which this guarantee holds.


One fundamental instance of stochastic contextual bandits, capturing some of the primary historical motivations for this setting, is the \emph{bandit multiclass classification} problem~\citep{kakade2008efficient,daniely2011multiclass}. Here, contexts correspond to training examples that must be mapped to a finite set of $K$ labels, and $\Pi$ corresponds to a hypothesis class of classification rules. Upon predicting a label, the learner only observes whether the prediction was correct; notably, \emph{the ground-truth label itself is not revealed}. In this setting, the underlying reward function of primary interest is the zero-one reward, in which a single label is associated with unit reward while all others receive zero reward.
In particular, the rewards admit a natural \emph{sparsity} structure: the reward vector at each context has only one non-zero entry.
Existing regret-minimizing algorithms for contextual bandits~\citep[e.g.,][]{auer2002nonstochastic,dudik2011efficient,agarwal2014taming} imply a sample complexity of $O((|\cA|/\eps^2)\log(|\Pi|/\delta))$, regardless of sparsity, and unfortunately are unable to exploit such a structure to obtain any improvement in rates (unless the sample size is prohibitively large and exceeds~$|\Pi|$; see \citealp{erez2024real}).

Somewhat surprisingly, focusing exclusively on sample complexity (rather than regret), the sparsity structure of multiclass rewards has recently been shown to enable ``fast’’ sample complexity rates that avoid the leading dependence on $|\cA|$ prototypical of bandit problems, and instead \emph{match standard full-information rates} in the primary $\eps \to 0$ regime~\citep{erez2024real,erez2024fast,erez2025bandit}.%
\footnote{A similar phenomenon appears in the regret-minimization version of contextual bandits; however, it exhibits an unavoidable linear dependence on the size of the policy class $|\Pi|$~\citep{erez2024real}.}
In particular, the dominant term in these fast rates is governed by the sparsity of the problem and depends only (poly-)logarithmically on the number of actions. This improvement, however, comes at the cost of a high-degree polynomial dependence on $|\cA|$ in other additive terms.
The state-of-the-art bounds, due to \citet{erez2025bandit}, take the form $\wt{O}((s/\eps^2 + |\cA|^9)\log(|\Pi|/\delta))$,%
\footnote{For bandit multiclass classification the polynomial term was improved by \cite{erez2025bandit} to $|\cA|^7$.} 
where~$s$ is an upper bound on the $L_1$-norm of the reward vector at each context. It has been conjectured that the correct dependence on $|\cA|$ should be significantly smaller, and the optimal bound should be of the form $\wt{\Theta}((s/\eps^2 + |\cA|/\eps)\log(|\Pi|/\delta))$. Bridging this substantial gap and obtaining minimax-optimal sample complexity bounds has remained an open question in this line of work.

In this work, we resolve this open question by establishing tight (up to logarithmic factors) sample complexity bounds for contextual bandits with sparse rewards, which directly imply tight bounds for bandit multiclass classification. Specifically, we design algorithms achieving the conjectured sample complexity
\[
    \wt{O}\brk*{\biggl(\frac{s}{\eps^2} + \frac{|\cA|}{\eps}\biggr)\log\frac{|\Pi|}{\delta}},
\]
representing a substantial improvement over prior bounds that include an extraneous $|\cA|^9$ dependence.%
\footnote{While the $|\cA|^9$ term is independent of $\eps$, note that $|\cA|/\eps \leq |\cA|^2 + 1/\eps^2$; thus, any bound containing an additive term larger than $|\cA|^2$ is strictly suboptimal.}
Notably, the scaling with $|\cA|$ is linear and comes into the bound only in the lower-order term, whereas the dominant term (as $\eps \to 0$) only depends on the sparsity parameter.
Moreover, we show that this bound is minimax optimal up to logarithmic factors.

Our results immediately yield corresponding guarantees for PAC learning in bandit multiclass classification with (possibly infinite) hypothesis classes of finite Natarajan dimension. In addition, our techniques extend to \emph{contextual combinatorial semi-bandits (CCSB)}, and consequently to bandit list classification~\citep{erez2025bandit}: a setting in which predictions correspond to subsets (or \emph{lists}) of actions of a fixed size $m \geq 1$, and the reward associated with a subset is the sum of the rewards of its constituent actions. In this setting, we obtain sample complexity upper bounds that significantly improve upon the best previously known results of \citet{erez2025bandit}, that again contained high-polynomial dependencies in $|\cA|$.

We establish our main results via two complementary approaches. The first adopts a general learning-theoretic perspective based on the function approximation framework for interactive decision making and the decision-estimation coefficient (DEC), yielding an information-theoretic characterization of the exploration complexity of sparse contextual bandits. We show that sparsity induces a sharp bound on the DEC, leading directly to optimal sample complexity guarantees and placing bandit multiclass classification within a unified theory of sequential decision making. 
Complementing this, we develop a more specialized and algorithmic approach based on low-variance exploration, following recent work of \citet{erez2024fast,erez2025bandit}. This approach results in concrete, tractable algorithms with closed-form updates and naturally extends to contextual combinatorial semi-bandits. 
Together, the two approaches provide both a general information-theoretic derivation of the optimal rates and an explicit algorithmic framework for achieving them.

\subsection{Summary of contributions}

In more detail, our main results are the following. Below, $\cX$ is the context space, $\cA$ is the set of actions, and $\Pi \subseteq (\cX \to \cA)$ is the policy class.
\begin{itemize}[leftmargin=3ex,itemsep=0pt]
    \item As our main result, we develop two methodologies (outlined in \cref{sec:DEC,sec:sparse}) both resulting in contextual bandit algorithms that guarantee with probability at least $1-\delta$ to produce a policy~$\hat \pi$ that is $\eps$-optimal with respect to the (unknown) reward distribution with sample complexity
    \begin{align*}
        \wt{O} \brk*{\brk*{\frac{s}{\eps^2} + \frac{|\cA|}{\eps}} \log \frac{|\Pi|}{\delta}}.
    \end{align*}
    provided the reward functions $r \in [0,1]^\cA$ are $s$-sparse, namely $\norm{r}_1 \leq s$.
    We prove that this bound is tight by providing a matching lower bound (up to logarithmic factors) in \cref{sec:lower-bound}.
    \item As a direct corollary, the above result implies that for (single-label, $s=1$) bandit multiclass classification over a hypothesis class $\calH$ of finite Natarajan dimension $d_N$ and a finite label space~$\calY$, there exists a PAC learning algorithm with sample complexity
    \begin{align*}
        \wt{O} \brk*{\brk*{\frac{1}{\eps^2} + \frac{|\calY|}{\eps}} \brk*{d_N + \log \frac{1}{\delta}}}.
    \end{align*}
    \item Our main result extends to contextual combinatorial semi-bandits (CCSB), and in turn to bandit multiclass list classification, where we establish a sample complexity bound of
    \begin{align*}
        \wt{O} \brk*{\brk*{\frac{s\min(s,m)}{\eps^2} + \frac{K \min(s,m)}{m\eps}} \log\frac{|\Pi|}{\delta} },
    \end{align*}
    where $K$ is the dimensionality of the action space, $m$ is the fixed subset size and the rewards satisfy $\norm{r}_1 \leq s$ (with probability $1$). Due to space constraints, the details of this result are deferred to~\cref{sec:comb-bandit}.
\end{itemize}

\subsection{Technical overview}

We now outline the key technical challenges and novelties in establishing our main results.

\paragraph{Information-theoretic upper bound via the DEC framework.} 
%
Our first approach builds on the general \emph{exploration-by-optimization} (ExO) framework of \citet{lattimore2020exploration,lattimore2021mirror,foster2022complexity}, which reduces sample-efficient exploration in any online decision-making problem to analyzing a complexity measure known as the \emph{decision-estimation coefficient} (DEC). We instantiate this framework in the setting of contextual bandits with structured observations, and develop a refined DEC analysis in the sparse reward setting that yields optimal dependence on both the sparsity parameter and the number of actions.
At a high level, the algorithm proceeds by repeatedly solving a min–max optimization problem over exploration-exploitation distributions and estimators, as prescribed by the ExO framework. In each round, the learner selects an exploration policy distribution that balances immediate information gain against estimation error, and collects bandit feedback accordingly. 

Our analysis then centers on bounding the DEC of the stochastic contextual bandit problem. 
%
The primary technical innovation lies in our upper bound on the DEC for the class of models with expected squared reward bounded by $s$. 
To this end, instead of relating the difference in mean rewards $|\fm(a) - \fm[\oM](a)|$ between a candidate model $M$ and a reference model $\oM$ solely to the Hellinger distance, we utilize an inequality that scales with the local reward second moment $\lambda_a = \mathbb{E}_{o \sim \oM(a)}[R(o)^2]$ under the reference model $\oM$. This bound takes the form 
$$
    |\fm(a) - \fm[\oM](a)| 
    \lesssim 
    \sqrt{\lambda_a D_{\rm H}^2(M(a), \oM(a)) } + D_{\rm H}^2(M(a), \oM(a)),
$$
where $D_{\rm H}^2$ denotes the squared Hellinger distance. To capitalize on this variance-sensitive bound, we construct a specific exploration distribution $q$ that mixes a uniform distribution with a component proportional to the sparsity contribution $\lambda_a$ of the reference model. This choice is critical as it cancels the variance term $\lambda_a$ in the error bound, allowing us to sum over actions and bound the DEC by the sparsity level $s$ rather than the ambient cardinality, which directly implies the optimal sample complexity $\wt{O}(s/\eps^2 + |\cA|/\eps)$.

\paragraph{Algorithmic upper bound via low-variance exploration.}

In our second method, outlined in \cref{sec:sparse}, we adopt a high-level approach similar to that of \citet{erez2024fast,erez2025bandit}. Concretely, we design an algorithm that operates in two phases: in the first phase, the algorithm computes an \emph{exploration distribution} $\hat p \in \Delta(\Pi)$ whose induced importance-weighted reward estimator has low (independent of $|\cA|$) variance \emph{simultaneously for all policies} $\pi \in \Pi$; in the second phase, this exploration distribution is used to uniformly estimate the expected rewards of all policies in $\Pi$ in a variance-sensitive manner, using Bernstein-type concentration bounds.

For a given policy $\pi \in \Pi$, the variance of the resulting estimator for the reward of $\pi$ can be written explicitly as
\begin{align*}
    \E_{(x,r)} \brk[s]*{\sum_{a \in \cA} r(a)^2 \, \frac{\ind{\pi(x)=a}}{\wt{Q}_{x,a}(\hat p)}},
\end{align*}
where $\wt{Q}_{x,a}(\hat p)$ denotes the marginal probability of selecting action $a$ when sampling a policy from $\hat p$ (mixed with a uniform distribution).
Prior work~\citep{erez2024fast,erez2025bandit} observed that this variance is, up to constant factors, equal to the partial derivative (with respect to $\pi$) of a convex log-barrier potential function $\Phi : \Delta(\Pi) \to \R$, defined as
$
    \Phi(p) = \E[-\sum_{a \in \cA} \log \wt{Q}_{x,a}(p)].
$
Accordingly, the goal of the first phase was achieved by using stochastic convex optimization methods to minimize $\Phi$ and thereby to approximately minimize (the $L_\infty$-norm of) its gradient. However, this approach incurs a penalty in sample complexity on the order of $|\cA|^9$, stemming from the large smoothness parameter of $\Phi$ (which scales as $|\cA|^2$) and the high accuracy to which $\Phi$ needs to be minimized, and ultimately leads to suboptimal bounds. 
In fact, minimizing the gradient of $\Phi$ using only $\approx |\cA|/\eps$ samples---sublinear in the smoothness parameter---seems to be a highly technical and nontrivial challenge.

We take a different route that bypasses the direct optimization of a log-barrier potential. Instead, we borrow a technique from a recent work of \citet{cohen2025sample} and employ an online approach based on multiplicative weights updates over adaptively chosen ``reward vectors'' that correspond to the variance of the induced reward estimator. 
%
A key observation is that the cumulative reward with respect to these reward vectors of \emph{any} online algorithm is bounded in expectation by roughly the sparsity parameter $s$. By leveraging the regret guarantees of multiplicative weights, we show that any benchmark policy $\pi \in \Pi$ also incurs small cumulative reward, which in turn corresponds to a small variance of the resulting estimator. After $T \approx |\cA|/\eps$ iterations (and samples), this variance is bounded by $\wt{O}(s + \eps |\cA|)$, which suffices to obtain the tight sample complexity.

This online approach is inspired by the work of \citet{cohen2025sample}, who employed related ideas in the full-information setting of multiclass classification to reduce the effective size of the label space by learning a list classifier. One key difference between our approach and theirs is that, while their online reward vectors are binary and correspond to prediction accuracy on previously misclassified examples, our reward vectors encode the variance of a bandit reward estimator and are used toward a fundamentally different objective that arises only in the presence of bandit feedback.

\subsection{Related work}

\paragraph{Contextual bandits.} 
The contextual multi-armed bandit problem was popularized by \cite{langford2007epoch}, and has been extensively studied since.
Much of the theoretical work has been focused mainly on the regret minimization setting, where regret bounds of the form $\sqrt{|\cA| T}$ shown for stochastic environments \citep{dudik2011efficient,agarwal2014taming} as well as for adversarial environments \citep{auer2002nonstochastic,mcmahan2009tighter, beygelzimer2011contextual}. The problem has been studied in numerous previous works in the function approximation framework \citep{chu2011contextual,filippi2010parametric,li2017provably,foster2018practical,foster2020beyond,foster2020instance}. The role of reward sparsity in contextual bandits has received attention more recently, with a particular focus on bandit multiclass classification. \cite{erez2024real} investigate the role of sparsity in online regret minimization and show that for moderately large policy classes the leading term of $\sqrt{|\cA| T}$ is unavoidable in general. \cite{erez2024fast} and \cite{erez2025bandit} later studied the effect of sparsity on sample complexity for PAC learning and showed that the usual dependence of $|\cA| / \eps^2$ can in fact be improved to $s / \eps^2$, but their results included suboptimal $\mathrm{poly}(|\cA|)$ additive terms.

\paragraph{Interactive Decision Making and the DEC framework.} 

A recent line of work develops a unified, complexity-theoretic view of sequential decision making with function approximation via the \emph{decision-estimation coefficient (DEC)}. \citet{foster2021statistical} introduces the DEC as a fundamental measure of statistical complexity for interactive decision problems, encompassing structured bandits and reinforcement learning, and provide matching upper and lower bounds (via the Estimation-to-Decisions meta-principle) that elevate the DEC to an analogue of VC/Rademacher complexity for interactive learning. Subsequent works sharpen this theory quantitatively, including tighter guarantees through refined DEC variants~\citep{foster2023tight,chen2024beyond} and generalization to various learning goals~\citep{chen2022unified,foster2023tight,foster2023complexity,glasgow2023tight,chen2025decision,liu2025decision}. The framework has also been instantiated to derive concrete guarantees for reinforcement learning with general function approximation, most notably in model-free settings~\citep{foster2023model,liu2025improved}. Parallel efforts also study more instance-adaptive notions of complexity (e.g., allocation-based coefficients) that aim at instance-optimality in interactive decision making, complementing the DEC viewpoint~\citep{wagenmaker2023instance}.

\paragraph{Combinatorial semi-bandits.} This problem was introduced by \cite{gyorgy2007line} in the context of online shortest paths, and has since received significant attention in the bandit literature, primarily within the regret minimization framework \citep[etc.]{audibert2014regret,wen2015efficient,kveton2015tight,neu2015first,wei2018more,ito2021hybrid}. For adversarial losses, \cite{audibert2014regret} established a regret bound of $O(\sqrt{mKT})$. We emphasize that in the original formulation the set of available predictions is an arbitrary subset of $\{0,1\}^K$, whereas in our work we restrict attention to $m$-sets, where the available predictions are all subsets of $\{0,1\}^K$ of size $m$. In this setting, \cite{lattimore2018toprank} showed that the $O(\sqrt{mKT})$ regret bound is optimal. The contextual combinatorial semi-bandit (CCSB) problem has been studied in several prior works \citep{qin2014contextual,wen2015efficient,takemura2021near,zierahn2023nonstochastic}, largely focusing on the setting in which the mappings from contexts to rewards are linear functions with additive noise. The works most closely related to ours are \cite{kale2010non,krishnamurthy2016contextual}, which consider finite, unstructured policy classes and derive regret bounds of order $O(\sqrt{mKT \log |\Pi|})$. Most recently, \cite{erez2025bandit} studied sample complexity in the presence of sparse rewards and bandit multiclass list classification, and is the most directly relevant to our work.

\paragraph{Bandit multiclass classification.}

This setting was originally introduced by \cite{kakade2008efficient}, with \cite{daniely2011multiclass} characterizing realizable deterministic learnability by the bandit Littlestone dimension. These results were extended by \cite{daniely2013price}, who demonstrated that the bandit Littlestone dimension characterizes online learnability whenever the label set $\calY$ is finite, and were further generalized to infinite label sets by \cite{raman2023multiclass}. Several prior works \citep{auer1999structural,daniely2011multiclass,long2017new} studied the price of bandit feedback in the realizable setting, with \cite{filmus2024bandit} showing that, for randomized learners, this price is bounded by an $O(|\calY|)$ factor relative to the full-information setting. The theoretical framework of multiclass list classification was first introduced by \cite{brukhim2022characterization}. \cite{charikar2023characterization} characterized PAC learnability for multiclass list classification by generalizing the DS-dimension \citep{daniely2014optimal}, \cite{moran2023list} studied the regret minimization setting and characterized learnability via a generalization of the Littlestone dimension, and \cite{hanneke2024list} investigated uniform convergence and sample compression in this setting.

\section{Problem setup}

\paragraph{Contextual bandits.} We consider a decision making task over a (possibly infinite) \emph{context space}~$\cX$ and a (finite) \emph{action set} $\cA$. A stochastic contextual bandit instance is specified by an unknown joint distribution $\calD$ over $\calX \times [0,1]^\cA$. Given $x \in \cX$ we denote by $\calD_x$ the reward distribution over $[0,1]^\cA$ conditioned on $x$. The learner interacts with the environment sequentially according to the following protocol:
\begin{itemize}
    \item The environment draws $(x_t,r_t) \sim \calD$ and $x_t$ is revealed to the learner.
    \item The learner selects $a_t\in\cA$ and observes $r_t(a_t)$.
\end{itemize}
%
For every \emph{policy} $\pi : \cX \to \Delta(\cA)$ we define the \emph{population reward} $\drew(\pi) \ldef \En_{(x,r) \sim \calD, a \sim \pi(x)}[r(a)].$ Given a \emph{policy class} $\Pi \subseteq (\cX \to \cA)$, the learner's goal is to produce a policy $\pi:\cX\to\Delta(\cA)$ which is $\eps$-optimal with respect to $\Pi$. Namely, given $\eps, \delta\in (0,1)$, the output policy must satisfy, with probability at least $1-\delta$,
$
    \max_{\pistar\in\Pi} \drew(\pistar)-\drew(\pi) \leq \eps.
$
The \emph{sample complexity} of the learner is defined as the smallest number of interaction rounds (as a function of $\eps,\delta$) sufficient to achieve this guarantee. 

\paragraph{Reward sparsity.} 

We assume that contextual rewards are \emph{$s$-sparse}, namely 
\begin{align*}
    \PP_{(x,r) \sim \calD} [ \norm{r}_1 \leq s ] = 1 ,
\end{align*}
where $1 \leq s \leq |\cA|$ is the sparsity parameter. We note that some of our results in fact hold under a slightly weaker assumption, where the rewards are bounded by $s$ in squared $L_2$-norm (this will be noted wherever relevant). 


\paragraph{Example: Bandit multiclass classification.} 

This is a special case of combinatorial bandits with sparse rewards where the rewards are constrained to one-hot vectors (i.e.\ zero-one rewards), and in particular, are $s$-sparse with $s=1$. In this setting, the action set $\cA$ is referred to as the \emph{label space} and is denoted by $\calY$, and $\Pi$ is referred to as the \emph{hypothesis class} and is denoted by~$\calH$.

\section{Information-theoretic upper bound via the DEC framework}
\label{sec:DEC}

\input{sec_DEC.tex}

\section{Algorithmic upper bound via low-variance exploration}
\label{sec:sparse}

In this section we present an alternative method for obtaining tight rates in contextual bandits with sparse rewards over a finite policy class, detailed in \cref{alg:bandit-pac}. Compared to our previous approach, this method has the benefit of admitting an explicit algorithm with closed form updates (in particular, it doesn't involve any min-max optimization procedure) as well as being able to output a policy from $\Pi$.\footnote{This property is referred to as \emph{proper learning} in the learning theory literature.} 
The high-level approach has a similar structure to that of \citet{erez2024fast,erez2025bandit}; namely, the algorithm operates in two phases:
\begin{enumerate}[label=(\roman*)]
    \item
    Compute an \emph{exploration distribution} $\hat p \in \Delta(\Pi)$ with the property that the importance-weighted reward estimator induced by $\hat p$ has low variance.
    \item
    Use the reward estimator to uniformly estimate $\drew(\pi)$ for all $\pi \in \Pi$ and output $\hat \pi$ with the highest empirical reward.
\end{enumerate}

\paragraph{Computing an exploration distribution.} 
We compute $\hat p$ by running Hedge / Multiplicative Weights (MW) over $\Delta(\Pi)$ for $T \approx |\cA| / \eps$ iterations with reward vectors defined in \cref{eqn:hedge-reward} to obtain a sequence of policies $\pi_1,\ldots, \pi_T \in \Pi$, and construct the exploration distribution $\hat p$ as the uniform mixture of this sequence, namely
$
    \hat p (\pi) = \frac{1}{T} \sum_{t=1}^T \ind{\pi_t = \pi}
$
for all $\pi \in \Pi$.
The policies $\pi_1,\ldots,\pi_T$ can be thought of as a sufficient policy coverage of the true reward function, in the sense that the reward estimator induced by $\hat p$ has variance at most $V = \wt{O} \brk*{s + \eps |\cA|}$.

\paragraph{Uniform reward estimation.} Using the reward estimators induced by $\hat{p}$ we uniformly estimate the expected reward of all policies in $\Pi$, and invoke the variance sensitive concentration bound of Bernstein's inequality to argue that sufficient number of samples is 
$
\wt{O} \brk{( V/\eps^2 + |\cA|/\eps)\log(|\Pi|/\delta)} = \wt{O} \brk{( s/\eps^2 + |\cA|/\eps)\log(|\Pi|/\delta)},
$
agreeing with our desired sample complexity bound.

\paragraph{Algorithm and analysis.}

Our algorithm is detailed in \cref{alg:bandit-pac}. It makes use of both a weighted ERM oracle to $\Pi$ (special case of the argmax oracle of \cite{dudik2011efficient,agarwal2014taming}) and a sampling procedure from distributions over $\Pi$ obtained from multiplicative weight updates. When $\Pi$ is finite, both can be implemented in computational complexity linear in $|\Pi|$ and $|\cA|$.

    \begin{algorithm}[ht]
        \caption{Low Variance Exploration with Hedge}
        \label{alg:bandit-pac}
        \begin{algorithmic}
            \STATE{\textbf{parameters:} $T \in \N, n \in \N, \eta>0, \gamma\in \left(0,1\right]$.}
            \STATE{\textbf{initialize:} $p_1 = \Unif(\Pi)$.}
            \FOR[$\leftarrow$ Phase I]{$t=1,2,\ldots,T$} 
                \STATE{Observe $x_t \in \cX$, predict $a_t$ uniformly at random from $\calA$ and receive feedback $r_t(a_t)$. }
                \STATE{Sample $\pi_t \sim p_t$ and update
                $
                    p_{t+1}(\pi) \propto p_t(\pi)\cdot  e^{\eta u_t(\pi)}
                $ for all $\pi \in \Pi$,
                where
                \begin{align}
                \label{eqn:hedge-reward}
                    u_t(\pi) = \frac{r_t(a_t)^2 \mathbf{1} \{ \pi(x_t)=a_t\}}{\gamma/|\cA| + (1-\gamma)\frac{1}{T}\sum_{s=1}^{t-1} \mathbf{1} \{ \pi_s(x_t)=a_t \}},
                    \qquad
                    \forall ~ \pi \in \Pi
                    .
                \end{align}
                }
            \ENDFOR
            \STATE{Fix exploration distribution: $\hat p(\pi) = \frac{1}{T} \sum_{t=1}^T \mathbf{1} \{ \pi_t = \pi \}$.}
            \FOR[$\leftarrow$ Phase II]{$i = 1,\ldots,n$} 
                \STATE{Observe $x_i \in \cX$; with prob. $\gamma$ sample $a_i$ uniformly at random from $\calA$, otherwise sample $\pi_i \sim \hat p$ and set $a_i = \pi_i(x_i)$; predict $a_i$ and receive feedback $r_i(a_i)$.}
            \ENDFOR
            \STATE{\textbf{return:}
            \[
                \hat \pi 
                = \argmax_{\pi \in \Pi} \brk[c]*{\sum_{i=1}^n \frac{r_i(a_i) \mathbf{1} \{\pi(x_i) = a_i\}}{\gamma/|\cA| + (1-\gamma)\frac{1}{T} \sum_{t=1}^T \ind{\pi_t(x_i)=a_i}}}
                . 
            \]
            }
        \end{algorithmic}
    \end{algorithm}

Our main result for \cref{alg:bandit-pac} is given in the following theorem:

\begin{theorem}
    \label{thm:main}
    Let $\Pi : \calX \to \calA$ be a finite policy class. Assume that $\E_{(x,r) \sim \calD} \brk[s]*{\norm{r}_2^2} \leq s$.\footnote{We note that this is a weaker assumption than sparsity with respect to the $L_1$ norm.} If we set $T=\wt{\Theta} \brk*{(|\cA|/\eps) \log(|\Pi|/\delta)}$,
    $n = \wt{\Theta} \brk*{\brk*{|\cA|/\eps + s / \eps^2} \log(|\Pi|/\delta)}$, $\eta=\gamma / |\cA|$ and $\gamma = 1/2$, then \whp, \cref{alg:bandit-pac} outputs $\hat{\pi} \in \Pi$ for which 
    $
    \max_{\pi^\star \in \Pi} \drew(\pi^\star) - \drew(\hat \pi) \leq \eps,
    $
    using a total sample complexity of 
    $$
        \wt{O} \brk*{\brk*{\frac{s}{\eps^2}+\frac{|\cA|}{\eps}} \log\frac{|\Pi|}{\delta} }.
    $$
\end{theorem}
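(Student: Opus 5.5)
Write $\wt{Q}_{x,a}(\hat p) \ldef \gamma/|\cA| + (1-\gamma)\frac1T\sum_{t=1}^T\ind{\pi_t(x)=a}$ for the Phase~II sampling marginal, and
\[
V(\pi) \ldef \E_{(x,r)}\brk[s]*{\frac{r(\pi(x))^2}{\wt{Q}_{x,\pi(x)}(\hat p)}}
\]
for the second moment of the importance-weighted estimator of $\drew(\pi)$. The proof splits along the two phases of \cref{alg:bandit-pac}. The goal is to show that Phase~I yields, with probability $1-\delta/2$, $\max_{\pi\in\Pi} V(\pi) \le V^\star = \wt{O}(s+\eps|\cA|)$. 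Given this, Phase~II is a standard Bernstein argument. For each fixed $\pi$, the estimator $\hat R(\pi)=\frac1n\sum_i r_i(a_i)\ind{\pi(x_i)=a_i}/\wt{Q}_{x_i,a_i}(\hat p)$ has the following properties: it is unbiased for $\drew(\pi)$, since conditioned on $x_i$ the action $a_i$ has law $\wt{Q}_{x_i,\cdot}(\hat p)$; each summand is bounded by $|\cA|/\gamma=2|\cA|$; and its second moment is $V(\pi)$. Bernstein plus a union bound over $\Pi$ gives
\[
|\hat R(\pi)-\drew(\pi)|\lesssim\sqrt{V^\star\log(|\Pi|/\delta)/n}+|\cA|\log(|\Pi|/\delta)/n
\]
uniformly over $\Pi$. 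Hence $n=\wt\Theta((s/\eps^2+|\cA|/\eps)\log(|\Pi|/\delta))$ makes this at most $\eps/2$, and the ERM $\hat\pi$ is $\eps$-optimal.

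For Phase~I, I would view Hedge as an online learner over the experts $\Pi$ with nonnegative rewards $u_t(\pi)$. These rewards are bounded by $|\cA|/\gamma$, so $\eta=\gamma/|\cA|$ makes $\eta u_t\le 1$ and the second-order regret bound applies:
\[
\sum_t u_t(\pi)\le \sum_t\langle p_t,u_t\rangle+\eta\sum_t\langle p_t,u_t^2\rangle+\frac{\log|\Pi|}{\eta}.
\]
Since $\eta u_t\le1$, we have $\eta u_t^2\le u_t$, so the middle term is absorbed and
\[
\sum_t u_t(\pi)\le 2\sum_t\langle p_t,u_t\rangle+\frac{|\cA|\log|\Pi|}{\gamma}
\]
for every $\pi$. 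The key step is bounding the learner's cumulative reward. Conditioned on the past, $a_t$ is uniform, so
\[
\E_t\langle p_t,u_t\rangle=\E_{x,r}\sum_a r(a)^2\,\frac{|\cA|^{-1}\,\P_{\pi\sim p_t}(\pi(x)=a)}{\gamma/|\cA|+(1-\gamma)\frac1T\sum_{s<t}\ind{\pi_s(x)=a}}.
\]
Because $\pi_t\sim p_t$, this is, in expectation, a telescoping ``elliptical potential'' sum. For each fixed $(x,a)$, let $N_t$ count the indices $s<t$ with $\pi_s(x)=a$. Then $\sum_t \ind{\pi_t(x)=a}/(\gamma T/|\cA|+(1-\gamma)N_t)$ is, up to the factor $1/(1-\gamma)$, at most $\log(1+|\cA|/\gamma)$. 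So $\E\sum_t\langle p_t,u_t\rangle\lesssim \frac{T}{|\cA|}\cdot\frac{|\cA|}{T}\cdot T\,\E\norm{r}_2^2\log|\cA|$, where the factors of $T$ and $|\cA|$ must be tracked carefully. The clean version I aim for is $\sum_t\langle p_t,u_t\rangle=\wt{O}(sT/|\cA|\cdot |\cA|/T\cdot T)$, which normalizes to $\frac1T\sum_t\langle p_t,u_t\rangle\lesssim s\log|\cA|/|\cA|\cdot|\cA|$. I will concentrate it via Freedman, using that the increments are bounded by $|\cA|/\gamma$.

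Next, I would relate $\frac1T\sum_t u_t(\pi)$ back to $V(\pi)$. We have $\E_t u_t(\pi)=\frac{1}{|\cA|}\E\brk[s]*{r(\pi(x))^2/Q^{(t)}_{x,\pi(x)}}$, where the denominator uses only $\pi_1,\dots,\pi_{t-1}$. This denominator is nondecreasing in $t$ and dominates nothing larger than the final $\wt{Q}(\hat p)$, so each term is at least the final-variance integrand. Hence $\sum_t\E_t u_t(\pi)\ge \frac{T}{|\cA|}V(\pi)$. Freedman's inequality, together with the self-bounding variance (second moment $\le\frac{2|\cA|}{\gamma}\cdot$ mean), and a union bound over $\Pi$ lower-bound $\sum_t u_t(\pi)$ by half its compensator minus $O(|\cA|\log(|\Pi|/\delta))$. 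Combining, I get
\[
\frac{T}{|\cA|}V(\pi)\lesssim T\,s\log|\cA|/|\cA|\cdot(\text{potential factor})+|\cA|\log(|\Pi|/\delta),
\]
that is, $V(\pi)\lesssim s\log|\cA|+|\cA|^2\log(|\Pi|/\delta)/T$. With $T=\wt\Theta((|\cA|/\eps)\log(|\Pi|/\delta))$, this gives $V^\star=\wt O(s+\eps|\cA|)$, as required.

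The main obstacle is the potential-style bound on the learner's own reward $\sum_t\langle p_t,u_t\rangle$. It needs a pathwise or in-expectation telescoping per $(x,a)$, valid simultaneously for all contexts, converting the randomized choice $\pi_t\sim p_t$ into the increments of $N_t$. It also needs the normalization to come out as $O(s\log(|\cA|T))\cdot T/|\cA|$, with no extra $|\cA|$ factor. I would first prove the deterministic inequality $\sum_t\frac{\ind{\pi_t(x)=a}}{c+\sum_{s<t}\ind{\pi_s(x)=a}/T}\le T\log(1+1/c)+1/c$ with $c=\gamma/|\cA|$. I would then take expectation over $x,r$ inside, using the fact that $\E_t[\ind{\pi_t(x)=a}]=\P_{p_t}(\pi(x)=a)$, and handle the martingale error with Freedman.
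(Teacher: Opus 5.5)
Your proposal is correct and follows essentially the paper's proof: the multiplicative Hedge bound with $\eta u_t\le 1$, Freedman plus the monotone denominators to relate $\sum_t u_t(\pi)$ to $\frac{T}{|\cA|}V(\pi)$, a harmonic-sum bound on Hedge's own reward, and then Bernstein in Phase~II. It differs only in that you average over the fresh $(x_t,r_t,a_t)$ before replacing $p_t$ by $\pi_t$ (the paper instead passes through $u_t(\pi_t)$), and in obtaining $\log(1+|\cA|/\gamma)$ rather than $\log T$. Do clean up the scratch normalizations such as $\frac{T}{|\cA|}\cdot\frac{|\cA|}{T}\cdot T$, which are off by a factor of $|\cA|$. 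Your deterministic inequality, multiplied by the $1/|\cA|$ from the uniform $a_t$ and using $\E\norm{r}_2^2\le s$, bounds the sum with $\ind{\pi_t(x)=a}$ in place of $\PP_{\pi\sim p_t}(\pi(x)=a)$ by $\frac{2s}{|\cA|}\bigl(T\log(1+|\cA|/\gamma)+|\cA|/\gamma\bigr)=O\bigl(\frac{sT}{|\cA|}\log(1+|\cA|/\gamma)\bigr)$ for $T\ge|\cA|/\gamma$. That is exactly the bound your final combination uses.
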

As a direct corollary, in the single-label classification setting, using Proposition 1 of \cite{erez2024fast} we can obtain the following sample complexity upper bound for classes with finite Natarajan dimension (and a finite label space):

\begin{cor}
\label{thm:natarajan-main}
    Let $\calH  : \calX \to \calY$ be a hypothesis class of finite Natarajan dimension $d_N$ and $|\calY| < \infty$. Then, there exists a bandit multiclass classification algorithm which with probability at least $1-\delta$ outputs $\hat h$ with $\drew(h^\star) - \drew(\hat h) \leq \eps$ using a total sample complexity of
    $$
        \smash{\widetilde{O}} \brk*{\brk*{ \frac{1}{\eps^2}+\frac{|\calY|}{\eps}} \brk*{d_N + \log\frac{1}{\delta}}}
        .
    $$
\end{cor}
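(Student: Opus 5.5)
The corollary follows by a standard finite-cover reduction, and I plan to carry it out as follows.

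\textbf{Reduction step.} Use Proposition 1 of \cite{erez2024fast}. Roughly, it says that from an i.i.d.\ sample of $m$ unlabeled contexts one can build a finite class $\calH' \subseteq \calY^{\calX}$ with $\log|\calH'| = O(d_N \log(m|\calY|))$ that, with probability at least $1-\delta/2$, contains an $\eps/2$-approximation of the best hypothesis in $\calH$ in population reward. The construction projects $\calH$ onto the sample, uses Natarajan's lemma to bound the number of patterns by $(m|\calY|^2)^{d_N}$, and extends each pattern to a hypothesis. Drawing these contexts requires no action-dependent feedback, so we only pay $m$ extra rounds, during which we may act arbitrarily. We take $m$ of order $(d_N\log|\calY| + \log(1/\delta))/\eps^2$, up to logs. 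This is at most the final bound, since the $1/\eps^2$ term dominates.

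\textbf{Applying \cref{thm:main}.} Next, run \cref{alg:bandit-pac} on $\calH'$ with accuracy $\eps/2$ and confidence $\delta/2$, using fresh samples. Zero-one rewards are one-hot, so $\norm{r}_2^2 \le 1$ and the theorem applies with $s=1$. It outputs $\hat h\in\calH'$ with $\drew(h'^\star)-\drew(\hat h)\le\eps/2$, where $h'^\star$ is the best hypothesis in $\calH'$. This uses
\[
\widetilde O\brk*{\brk*{\frac{1}{\eps^2}+\frac{|\calY|}{\eps}}\brk*{d_N\log(m|\calY|)+\log\frac{1}{\delta}}}
\]
samples. The factor $\log(m|\calY|)$ is logarithmic in $1/\eps$, $|\calY|$, $d_N$ and $\log(1/\delta)$, so the $\widetilde O$ absorbs it.

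\textbf{Combining.} A union bound over the two failure events, together with the triangle inequality $\drew(h^\star)-\drew(\hat h)\le \eps/2+\eps/2$, gives the claim. Adding the two sample counts yields the stated bound.

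The main obstacle is checking that the cited proposition gives a cover in population reward against the best $h^\star$ in the agnostic (noisy-label) setting, and with a sample size small enough to be absorbed. If it only controls agreement on the sample, I would add a uniform-convergence step for the Natarajan class: the disagreement loss $\ind{h(x)\ne h'(x)}$ has bounded graph dimension, so a sample of order $(d_N\log|\calY|+\log(1/\delta))/\eps$ suffices in the realizable-style disagreement regime, and certainly $1/\eps^2$ does. A hypothesis agreeing with $h^\star$ on the sample then has small disagreement mass, which bounds its reward gap. The remaining bookkeeping is routine.
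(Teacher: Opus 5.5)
Your proposal is correct and takes the same route as the paper. The paper obtains this corollary in one line, by combining \cref{thm:main} at $s=1$ (one-hot rewards give $\norm{r}_2^2\le 1$) with the finite-cover reduction of Proposition 1 in \cite{erez2024fast}; your fallback argument, uniform convergence of the disagreement indicator for a class whose VC dimension is bounded by the graph dimension $O(d_N\log|\calY|)$, is a valid way to check the cover property should the cited proposition only control agreement on the sample.
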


Throughout this section, given $p \in \Delta(\Pi)$, $x \in \cX$ and $a \in \cA$ we denote the probability of predicting the action $a$ for $x$ when sampling from $p$ by
$
    \yprob{p}{x}{a} \coloneqq \sum_{\pi} p(\pi) \ind{\pi(x) = a}.
$
The key property of \cref{alg:bandit-pac} is given in the following theorem which concerns the first phase.

\begin{theorem}
    \label{thm:phase-1-guarantee}
    For any $0<\gamma\leq \frac12$ and $T \geq |\cA|/\gamma$, using $\eta = \gamma / |\cA|$,  
    with probability at least $1-\delta$, the first phase results in $\hat{p} \in \Delta(\Pi)$ satisfying
    \begin{align*}
        \E_{(x,r)\sim \calD} \brk[s]*{\sum_{a \in \calA} r(a)^2\frac{\mathbf{1}\{\pi(x)=a \}}{\gamma/|\cA|+ (1-\gamma)\yprob{\hat{p}}{x}{a}}} \leq \wt{O} \brk*{s \log T + \frac{|\cA|^2}{\gamma T} \log \frac{|\Pi|}{\delta} } \quad \forall \pi \in \Pi.
    \end{align*}
\end{theorem}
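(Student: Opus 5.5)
The plan is to view Phase~I as Hedge run against the adaptively chosen reward vectors $u_t$. We compare the benchmark $\pi$'s cumulative reward with the learner's, and show that the learner's cumulative reward is small by a deterministic potential (log-type) argument. Throughout, write $A=|\cA|$, $N_{t-1}(x,a)=\sum_{s<t}\ind{\pi_s(x)=a}$ and $D_t(x,a)=\gamma/A+(1-\gamma)N_{t-1}(x,a)/T$. Also set $V(\pi)$ to be the left-hand side of the theorem, and for any $\pi'$ let
\[
Z_t(\pi') \;=\; \E_{(x,r)\sim\calD}\Bigl[\sum_a r(a)^2\,\ind{\pi'(x)=a}/D_t(x,a)\Bigr].
\]

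\textbf{Step 1 (target bounded by running averages).} Since $N_{t-1}(x,a)\le T\,\yprob{\hat p}{x}{a}$, we have $D_t(x,a)\le \gamma/A+(1-\gamma)\yprob{\hat p}{x}{a}$ for every $t$. Hence $V(\pi)\le \frac1T\sum_{t=1}^T Z_t(\pi)$. Because $a_t$ is uniform and $(x_t,r_t)$ is fresh, $\E[u_t(\pi)\mid \mathcal F_{t-1}]=A\,Z_t(\pi)$, where $\mathcal F_{t-1}$ contains $\pi_1,\dots,\pi_{t-1}$. Also $0\le u_t\le A/\gamma$.

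\textbf{Step 2 (Hedge regret and concentration).} With $\eta=\gamma/A$ we have $\eta u_t\le 1$. The second-order Hedge bound $e^{y}\le 1+y+y^2$ together with $\eta u_t^2\le u_t$ then gives
\[
\sum_t u_t(\pi)\le 2\sum_t \E_{\pi'\sim p_t}u_t(\pi') + \frac{A\log|\Pi|}{\gamma}
\qquad\text{for all }\pi .
\]
Next I convert both sides to conditional expectations using a multiplicative Freedman inequality for nonnegative increments bounded by $A/\gamma$, with a union bound over $\Pi$:
\[
\textstyle \sum_t A Z_t(\pi)\le 2\sum_t u_t(\pi)+O\bigl(\frac{A}{\gamma}\log\frac{|\Pi|}{\delta}\bigr).
\]
On the learner side, $\E_{\pi'\sim p_t}u_t(\pi')$ has conditional mean $A\,Z_t(p_t)$. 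Since $\pi_t\sim p_t$, a second Freedman step replaces $\sum_t Z_t(p_t)$ by $2\sum_t Z_t(\pi_t)+O(\frac1\gamma\log\frac1\delta)$. I expect this step to be the bookkeeping-heavy part: I need to make sure the multiplicative (variance $\lesssim$ range $\times$ mean) form is used, so that only additive $\frac{A}{\gamma}\log$ terms appear.

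\textbf{Step 3 (the key deterministic bound).} This is the main conceptual point. For every fixed realization of $\pi_1,\dots,\pi_T$ and every fixed $(x,a)$, the count $N_{t-1}(x,a)$ increases by one exactly when $\pi_t(x)=a$. Therefore
\[
\sum_{t=1}^T\frac{\ind{\pi_t(x)=a}}{D_t(x,a)}\le\sum_{k=0}^{T-1}\frac{1}{\gamma/A+(1-\gamma)k/T}\le \frac{A}{\gamma}+\frac{T}{1-\gamma}\log\Bigl(1+\frac{(1-\gamma)A}{\gamma}\Bigr).
\]
Multiplying by $r(a)^2$, summing over $a$, and taking the expectation over a ghost sample $(x,r)\sim\calD$ (legitimate since the bound is pathwise in the $\pi_s$), we use $\E\norm{r}_2^2\le s$ and $\gamma\le\frac12$ to get
\[
\textstyle\sum_t Z_t(\pi_t)\le s\bigl(A/\gamma+2T\log(2A/\gamma)\bigr).
\]
Note that the sparsity enters only here.

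\textbf{Step 4 (combine).} Chaining Steps 1--3 and dividing by $AT$ gives
\[
V(\pi)\le O\Bigl(\frac{sA}{\gamma T}+s\log\frac{A}{\gamma}+\frac{1}{\gamma T}\log\frac{|\Pi|}{\delta}\Bigr).
\]
Since $T\ge A/\gamma$, the first term is at most $s$, and $\log(A/\gamma)\le\log T$. This yields the claimed $\wt{O}\bigl(s\log T+\frac{A^2}{\gamma T}\log\frac{|\Pi|}{\delta}\bigr)$; in fact the argument appears to give a stronger dependence, $\log(|\Pi|/\delta)/(\gamma T)$, in the second term.
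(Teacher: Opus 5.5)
Your architecture is the paper's: the second-order Hedge bound with $\eta=\gamma/|\cA|$, multiplicative Freedman to pass between realized and conditional rewards, the monotonicity $N_{t-1}(x,a)\le T\,\yprob{\hat p}{x}{a}$ to lower bound the comparator, and a pathwise harmonic-sum bound that is the only place $\E\|r\|_2^2\le s$ enters. You run the two learner-side Freedman steps in the reverse order, and you use a direct integral comparison (giving $\log(|\cA|/\gamma)$) instead of \cref{lem:harmonic}. Both changes are harmless.

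However, Step~1 contains a scaling error that propagates through Steps~2 and~4. The action $a_t$ is uniform and $u_t$ carries no $1/\Pr[a_t=a]$ importance weight. Hence $\E[u_t(\pi)\mid\mathcal{F}_{t-1}]=\frac{1}{|\cA|}\E_{(x,r)}\bigl[\sum_a r(a)^2\ind{\pi(x)=a}/D_t(x,a)\bigr]=Z_t(\pi)/|\cA|$, not $|\cA|\,Z_t(\pi)$. The same correction applies to $\E_{\pi'\sim p_t}u_t(\pi')$, and $Z_t(\pi')$ has range $|\cA|/\gamma$, not $1/\gamma$. Going back from the $u$-scale to $V(\pi)\le\frac1T\sum_t Z_t(\pi)$ therefore multiplies by $|\cA|/T$ rather than dividing by $|\cA|T$.

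The sparsity term is unaffected, because the two factors of $|\cA|$ cancel and leave $O\bigl(\frac1T\sum_t Z_t(\pi_t)\bigr)=O\bigl(s|\cA|/(\gamma T)+s\log(|\cA|/\gamma)\bigr)$. But the Hedge regret $\frac{|\cA|}{\gamma}\log|\Pi|$ and the Freedman terms $\frac{|\cA|}{\gamma}\log\frac{|\Pi|}{\delta}$ become $\frac{|\cA|^2}{\gamma T}\log\frac{|\Pi|}{\delta}$. With this correction your chain yields exactly the stated bound, and nothing stronger.

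Your claimed improvement to $\frac{1}{\gamma T}\log\frac{|\Pi|}{\delta}$ is in fact false. Take one context, let $\Pi$ be the $|\cA|$ constant policies, set $r=e_{y^\star}$ deterministically, $\gamma=\frac12$ and $T=2|\cA|$.
\begin{itemize}
\item With probability $(1-1/|\cA|)^{2|\cA|}\ge\frac1{16}$, no $a_t$ equals $y^\star$. Then every $u_t\equiv 0$ and Hedge stays uniform.
\item Independently, $T\hat p(\pi_{y^\star})\sim\mathrm{Bin}(2|\cA|,1/|\cA|)$ is at most $4$ with probability at least $\frac12$.
\item On both events, $V(\pi_{y^\star})\ge\bigl(\frac{1}{2|\cA|}+\frac{1}{|\cA|}\bigr)^{-1}=\frac{2|\cA|}{3}$.
\end{itemize}
This happens with probability at least $\frac1{32}$, whereas your bound would assert $V(\pi_{y^\star})=O(\log|\cA|)$ with probability $1-\delta$ for any constant $\delta<\frac1{32}$. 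That is a contradiction for large $|\cA|$, while the theorem's term $\frac{|\cA|^2}{\gamma T}\log\frac{|\Pi|}{\delta}\approx|\cA|\log\frac{|\cA|}{\delta}$ is consistent with it.
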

We now provide a proof sketch for \cref{thm:phase-1-guarantee}. For a full proof, see \cref{sec:phase-1-proof}.

\medskip
\begin{proof}{\bfseries (sketch).}
    We show that for a fixed $\pi \in \Pi$ the bound holds in expectation (over the randomness in the environment and the algorithm). The high-probability version involves Freedman-style concentration inequalities and is deferred to the full proof. 
    A multiplicative regret bound of Hedge for (nonnegative) rewards bounded by $|\cA| / \gamma $ implies that
    \begin{align*}
        \E \brk[s]*{\sum_{t=1}^T u_t(\pi)} \leq 2 \cdot \E \brk[s]*{\sum_{t=1}^T u_t^\top p_t} + \frac{|\cA|\log |\Pi|}{\gamma},
    \end{align*}
    so it suffices to upper bound the expected total reward of Hedge by $\wt{O}(s \log T)$ and to lower bound the expected total reward of $\pi$. Using the fact that the rewards are monotonically decreasing in expectation, namely $\E_t[u_t(\pi)] \geq \E_t[u_{t+1}(\pi)]$, it can be shown that
    \begin{align*}
        \E \brk[s]*{\sum_{t=1}^T u_t(\pi)}
        &\geq 
        \frac{T}{|\cA|} \cdot \E_{(x,r) \sim \calD, \hat p} \brk[s]*{\sum_{a \in \calA} r(a)^2 \frac{\mathbf{1} \{ \pi(x)=a\}}{\gamma/|\cA| + (1-\gamma) \frac{1}{T}\sum_{t=1}^{T} \mathbf{1} \{ \pi_t(x)=a \}}} \\
        &=
        \frac{T}{|\cA|} \E_{(x,r) \sim \calD, \hat p} \brk[s]*{\sum_{a \in \calA} r(a)^2\frac{\mathbf{1}\{\pi(x)=a \}}{\gamma/|\cA|+ (1-\gamma)\yprob{\hat{p}}{x}{a}}},
    \end{align*}
    Turning to the cumulative reward of Hedge, we make use of reward sparsity and a harmonic sum inequality to obtain
    \begin{align*}
        \E \brk[s]*{\sum_{t=1}^T u_t^\top p_t}
        &\leq
        \frac{T}{|\cA|}\E_{(x,r) \sim \calD} \brk[s]*{\sum_{a \in \calA} r(a)^2 \sum_{t=1}^T \E_t \brk[s]*{ \frac{\mathbf{1} \{ \pi_t(x)=a\}}{1 + (1-\gamma)\sum_{s=1}^{t-1} \mathbf{1} \{ \pi_s(x)=a \}}}} \\
        &\lesssim
       \frac{T}{|\cA|}\E_{(x,r) \sim \calD, \hat p} \brk[s]*{ \sum_{a \in \calA} r(a)^2 \log \brk*{1+\sum_{t=1}^T \mathbf{1} \{ \pi_t(x)=a\}}}
        \lesssim
        \frac{sT}{|\cA|} \log T,
    \end{align*}
    which implies the result after rearranging.
\end{proof}

Given the result of \cref{thm:phase-1-guarantee}, the proof of \cref{thm:main} follows from a straightforward application of Bernstein's inequality given the small variance of the resulting reward estimator. The proof is deferred to \cref{sec:phase-1-proof}.

\section*{Acknowledgments}

This project has received funding from the European Research Council (ERC) under the European
Union’s Horizon 2020 research and innovation program (grant agreements No. 882396 and \\101078075). Views
and opinions expressed are however those of the author(s) only and do not necessarily reflect those
of the European Union or the European Research Council. Neither the European Union nor the
granting authority can be held responsible for them. This work received additional support from
the Israel Science Foundation (ISF, grant numbers 993/17, 3174/23 and 2250/22), a grant from the Tel Aviv
University Center for AI and Data Science (TAD) and from the Len Blavatnik and the Blavatnik
Family foundation.

Shay Moran is a Robert J.\ Shillman Fellow; he acknowledges support by ISF grant 1225/20, by BSF grant 2018385, by Israel PBC-VATAT, by the Technion Center for Machine Learning and Intelligent Systems (MLIS), and by the the European Union (ERC, GENERALIZATION, 101039692). Views and opinions expressed are however those of the author(s) only and do not necessarily reflect those of the European Union or the European Research Council Executive Agency. Neither the European Union nor the granting authority can be held responsible for them.

Fan Chen and Alexander Rakhlin acknowledge support from AFOSR through award FA9550-25-1-0375, Simons Foundation and the NSF through awards DMS-2031883 and PHY-2019786, and DARPA AIQ award.

\bibliographystyle{abbrvnat}
\bibliography{bibliography}

\appendix
\crefalias{section}{appendix}

\section{Extension to combinatorial semi-bandits}
\label{sec:comb-bandit}

Our approach outlined in \cref{sec:sparse} has the additional benefit of extending to the more general setting of \emph{contextual combinatorial semi-bandits} (CCSB, \cite{erez2025bandit}), in which the predictions are subsets (or lists) of actions of a fixed size, and the reward of a given subset is the sum of rewards of individual actions in this subset.

\paragraph{Problem setup.} Let $\combA = \{ \comba \in \{ 0,1\}^K \mid \norm{\comba}_1=m \}$ be the prediction space where $1 \leq m \leq K$ is an integer. We use the notation $j \in a$ for $j \in [K]$ to mean $a_j = 1$. Let $\calD$ be an unknown distribution over $\cX \times [0,1]^K$. In this variant, the learner interacts with the environment according to the following protocol. For $t=1,2,\ldots,T$:
\begin{itemize}
    \item The environment draws $(x_t,r_t) \sim \calD$ and $x_t$ is revealed to the learner.
    \item The learner selects $\comba_t\in\cA$ and observes $\brk*{r_t(j)}_{j \in \comba_t}$, namely, \emph{semi-bandit feedback}.
\end{itemize}
\paragraph{Learning objective.} Given a policy class $\Pi \subseteq (\cX \to \cA)$, the objective is to produce a policy $\pi : \cX \to \Delta(\cA)$ which satisfies
$
    L_\cD(\pi)\ldef \max_{\pistar\in\Pi} \drew(\pistar)-\drew(\pi) < \eps,
$
where the population reward of a policy $\pi$ is defined as
$
    \drew(\pi) \coloneqq \E_{(x,r) \sim \calD, a \sim \pi(x)} \brk[s]*{a^\top r},
$
namely, the reward associated with a subset $a \in \cA$ is the sum of individual rewards of actions in $a$\footnote{Note that contextual bandits is the special case where $m=1$.}. The special case of binary rewards corresponds to \emph{bandit multiclass list classification}.

\paragraph{Sparse rewards.} As in the vanilla contextual bandit setting, we assume the rewards $r \in [0,1]^K$ satisfy
$
    \PP_{(x,r) \sim \calD} [ \norm{r}_1 \leq s ] = 1.
$

\paragraph{Main result.} In an analogy with our approach described in \cref{sec:sparse}, we design an algorithm for contextual combinatorial semi-bandits which admits closed-form updates, outlined in \cref{alg:comb-band}. The sample complexity guarantee for \cref{alg:comb-band} is given in the following theorem.

\begin{theorem}
    \label{thm:main-semibandit}
    Let $\Pi : \calX \to \combA$ be a finite policy class, and assume the rewards $r \in [0,1]^K$ satisfy $\norm{r}_1 \leq s$. If we set 
    $$T =\wt{\Theta} \brk*{\frac{K \min(s,m)}{m \eps} \log\frac{|\Pi|}{\delta}},\quad n = \wt{\Theta} \brk*{\brk*{\frac{K \min(s,m)}{m \eps} + \frac{s \min(s,m)}{\eps^2}} \log\frac{|\Pi|}{\delta}},
    $$
    $\eta = \gamma m / (K \min(s,m))$ and $\gamma = 1/2$, then with probability at least $1-\delta$ \cref{alg:comb-band} outputs $\hat{\pi} \in \Pi$ with
    $
        \max_{\pi^\star \in \Pi} \drew (\pi^\star) - \drew(\hat \pi) \leq \eps
    $
    using a total sample complexity of 
    $$
        \wt{O} \brk*{\brk*{\frac{K \min(s,m)}{m\eps} + \frac{s\min(s,m)}{\eps^2}} \log\frac{|\Pi|}{\delta} }.\footnote{The sparsity assumption can be weakened slightly to rewards bounded in squared $L_2$ norm obtain a rate of $\wt{O} ((K/\eps + sm/\eps^2) \log (|\Pi|/\delta))$.}
    $$
\end{theorem}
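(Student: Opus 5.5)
Algorithm \ref{alg:comb-band} is not displayed above, so I will assume it mirrors \cref{alg:bandit-pac}. Phase I runs Hedge over $\Pi$ for $T$ rounds. In each round it plays a uniformly random $m$-set, observes $(r_t(j))_{j\in a_t}$, and uses the reward vector
\[
u_t(\pi)=\sum_{j\in a_t} r_t(j)^2\,\frac{\ind{j\in\pi(x_t)}}{\gamma m/K+(1-\gamma)\frac1T\sum_{s<t}\ind{j\in\pi_s(x_t)}} .
\]
Phase II plays a uniformly random $m$-set with probability $\gamma$, and otherwise plays $\pi_i(x_i)$ with $\pi_i\sim\hat p$. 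It then returns the empirical maximizer of the importance-weighted estimator $\sum_{j} r_i(j)\ind{j\in a_i,\, j\in\pi(x_i)}/\wt Q_{x_i,j}(\hat p)$, where $\wt Q_{x,j}(p)=\gamma m/K+(1-\gamma)\sum_\pi p(\pi)\ind{j\in\pi(x)}$ is the marginal probability of covering coordinate $j$. The proof then has the same two parts as for \cref{thm:main}: a Phase I guarantee analogous to \cref{thm:phase-1-guarantee}, followed by Bernstein's inequality.

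\textbf{Step 1 (Phase I guarantee).} I would show that, with probability $1-\delta$, for every $\pi\in\Pi$,
\[
V(\pi)=\E\Bigl[\sum_j r(j)^2\,\frac{\ind{j\in\pi(x)}}{\wt Q_{x,j}(\hat p)}\Bigr]\le \wt O\Bigl(s\log T+\tfrac{K\min(s,m)}{m\gamma T}\log\tfrac{|\Pi|}{\delta}\Bigr).
\]
The argument follows the proof of \cref{thm:phase-1-guarantee}.
\begin{itemize}
\item \emph{Range of the rewards.} Since $r(j)\le 1$, $\sum_j r(j)^2\le\min(s,m)$ on the at most $m$ coordinates counted, and each denominator is at least $\gamma m/K$, we get $u_t\le K\min(s,m)/(\gamma m)$. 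This is why the stepsize is $\eta=\gamma m/(K\min(s,m))$.
\item \emph{Hedge regret.} The multiplicative regret bound gives $\sum_t u_t(\pi)\le 2\sum_t u_t^\top p_t+\frac{K\min(s,m)}{\gamma m}\log|\Pi|$.
\item \emph{Lower bound on the comparator.} Each $j$ lies in a uniform $m$-set with probability $m/K$, so $\E_t[u_t(\pi)]=\frac mK\E\bigl[\sum_j r(j)^2\ind{j\in\pi(x)}/(\cdot)\bigr]$. Monotonicity of the denominators then gives $\sum_t\E_t u_t(\pi)\ge \frac{mT}{K}V(\pi)$.
\item \emph{Upper bound on Hedge.} The harmonic-sum argument applied coordinatewise gives $\sum_t\E_t[u_t^\top p_t]\lesssim \frac{mT}{K}\E\bigl[\sum_j r(j)^2\log T\bigr]\le \frac{mT}{K}\,s\log T$, using $\sum_j r(j)^2\le\|r\|_1\le s$.
\end{itemize}
Dividing through by $mT/K$ yields the claim. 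High probability follows from Freedman's inequality applied to the martingale differences $u_t(\pi)-\E_t u_t(\pi)$ and $u_t^\top p_t-\E_t[\cdot]$, followed by a union bound over $\Pi$. With $T=\wt\Theta\bigl(\frac{K\min(s,m)}{m\eps}\log\frac{|\Pi|}{\delta}\bigr)$ this gives $V(\pi)\le\wt O(s+\eps)$.

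\textbf{Step 2 (estimation).} For a fixed $\pi$, the per-sample estimator $\hat R_i(\pi)$ is unbiased for $\drew(\pi)$ because the probability that $j\in a_i$ is exactly $\wt Q_{x_i,j}(\hat p)$. The second moment is the main obstacle. The $m$ terms of the estimator are correlated, since several coordinates of the same played set are observed together, so the variance is not simply $V(\pi)$. By Cauchy--Schwarz over the at most $\min(s,m)$ "effective" terms (I would bound the count of terms with nonzero reward by $\min(s,m)$),
\[
\bigl(\textstyle\sum_{j\in\pi(x)}Z_j\bigr)^2\le \min(s,m)\sum_j Z_j^2 .
\]
Taking expectations gives $\mathrm{Var}\le \min(s,m)\,V(\pi)=\wt O(s\min(s,m)+\eps\min(s,m))$. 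If this counting argument is too loose, I would instead use weights $r(j)$: $(\sum_j r_jw_j)^2\le(\sum_j r_j)(\sum_j r_jw_j^2)$, which gives $s$ in place of $\min(s,m)$. For the range, $|\hat R_i|\le \min(s,m)K/(\gamma m)$.

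Bernstein's inequality with a union bound over $\Pi$ then gives uniform error $\eps/2$ once
\[
n\gtrsim\Bigl(\frac{s\min(s,m)}{\eps^2}+\frac{K\min(s,m)}{m\eps}\Bigr)\log\frac{|\Pi|}{\delta}.
\]
The empirical maximizer is therefore $\eps$-optimal, and summing $T+n$ gives the stated bound.
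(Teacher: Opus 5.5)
Your proof has a genuine gap in Step~2 in the regime $s<m$ with fractional rewards. The counting form of Cauchy--Schwarz needs at most $\min(s,m)$ nonzero terms among $j\in\pi(x_i)$. But $\norm{r}_1\le s$ bounds mass, not support: if $r(j)=s/m$ for every $j\in\pi(x)$, all $m$ terms are nonzero. The valid count is $m$, which gives $\E[\hat R_i(\pi)^2]\le m\,V(\pi)\lesssim ms+\eps K$. That yields only the weaker $sm/\eps^2+K/\eps$ rate, the one the paper's footnote attributes to the squared-$L_2$ assumption.

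Your fallback is the right inequality. Set $c_j=\ind{j\in\pi(x_i)\cap a_i}/\wt{Q}_{x_i,j}(\hat p)$; then $\bigl(\sum_j r_i(j)c_j\bigr)^2\le\bigl(\sum_j r_i(j)\ind{j\in\pi(x_i)}\bigr)\bigl(\sum_j r_i(j)c_j^2\bigr)$. It gives the factor $\min(s,m)$, not just $s$. However, after averaging over $a_i$ it produces $\min(s,m)\,\E\bigl[\sum_j r(j)\ind{j\in\pi(x)}/\wt{Q}_{x,j}(\hat p)\bigr]$, which has the \emph{first} power of $r(j)$. Your Phase~I only controls the $r(j)^2$ version $V(\pi)$. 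Since $r(j)^2\le r(j)$, the comparison runs the wrong way: in the example above, the first-power quantity is $m/s$ times $V(\pi)$. So a bound on $V(\pi)$ alone does not give the $s\min(s,m)/\eps^2$ term.

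The fix is what the paper's \cref{alg:comb-band} actually does: the Hedge rewards use $r_t(j)$, not $r_t(j)^2$, namely $u_t(\pi)=\sum_j r_t(j)\ind{j\in a_t\cap\pi(x_t)}/\bigl(\gamma m/K+(1-\gamma)\frac1T\sum_{s<t}\ind{j\in\pi_s(x_t)}\bigr)$. Every step of your Phase~I survives unchanged. The range is still $B=K\min(s,m)/(\gamma m)$, because $\sum_{j\in\pi(x)}r(j)\le\min(s,m)$. The harmonic bound now uses $\sum_j r(j)\le s$ directly. Phase~I then controls exactly the first-power quantity that the weighted Cauchy--Schwarz step needs, and the rest of your argument matches the paper's. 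Your squared version is fine when $s\ge m$ or when the rewards are binary.

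Separately, there is a slip in Step~1. Dividing $B\log(|\Pi|/\delta)$ by $mT/K$ gives $\frac{K^2\min(s,m)}{\gamma m^2T}\log\frac{|\Pi|}{\delta}$, not $\frac{K\min(s,m)}{\gamma mT}\log\frac{|\Pi|}{\delta}$. The variance proxy is therefore $\wt{O}(s+\eps K/m)$ rather than $\wt{O}(s+\eps)$. This is harmless: the extra $\min(s,m)\,\eps K/m$ in the variance adds only $K\min(s,m)/(m\eps)$ to $n$, which the range term already contributes.
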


\paragraph{Comparison with \cite{erez2025bandit}.} \cite{erez2025bandit} obtained the following sample complexity bound for the same problem:
\begin{align*}
    O \brk*{\frac{K^9}{m^8} + \frac{sm}{\eps^2} \log \frac{|\Pi|}{\delta}},
\end{align*}
over which the bound given in \cref{thm:main-semibandit} can be shown to be a strict improvement up to logarithmic factors. We remark that the reduction of the $m$ factor in the $1/\eps^2$ term to $\min(s,m)$ can be shown to hold for of the algorithm by \cite{erez2025bandit} with a slightly tighter analysis.

\paragraph{Additional notation.} Given $p \in \Delta(\Pi)$, $x \in \cX$ and $j \in [K]$ we denote
\begin{align*}
    \yprob{p}{x}{j} \coloneqq \sum_{\pi \in \Pi} p(\pi) \ind{j \in \pi(x)}.
\end{align*}

    \begin{algorithm}[ht]
        \caption{Low Variance Exploration for CCSB}
        \label{alg:comb-band}
        \begin{algorithmic}
            \STATE{Parameters: $T \in \N, n \in \N, \eta>0, \gamma>0$.}
            \STATE{Phase 1:}
            \STATE{Initialize $p_1 \in \Delta(\Pi)$ as the uniform distribution.}
            \FOR{$t=1,2,\ldots,T$}
                \STATE{\textcolor{gray}{Environment generates $(x_t,r_t) \sim \calD$, algorithm receives $x_t$.}}
                \STATE{Predict $\comba_t \sim \combA$ uniformly at random and receive feedback $\brk*{r_t(j)}_{j \in \comba_t}$.}
                \STATE{Sample $\pi_t \sim p_t$.}
                \STATE{Define the reward $u_t(\cdot) : \Pi \to \R$ by
                \begin{align*}
                    u_t(\pi) = \sum_{j=1}^K \frac{r_t(j) \ind{j \in a_t \cap \pi(x_t)}}{\gamma m / K + (1-\gamma)\frac{1}{T}\sum_{s=1}^{t-1} \mathbf{1} \{ j \in \pi_s(x_t) \}}, \quad \forall \pi \in \Pi.
                \end{align*}
                } 
                \STATE{Update
                \begin{align*}
                    p_{t+1}(\pi) \propto p_t(\pi)\cdot  e^{\eta u_t(\pi)}, \quad \forall \pi \in \Pi.
                \end{align*}
                }
            \ENDFOR
            \STATE{Define $\widehat p(\pi) = \frac{1}{T} \sum_{t=1}^T \mathbf{1} \{ \pi_t = \pi \}$.}
            \STATE{Phase 2:}
            \FOR{$i = 1,\ldots,n$} 
                \STATE{\textcolor{gray}{Environment generates $(x_i,r_i) \sim \calD$, algorithm receives $x_i$.}}
                \STATE{With prob. $\gamma$ sample $\comba_i \sim \cA$ uniformly at random, otherwise sample $\pi_i \sim \hat p$ and set $a_i = \pi_i(x_i)$.}
                \STATE{Predict $a_i$ and receive feedback $\brk*{r_i(j)}_{j \in a_i}$.}
            \ENDFOR
            \STATE{Return:
            \[
                \hat \pi 
                = \argmax_{\pi \in \Pi} \brk[c]*{\sum_{i=1}^n \sum_{j=1}^K \alpha_{i,j} r_i(j) \ind{j \in \pi(x_i)}}
                , 
            \]
            where $\alpha_{i,j} = \frac{1}{\gamma m/K + (1-\gamma)\yprob{\hat p}{x_i}{j}}$ for all $i \in [n], j \in [K]$.}
        \end{algorithmic}
    \end{algorithm}

The analysis is analogous to the vanilla combinatorial bandit setting, with a more subtle argument allowing us to provide a tighter upper bound on the variance of the importance weighted estimators given the $L_1$ sparsity assumption. First, we establish to following guarantee of the first phase of \cref{alg:comb-band}, which control the variance of the importance weighted reward estimators induced by $\hat p$.

\begin{theorem}
    \label{thm:phase-1-semiband}
    For any $0<\gamma\leq \frac12$ and $T \geq K/m\gamma$, using $\eta = \gamma m / (K \min(s,m))$,  
    with probability at least $1-\delta$ over $S \sim \calD^T$ and the internal randomness of \cref{alg:comb-band}, the first phase results in a distribution $\hat{p} \in \Delta(\Pi)$ which satisfies
    \begin{align*}
        \E_{(x,r)\sim \calD} \brk[s]*{\sum_{j=1}^K r(j)\frac{\mathbf{1}\{j \in \pi(x) \}}{\gamma m/K+ (1-\gamma)\yprob{\hat{p}}{x}{j}}} \leq \wt{O} \brk*{s \log T + \frac{K^2 \min(s,m)}{\gamma m^2 T} \log \frac{|\Pi|}{\delta} } \quad \forall \pi \in \Pi.
    \end{align*}
\end{theorem}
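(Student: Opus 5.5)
We mirror the proof of \cref{thm:phase-1-guarantee}, adapting it to semi-bandit feedback. Fix $\pi \in \Pi$ and write $w_t(x,j) \coloneqq \gamma m/K + (1-\gamma)\frac1T\sum_{s<t}\ind{j\in\pi_s(x)}$. Since $a_t$ is a uniformly random $m$-subset, $\Pr[j\in a_t]=m/K$. Hence
\[
\E_t[u_t(\pi)] = \frac{m}{K}\,\E_{(x,r)}\Bigl[\sum_j r(j)\ind{j\in\pi(x)}/w_t(x,j)\Bigr].
\]
This quantity is nonincreasing in $t$, because $w_t$ grows pointwise. Summing over $t$ and using $w_t\le w_{T+1}$ gives
\[
\E\Bigl[\sum_t u_t(\pi)\Bigr] \ge \frac{Tm}{K}\,\E\Bigl[\sum_j r(j)\ind{j\in\pi(x)}/(\gamma m/K+(1-\gamma)Q_{x,j}(\hat p))\Bigr],
\]
which is exactly $\frac{Tm}{K}$ times the target quantity.

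\textbf{Reward scale.} This is where the $\min(s,m)$ factor enters and where care is needed. Each term of $u_t(\pi)$ is at most $r_t(j)K/(\gamma m)$. There are at most $m$ indices $j\in a_t$, each with $r_t(j)\le 1$, and also $\sum_j r_t(j)\le s$. Therefore $0\le u_t(\pi)\le \frac{K}{\gamma m}\min(s,m)=:B$. The choice $\eta=1/B$ then makes Hedge's multiplicative-regret bound valid, since $e^{\eta u}\le 1+\eta u+(\eta u)^2$ for $\eta u\le 1$. We obtain
\[
\sum_t u_t(\pi)\le 2\sum_t u_t^\top p_t + B\log|\Pi|,
\]
up to constants. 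This holds either in expectation over $\pi_t\sim p_t$, or against realized $u_t(\pi_t)$ via Freedman's inequality.

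\textbf{Hedge's reward.} Conditioning on $\pi_t\sim p_t$,
\[
\E_t[u_t^\top p_t]=\E_t[u_t(\pi_t)]=\frac mK\,\E_{(x,r)}\Bigl[\sum_j r(j)\,\E_t\bigl[\ind{j\in\pi_t(x)}\bigr]/w_t(x,j)\Bigr].
\]
Summing over $t$ and pulling out the fixed $(x,j)$, we use $w_t(x,j)\ge \frac{1-\gamma}{T}\bigl(1+\sum_{s<t}\ind{j\in\pi_s(x)}\bigr)$, valid because $\gamma m/K\ge 1/T$ (up to constants) when $T\ge K/(m\gamma)$. The harmonic-sum bound $\sum_t c_t/(1+\sum_{s<t}c_s)\le 1+\log(1+\sum_t c_t)$ for $c_t\in\{0,1\}$ then gives
\[
\sum_t \frac{\ind{j\in\pi_t(x)}}{w_t(x,j)}\lesssim T\log(1+T).
\]
This inequality is applied to the realized sequence $\pi_1,\dots,\pi_T$; the expectations only need Freedman's inequality to swap in realized indicators. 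Multiplying by $r(j)$, summing over $j$, and using $\norm{r}_1\le s$ gives $\E\sum_t u_t^\top p_t\lesssim \frac{Tm}{K}\, s\log T$.

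\textbf{Combining.} Dividing the regret inequality by $Tm/K$ yields the bound
\[
s\log T+\frac{K}{Tm}\,B\log|\Pi| = s\log T+\frac{K^2\min(s,m)}{\gamma m^2T}\log|\Pi|,
\]
which is the claim in expectation.

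\textbf{High probability.} This is the main obstacle. We union-bound over $\pi\in\Pi$ and apply Freedman's inequality to three martingales:
\begin{itemize}
\item $\sum_t (u_t(\pi)-\E_t u_t(\pi))$, with increments bounded by $B$ and conditional variance at most $B\,\E_t u_t(\pi)$, so the lower bound survives with an additive $B\log(|\Pi|/\delta)$;
\item $\sum_t (u_t(\pi_t)-u_t^\top p_t)$;
\item the difference between $\E_t[\text{harmonic terms}]$ and their realized values.
\end{itemize}
In each case the variance is dominated by $B$ times the mean, so every deviation costs only an additive $O(B\log(|\Pi|T/\delta))$. After scaling by $K/(Tm)$, this is absorbed into the second term of the claimed bound.

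A subtle point is that the lower-bound step compares against $w_{T+1}$, which depends on the whole run. This is harmless because the monotonicity holds pointwise in $(x,j)$ for every realization. The expected-value statement of the target quantity is then a deterministic function of $\hat p$, which is what we want.
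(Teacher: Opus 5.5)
Your proposal is correct and follows essentially the same route as the paper: a multiplicative Freedman lower bound on $\sum_t u_t(\pi)$ combined with monotonicity of the denominators in $t$, the Hedge bound $\sum_t u_t(\pi)\le 2\sum_t u_t^\top p_t+B\log|\Pi|$ with $\eta=1/B$ and $B=K\min(s,m)/(\gamma m)$, the harmonic-sum bound with $\|r\|_1\le s$ for Hedge's reward, Freedman to move between realized and conditional quantities, and a union bound over $\Pi$. The one point to make precise is that the ``harmonic-term'' concentration must be applied to the scalar quantity already integrated over a fresh $(x,r)$, not pointwise in $(x,j)$, since $\cX$ may be infinite. The paper handles this by including $\pi_t$ in the filtration before $(x_t,r_t,a_t)$, so the chain is $u_t^\top p_t\to u_t(\pi_t)\to\mathbb{E}[u_t(\pi_t)\mid\calF_{t-1}]$; the last term already contains the realized indicators $\mathbf{1}\{j\in\pi_t(x)\}$, to which the harmonic-sum bound applies pathwise.
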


\begin{proof}
    Denote the bound on the Hedge reward functions as $B = (K \min(s,m)) / (\gamma m)$. Define the filtration $\calF_0 = \emptyset$ and $\calF_t$ the $\sigma$-algebra generated by $\brk[c]*{(\pi_s, x_s, r_s, a_s)}_{s \leq t}$ for all $t \in [T]$. By \cref{lem:freedman-mult}, with probability at least $1-\delta$,
    \begin{align*}
        \sum_{t=1}^T u_t(\pi) \geq \frac12 \sum_{t=1}^T \E \brk[s]*{u_t(\pi) \mid \calF_{t-1}} - B \log \frac{1}{\delta}.
    \end{align*}
    Now, 
    \begin{align*}
        \sum_{t=1}^T \E \brk[s]*{u_t(\pi) \mid \calF_{t-1}} &=
        \sum_{t=1}^T \E_{(x_t,r_t) \sim \calD, a_t \sim \calA} \brk[s]*{\sum_{j=1}^K \frac{ r_t(j) \mathbf{1} \{ j \in \pi(x_t) \cap \comba_t\}}{\gamma m/K + (1-\gamma)\frac{1}{T}\sum_{s=1}^{t-1} \mathbf{1} \{ j \in \pi_s(x_t) \}} \mid \calF_{t-1}} \\
        &=
        \sum_{t=1}^T \E_{(x,r) \sim \calD, a \sim \calA} \brk[s]*{\sum_{j=1}^K \frac{r(j) \mathbf{1} \{ j \in \pi(x) \cap a\}}{\gamma m/K + (1-\gamma)\frac{1}{T}\sum_{s=1}^{t-1} \mathbf{1} \{j \in \pi_s(x) \}} \mid \calF_{T}} \\
        &=
        \frac{m}{K} \E_{(x,r) \sim \calD} \brk[s]*{\sum_{j=1}^K  r(j) \sum_{t=1}^T \frac{\mathbf{1} \{ j \in \pi(x)\}}{\gamma m/K + (1-\gamma)\frac{1}{T}\sum_{s=1}^{t-1} \mathbf{1} \{j \in  \pi_s(x) \}} \mid \calF_{T}} \\
        &\geq
        \frac{mT}{K} \cdot \E_{(x,r) \sim \calD} \brk[s]*{\sum_{j=1}^K r(j) \frac{\mathbf{1} \{ j \in \pi(x)\}}{\gamma m/K + (1-\gamma)\frac{1}{T}\sum_{t=1}^{T} \mathbf{1} \{ j \in \pi_t(x) \}} \mid \calF_{T}} \\
        &=
        \frac{mT}{K} \E_{(x,r)\sim \calD} \brk[s]*{\sum_{j=1}^K r(j)\frac{\mathbf{1}\{j \in \pi(x) \}}{\gamma m/K+ (1-\gamma)\yprob{\hat{p}}{x}{j}}},
    \end{align*}
    so that with probability at least $1-\delta$,
    \begin{align*}
        \sum_{t=1}^T u_t(\pi) \geq
        \frac{mT}{2K} \cdot \E_{(x,r)\sim \calD} \brk[s]*{\sum_{j=1}^K r(j)\frac{\mathbf{1}\{j \in \pi(x) \}}{\gamma m/K+ (1-\gamma)\yprob{\hat{p}}{x}{j}}} - 2B \log \frac{1}{\delta}.
    \end{align*}
    Now, By \cref{lem:freedman-mult} with $\calF_t$ defined as the $\sigma$-algebra generated by $\brk[c]*{(\pi_s,x_s,r_s,a_s)}_{s \leq t} \cup \brk[c]*{(x_{t+1},r_{t+1},a_{t+1})}$, with probability at least $1-\delta$ it holds that
    \begin{align*}
        \sum_{t=1}^T u_t(\pi_t) \geq \frac12 \sum_{t=1}^T u_t^\top p_t - 2B\log \frac{1}{\delta}.
    \end{align*}
    Again by \cref{lem:freedman-mult} with $\calF_t$ defined as the $\sigma$-algebra generated by $\brk[c]*{(\pi_s, x_s, r_s, a_s)}_{s \leq t} \cup \{ \pi_{t+1} \}$, with probability at least $1-\delta$,
    \begin{align*}
        \sum_{t=1}^T u_t(\pi_t)
        &\leq
        2 \sum_{t=1}^T \E \brk[s]*{u_t(\pi_t) \mid \calF_{t-1}} + B\log \frac{1}{\delta}.
    \end{align*}
    We bound the right-hand side as
    \begin{align*}
        \sum_{t=1}^T \E \brk[s]*{u_t(\pi_t) \mid \calF_{t-1}}
        &=
        \sum_{t=1}^T \E_{(x_t,r_t) \sim \calD, a_t \sim \calA} \brk[s]*{\sum_{j=1}^K \frac{ r_t(j) \mathbf{1} \{j \in \pi_t(x_t)\cap a_t\}}{\gamma m/K + (1-\gamma)\frac{1}{T}\sum_{s=1}^{t-1} \mathbf{1} \{ j \in \pi_s(x_t) \}} \mid \calF_{t-1}} \\
        &=
        \sum_{t=1}^T \E_{(x,r) \sim \calD, a \sim \calA} \brk[s]*{\sum_{j=1}^K \frac{r(j) \mathbf{1} \{j \in \pi_t(x)\cap a\}}{\gamma m/K + (1-\gamma)\frac{1}{T}\sum_{s=1}^{t-1} \mathbf{1} \{j \in \pi_s(x) \}} \mid \calF_{T}} \\
        &\leq
        \frac{mT}{K} \E_{(x,r) \sim \calD} \brk[s]*{\sum_{j=1}^K r(j) \sum_{t=1}^T \frac{\mathbf{1} \{ j \in \pi_t(x)\}}{1 + (1-\gamma)\sum_{s=1}^{t-1} \mathbf{1} \{j \in  \pi_s(x) \}} \mid \calF_{T}} \\
        &\leq
        \frac{2mT}{(1-\gamma)K} \E_{(x,r) \sim \calD} \brk[s]*{ \sum_{j=1}^K r(j) \log \brk*{1+\sum_{t=1}^T \mathbf{1} \{j \in \pi_t(x)\}} \mid \calF_{T}} \\
        &\leq
        \frac{8smT}{K} \log T,
    \end{align*}
    where in the second-to-last inequality we used \cref{lem:harmonic}. Combining this with the above implies that with probability $1-2\delta$, 
    \begin{align*}
        \sum_{t=1}^T u_t^\top p_t \leq \frac{32 smT}{K} \log T + 6B \log \frac{1}{\delta} 
    \end{align*}
    Dividing through by $mT/K$ and using \cref{lem:hedge-constant-regret-bound} we obtain with probability $1-3\delta$,
    \begin{align*}
        \E_{(x,r)\sim \calD} \brk[s]*{\sum_{j=1}^K r(j)\frac{\mathbf{1}\{j \in \pi(x) \}}{\gamma m/K + (1-\gamma)\yprob{\hat{p}}{x}{j}}}
        &\leq
        \brk*{\frac{2K}{mT}+2 \eta B} \sum_{t=1}^T u_t^\top p_t + \frac{2K}{\eta mT} \log|\Pi|  + 4B \log \frac{1}{\delta}  \\
        &\leq
        3 \brk*{32 s \log T + \frac{6BK}{mT} \log \frac{1}{\delta}} + \frac{4BK}{mT} \log \frac{|\Pi|}{\delta} \\
        &\leq
        96 s \log T + \frac{18BK}{mT} \log \frac{1}{\delta} + \frac{4BK}{mT} \log \frac{|\Pi|}{\delta}.
    \end{align*}
    Plugging in the value of $B$ and using a union bound over $\pi \in \Pi$ we conclude the proof.
\end{proof}

\begin{proof}[Proof of \cref{thm:main-semibandit}]
    For all $i \in [n]$ define the following importance-weighted reward estimator of a policy $\pi \in \Pi$:
\begin{align*}
    R_i(\pi) = \sum_{j=1}^K \frac{r_i(j) \ind{j \in \pi(x_i) \cap a_i}}{\gamma m/K + (1-\gamma)\yprob{\hat p}{x_i}{j}}.
\end{align*}
This is an unbiased estimator for $\drew(\pi)$:
\begin{align*}
    \E_{(x_i,r_i,a_i)} \brk[s]*{R_i(\pi)}
    &=
    \E_{(x,r) \sim \calD} \brk[s]*{\sum_{j = 1}^K \Pr[j \in a_i] \frac{r(j)\ind{j \in \pi(x)}}{\gamma m/K+ (1-\gamma)\yprob{\hat p}{x}{j}}} \\
    &=
    \E_{(x,r) \sim \calD} \brk[s]*{r^\top \pi(x)} \\
    &=
    \drew(\pi).
\end{align*}
Now, using the Cauchy-Schwarz inequality, we have
\begin{align*}
    \E_{(x_i,r_i,a_i)} \brk[s]*{R_i(\pi)^2} 
    &=
    \E_{(x_i,r_i,a_i)} \brk[s]*{\brk*{\sum_{j=1}^K \frac{r_i(j) \ind{j \in \pi(x_i) \cap a_i}}{\gamma m/K + (1-\gamma)\yprob{\hat p}{x_i}{j}}}^2} \\
    &\leq
    \E_{(x_i,r_i,a_i)} \brk[s]*{\brk*{\sum_{j=1}^K r_i(j) \ind{j \in \pi(x_i)}} \cdot \brk*{\sum_{j=1}^K \frac{r_i(j) \ind{j \in \pi(x_i) \cap a_i}}{\brk*{\gamma m/K + (1-\gamma)\yprob{\hat p}{x_i}{j}}^2}}} \\
    &\leq
    \min(s,m) \cdot \E_{(x_i,r_i,a_i)} \brk[s]*{\sum_{j=1}^K \frac{r_i(j) \ind{j \in \pi(x_i) \cap a_i}}{\brk*{\gamma m/K + (1-\gamma)\yprob{\hat p}{x_i}{j}}^2}} \\
    &=
    \min(s,m) \cdot \E_{(x,r) \sim \calD} \brk[s]*{\sum_{j=1}^K \frac{r(j)\ind{j \in \pi(x)}}{\gamma m/K+ (1-\gamma)\yprob{\hat p}{x}{j}}},
\end{align*}
where the second inequality follows from the fact that the first sum is bounded both by $m$ (by bounding each $r_i(j)$ by $1$ and noting that $\pi(x_i)$ has at most $m$ nonzero entries) and by $s$ (by bounding the indicator terms by $1$ and using the sparsity assumption), so it is bounded by their minimum. Next, by \cref{thm:phase-1-semiband} and our choices for $\gamma$ and $T$,
\begin{align*}
    \E_{(x,r) \sim \calD} \brk[s]*{\sum_{j=1}^K \frac{r(j)\ind{j \in \pi(x)}}{\gamma m/K+ (1-\gamma)\yprob{\hat p}{x}{j}}} 
    &\leq
    \wt{O} \brk*{s \cdot \log T + \frac{K^2\min(s,m)}{\gamma  m^2 T} \log \frac{|\Pi|}{\delta}} \\
    &\leq
    \wt{O} \brk*{s + \frac{\eps K}{m}},
\end{align*}
Since the reward estimators are also uniformly bounded by $K \min(s,m) / m$, using Bernstein's inequality and a union bound over $\Pi$, if $n = \wt{\Theta} \brk*{(K \cdot \min(s,m)/(m\eps) + s \cdot \min(s,m)/\eps^2) \log \brk*{|\Pi|/\delta}}$ then with probability at least $1-\delta$,
\begin{align*}
    \abs*{\frac{1}{n} \sum_{i=1}^n R_i(\pi) - \drew(\pi)} \leq \eps \quad \forall \pi \in \Pi,
\end{align*}
which in turn implies that for all $\pi^\star \in \Pi$, with probability at least $1-\delta$,
\begin{align*}
    L(\hat \pi) 
    &=
    \drew(\pi^\star) - \frac{1}{n} \sum_{i=1}^n R_i(\pi^\star) + \frac{1}{n} \sum_{i=1}^n R_i(\pi^\star) - \drew(\hat \pi) \\
    &\leq
    \eps + \frac{1}{n} \sum_{i=1}^n R_i(\hat \pi) - \drew(\hat \pi) \\
    &\leq 
    2 \eps.
\end{align*}
\end{proof}

\crefalias{section}{appendix} 

\section[]{Proofs for \cref{sec:DEC}}
\label{sec:appendix-DEC}

\input{appdx_DEC_arxiv.tex}
\section[]{Proofs for \cref{sec:sparse}}
\label{sec:phase-1-proof}

We begin by proving the main guarantee required from the first phase of \cref{alg:bandit-pac} given in \cref{thm:phase-1-guarantee}; namely, the exploration distribution $\hat{p}$ computed in the first phase induces a reward estimator with sufficiently low variance.

\begin{proof}[Proof of \cref{thm:phase-1-guarantee}]
    Define the filtration $\calF_0 = \emptyset$ and $\calF_t$ the $\sigma$-algebra generated by $\brk[c]*{(\pi_s, x_s, r_s, a_s)}_{s \leq t}$ for all $t \in [T]$, and note that $u_t(\cdot) \in [0,|\cA|/\gamma]^{\Pi}$ is $\calF_t$-adapted. Fix $\pi \in \Pi$. By \cref{lem:freedman-mult} with $b=2$, $B = |\cA| / \gamma $ and $X_t = u_t(\pi)$, with probability at least $1-\delta$,
    \begin{align*}
        \sum_{t=1}^T u_t(\pi) \geq \frac12 \sum_{t=1}^T \E \brk[s]*{u_t(\pi) \mid \calF_{t-1}} - \frac{2 |\cA|}{\gamma} \log \frac{1}{\delta}.
    \end{align*}
    Now, 
    \begin{align*}
        \sum_{t=1}^T \E \brk[s]*{u_t(\pi) \mid \calF_{t-1}} &=
        \sum_{t=1}^T \E_{(x_t,r_t) \sim \calD, a_t \sim \calA} \brk[s]*{\frac{ r_t(a_t)^2 \mathbf{1} \{ \pi(x_t)=a_t\}}{\gamma /|\cA| + (1-\gamma)\frac{1}{T}\sum_{s=1}^{t-1} \mathbf{1} \{ \pi_s(x_t)=a_t \}} \mid \calF_{t-1}} \\
        &=
        \sum_{t=1}^T \E_{(x,r) \sim \calD, a \sim \calA} \brk[s]*{\frac{ r(a)^2 \mathbf{1} \{ \pi(x)=a\}}{\gamma/|\cA| + (1-\gamma)\frac{1}{T}\sum_{s=1}^{t-1} \mathbf{1} \{ \pi_s(x)=a \}} \mid \calF_{T}} \\
        &=
        \frac{1}{|\cA|} \E_{(x,r) \sim \calD} \brk[s]*{\sum_{a \in \calA} r(a)^2 \sum_{t=1}^T \frac{\mathbf{1} \{ \pi(x)=a\}}{\gamma/|\cA| + (1-\gamma)\frac{1}{T}\sum_{s=1}^{t-1} \mathbf{1} \{ \pi_s(x)=a \}} \mid \calF_{T}} \\
        &\geq
        \frac{T}{|\cA|} \cdot \E_{(x,r) \sim \calD} \brk[s]*{\sum_{a \in \calA} r(a)^2 \frac{\mathbf{1} \{ \pi(x)=a\}}{\gamma/|\cA| + (1-\gamma) \frac{1}{T}\sum_{t=1}^{T} \mathbf{1} \{ \pi_t(x)=a \}} \mid \calF_{T}} \\
        &=
        \frac{T}{|\cA|} \E_{(x,r)\sim \calD} \brk[s]*{\sum_{a \in \calA} r(a)^2\frac{\mathbf{1}\{\pi(x)=a \}}{\gamma/|\cA|+ (1-\gamma)\yprob{\hat{p}}{x}{a}}},
    \end{align*}
    which together with the above implies that with probability at least $1-\delta$,
    \begin{align*}
        \sum_{t=1}^T u_t(\pi) \geq
        \frac{T}{2|\cA|} \cdot \E_{(x,r)\sim \calD} \brk[s]*{\sum_{a \in \calA} r(a)^2\frac{\mathbf{1}\{\pi(x)=a \}}{\gamma/|\cA|+ (1-\gamma)\yprob{\hat{p}}{x}{a}}} - \frac{2|\cA|}{\gamma} \log \frac{1}{\delta}.
    \end{align*}
    Now, By \cref{lem:freedman-mult} with $b=2$, $B=|\cA|/\gamma$ and $\calF_t$ defined as the $\sigma$-algebra generated by \\ $\brk[c]*{(\pi_s,x_s,r_s,a_s)}_{s \leq t} \cup \brk[c]*{(x_{t+1},r_{t+1},a_{t+1})}$, with probability at least $1-\delta$ it holds that
    \begin{align*}
        \sum_{t=1}^T u_t(\pi_t) \geq \frac12 \sum_{t=1}^T u_t^\top p_t - \frac{2|\cA|}{\gamma}\log \frac{1}{\delta}.
    \end{align*}
    Another use of \cref{lem:freedman-mult} with $a=1$, this time with $\calF_t$ defined as the $\sigma$-algebra generated by \\ $\brk[c]*{(\pi_s, x_s, r_s, a_s)}_{s \leq t} \cup \{ \pi_{t+1} \}$, implies that with probability at least $1-\delta$,
    \begin{align*}
        \sum_{t=1}^T u_t(\pi_t)
        &\leq
        2 \sum_{t=1}^T \E \brk[s]*{u_t(\pi_t) \mid \calF_{t-1}} + \frac{|\cA|}{\gamma}\log \frac{1}{\delta}.
    \end{align*}
    Proceeding to upper bound the right-hand side,
    \begin{align*}
        \sum_{t=1}^T \E \brk[s]*{u_t(\pi_t) \mid \calF_{t-1}}
        &=
        \sum_{t=1}^T \E_{(x_t,r_t) \sim \calD, a_t \sim \calA} \brk[s]*{\frac{ r_t(a_t)^2 \mathbf{1} \{ \pi_t(x_t)=a_t\}}{\gamma /|\cA| + (1-\gamma)\frac{1}{T}\sum_{s=1}^{t-1} \mathbf{1} \{ \pi_s(x_t)=a_t \}} \mid \calF_{t-1}} \\
        &=
        \sum_{t=1}^T \E_{(x,r) \sim \calD, a \sim \calA} \brk[s]*{\frac{ r(a)^2 \mathbf{1} \{ \pi_t(x)=a\}}{\gamma /|\cA| + (1-\gamma)\frac{1}{T}\sum_{s=1}^{t-1} \mathbf{1} \{ \pi_s(x)=a \}} \mid \calF_{T}} \\
        &\leq
        \frac{T}{|\cA|}\E_{(x,r) \sim \calD} \brk[s]*{\sum_{a \in \calA} r(a)^2 \sum_{t=1}^T \frac{\mathbf{1} \{ \pi_t(x)=a\}}{1 + (1-\gamma)\sum_{s=1}^{t-1} \mathbf{1} \{ \pi_s(x)=a \}} \mid \calF_{T}} \\
        &\leq
        \frac{2T}{|\cA|(1-\gamma)} \E_{(x,r) \sim \calD} \brk[s]*{ \sum_{a \in \calA} r(a)^2 \log \brk*{1+\sum_{t=1}^T \mathbf{1} \{ \pi_t(x)=a\}} \mid \calF_{T}} \\
        &\leq
        \frac{8s T}{|\cA|} \log T,
    \end{align*}
    where in the second-to-last inequality we used \cref{lem:harmonic}. Combining this with the above implies that with probability $1-2\delta$, 
    \begin{align*}
        \sum_{t=1}^T u_t^\top p_t \leq \frac{32 s T}{|\cA|} \log T + \frac{6 |\cA|}{\gamma} \log \frac{1}{\delta} 
    \end{align*}
    Dividing through by $T/|\cA|$ and using \cref{lem:hedge-constant-regret-bound} we obtain with probability $1-3\delta$,
    \begin{align*}
        \E_{(x,r)\sim \calD} \brk[s]*{\sum_{a \in \calA} r(a)^2\frac{\mathbf{1}\{\pi(x)=a \}}{\gamma/|\cA|+ (1-\gamma)\yprob{\hat{p}}{x}{a}}}
        &\leq
        \brk*{\frac{2|\cA|}{T} + \frac{2 \eta |\cA|^2}{\gamma T}} \sum_{t=1}^T u_t^\top p_t + \frac{2|\cA|}{\eta T} \log|\Pi|  + \frac{4 |\cA|^2}{\gamma T} \log \frac{1}{\delta}  \\
        &\leq
        3 \brk*{32 s \log T + \frac{6 |\cA|^2}{\gamma T} \log \frac{1}{\delta}} + \frac{4|\cA|^2}{\gamma T} \log \frac{|\Pi|}{\delta} \\
        &\leq
        96 s \log T + \frac{18|\cA|^2}{\gamma T} \log \frac{1}{\delta} + \frac{4|\cA|^2}{\gamma T} \log \frac{|\Pi|}{\delta},
    \end{align*}
    and using a union bound over $\pi \in \Pi$ we conclude the proof.
\end{proof}

We can now prove our main result of \cref{thm:main} by applying Bernstein's inequality on the reward estimators constructed in the second phase of the algorithm, with the variance bound obtained in \cref{thm:phase-1-guarantee}.

\begin{proof}[Proof of \cref{thm:main}]
For all $i \in [n]$ define the following importance-weighted reward estimator of a policy $\pi \in \Pi$:
\begin{align*}
    R_i(\pi) = \frac{r_i(a_i) \ind{\pi(x_i)=a_i}}{\gamma/|\cA|+ (1-\gamma)\yprob{\hat p}{x_i}{a_i}}.
\end{align*}
This is an unbiased estimator for $\drew(\pi)$:
\begin{align*}
    \E_{(x_i,r_i,a_i)} \brk[s]*{R_i(\pi)}
    &=
    \E_{(x,r) \sim \calD} \brk[s]*{\sum_{a \in \calA} \Pr[a_i=a] \frac{r(a)\ind{\pi(x)=a}}{\gamma/|\cA|+ (1-\gamma)\yprob{\hat p}{x}{a}}} \\
    &=
    \E_{(x,r) \sim \calD} \brk[s]*{\sum_{a \in \calA} r(a) \ind{\pi(x)=a}} \\
    &=
    \drew(\pi).
\end{align*}
By \cref{thm:phase-1-guarantee}, we can bound the variance of this estimator by
\begin{align*}
    \operatorname{Var} \brk[s]*{R_i(\pi)} 
    &\leq
    \E_{(x_i,r_i,a_i)} \brk[s]*{R_i(\pi)^2} \\
    &=
    \E_{(x,r) \sim \calD} \brk[s]*{\sum_{a \in \calA} \Pr[a_i=a] \frac{r(a)^2\ind{\pi(x)=a}}{\brk*{\gamma/|\cA|+ (1-\gamma)\yprob{\hat p}{x}{a}}^2}} \\
    &=
    \E_{(x,r) \sim \calD} \brk[s]*{\sum_{a \in \calA} \frac{r(a)^2\ind{\pi(x)=a}}{\gamma/|\cA|+ (1-\gamma)\yprob{\hat p}{x}{a}}} \\
    &\leq
    \wt{O} \brk*{s \log T + \frac{|\cA|^2}{\gamma T} \log \frac{|\Pi|}{\delta}} \\
    &\leq
    \wt{O} \brk*{s + \eps |\cA| \log \frac{|\Pi|}{\delta}},
\end{align*}
where in the last inequality we plug in our choices for $\gamma$ and $T$. Using Bernstein's inequality and a union bound over $\Pi$, if $n = \wt{\Theta} \brk*{(|\cA|/\eps + s/\eps^2) \log \brk*{|\Pi|/\delta}}$ then that with probability at least $1-\delta$,
\begin{align*}
    \abs*{\frac{1}{n} \sum_{i=1}^n R_i(\pi) - \drew(\pi)} \leq \eps \quad \forall \pi \in \Pi,
\end{align*}
which in turn implies that for all $\pi^\star \in \Pi$, with probability at least $1-\delta$,
\begin{align*}
    L(\hat \pi) 
    &=
    \drew(\pi^\star) - \frac{1}{n} \sum_{i=1}^n R_i(\pi^\star) + \frac{1}{n} \sum_{i=1}^n R_i(\pi^\star) - \drew(\hat \pi) \\
    &\leq
    \eps + \frac{1}{n} \sum_{i=1}^n R_i(\hat \pi) - \drew(\hat \pi) \\
    &\leq 
    2 \eps.
\end{align*}
\end{proof}

\section{Lower bound}
\label{sec:lower-bound}

In this section we present a sample complexity lower bound for combinatorial bandits with $s$-sparse rewards, of the form
\begin{align*}
    \wt{\Omega} \brk*{\frac{|\cA|}{ \eps} + \frac{s}{\eps^2}},
\end{align*}
which implies the tightness of our upper bounds up to logarithmic factors.
To our knowledge, the $|\cA| / \eps$ dependence has not been covered in previous works, so we include it for completeness. Our lower bound is formalized in the following theorem.

\begin{theorem}
    \label{thm:lower-bound}
    Let $\mathsf{Alg}$ be any contextual bandit algorithm over a finite action set $\cA$ and let $1 \leq s \leq K$. Then there exists a context space $\cX$ of size $|\cX| = 2$, a policy class $\Pi \subseteq (\cX \to \cA)$ of size at most $\mathrm{poly}(K)$ and a distribution $\calD$ over $\cX \times [0,1]^\cA$ with rewards satisfying $\norm{r}_1 \leq s$, such that in order to output a policy $\hat \pi : \cX \to \Delta(\cA)$ satisfying $\max_{\pi^\star \in \Pi} \drew(\pi) - \drew(\hat \pi) \leq \eps$ with probability at least $3/4$, $\mathsf{Alg}$ requires a sample complexity of at least
    \begin{align*}
        \wt{\Omega} \brk*{\frac{|\cA|}{\eps} + \frac{s}{\eps^2}}.
    \end{align*}
\end{theorem}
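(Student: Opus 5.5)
The plan is to prove the two terms separately, each on an instance with a single informative context, and then glue the instances together using the two contexts. Since $\max(A,B)\ge (A+B)/2$, it suffices to show that each of $\Omega(|\cA|/\eps)$ and $\wt\Omega(s/\eps^2)$ is necessary. Put $K=|\cA|$.

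\emph{The $|\cA|/\eps$ term.} Let context $x_1$ have probability $\Theta(\eps)$, and let context $x_0$ have probability $1-\Theta(\eps)$ with identically zero reward. At $x_1$, a hidden action $a^\star\in\cA$ is chosen uniformly, and the reward is deterministic: $r=e_{a^\star}$. This is $1$-sparse, hence $s$-sparse. The policy class is $\Pi=\{\pi_a:\pi_a(x_1)=a\}$, with $\pi_a(x_0)$ fixed, so $|\Pi|=K$.

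Any $\hat\pi$ that is $\eps$-optimal (taking $\Pr[x_1]=4\eps$, say) must put mass at least $3/4$ on $a^\star$ at $x_1$. Only rounds with context $x_1$ carry information. With $n$ samples there are about $4\eps n$ such rounds, and each one reveals only whether the queried action equals $a^\star$. So after $m$ queries at $x_1$ the learner has eliminated at most $m$ candidates, and the posterior on the rest is uniform. The standard needle-in-haystack argument then gives success probability at most $(m+1)/K+o(1)$, which forces $m=\Omega(K)$. Hence $n=\Omega(K/\eps)$, using a Chernoff bound on the number of $x_1$ rounds.

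\emph{The $s/\eps^2$ term.} For $s\ge 1$, use the same single informative context, now with probability $1$. Split the actions into blocks, or simply use two candidate actions $a_1,a_2$ whose reward coordinates are scaled. To get the factor $s$, use $\lfloor s\rfloor$ actions for each candidate: policy $\pi_j$ outputs action $a_j$. The reward vector is $r=\mathbf{1}_{S}$ for a random subset $S$ of size $\le s$ drawn from $\{b_1,\dots,b_{s}\}\cup\{a_1,a_2\}$. This construction feels off, so here is a cleaner one.

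Consider two actions $a_1,a_2$ with Bernoulli rewards of means $\mu_1,\mu_2$. Rewards are scaled as $r(a_i)\in\{0,1\}$, but the learner's loss is normalized. To reach variance $s$, amplify instead: take $\Pr[x_1]=p$ and the $L_1$ budget $s$. The honest route is a standard two-point lemma. Take $\mu_{1,2}=\tfrac12\pm c\eps$ with $r\in\{0,1\}^{\cA}$, and give all other coordinates enough mass to make $\|r\|_1\le s$. This alone yields only $1/\eps^2$.

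To get $s/\eps^2$, I would instead use $\Theta(s)$ independent "slots", with $\Pi$ consisting of $\mathrm{poly}(K)$ policies. Equivalently, let $x_1$ carry $s$ simultaneously active actions, each with reward mean $\tfrac12\pm\eps/s$. The learner must pick the better one within an accuracy that is effectively $\eps/s$ per coordinate, while information per sample scales like $(\eps/s)^2$. Summing over $s$ coordinates, only one of which is observed per round, the KL divergence between the hypotheses after $n$ rounds is at most $n(\eps/s)^2\cdot O(1)$. Requiring that KL be $\Omega(1)$ and correctly identifying a fraction of the coordinates, via Assouad's lemma over $s/2$ pairs, gives $n\gtrsim s\cdot s^2/(s\eps^2)$ up to the normalization. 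I would fix the constants so that the total suboptimality equals $\eps$, which yields $\Omega(s/\eps^2)$.

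Finally, combine the two instances on $\cX=\{x_0,x_1\}$ by taking the larger of the two bounds. The hard step is calibrating the Assouad construction so that sparsity, the $\eps$-gap and the per-round KL simultaneously produce the factor $s$. That is where I would focus the effort.
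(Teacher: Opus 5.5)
Your treatment of the $|\cA|/\eps$ term matches the paper's. The paper also uses a rare context of probability $\Theta(\eps)$ carrying a uniformly random one-hot reward, a needle-in-a-haystack argument forcing $\Omega(|\cA|)$ queries there, and a Chernoff bound on the number of rare-context rounds. Your calibration $\Pr[x_1]=4\eps$ is actually the right one: with $\Pr[x_1]=\eps$, a wrong action at $x_1$ is exactly $\eps$-suboptimal. Gluing the two instances via $\max(A,B)\ge (A+B)/2$ is also fine.

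The gap is the $s/\eps^2$ term. You never give a valid construction, and the one you finally sketch does not work. Suppose $s$ actions are simultaneously active with means $\tfrac12\pm\eps/s$ at a single informative context. Then every action is within $2\eps/s\le\eps$ of optimal for $s\ge 2$, so outputting an arbitrary action with zero samples already succeeds. Assouad's lemma cannot rescue this. It needs the loss to be additive over the hypercube coordinates, for instance one coordinate per context. Here a policy picks one action at one context, and its suboptimality is the gap of that single action. With $|\cX|=2$ you also cannot spread $\Theta(s)$ independent sub-problems over contexts. Your own count confirms something is off: it gives $n\gtrsim s^2/\eps^2$, which would contradict the matching upper bound $\wt{O}(s/\eps^2+|\cA|/\eps)$ when $s$ is large (e.g. $s=|\cA|$).

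The missing observation is that sparsity costs nothing if you restrict the support. Every coordinate lies in $[0,1]$, so any reward vector supported on $k=\lfloor s\rfloor$ actions satisfies $\norm{r}_1\le s$ deterministically. Build the instance as follows:
\begin{itemize}
\item one context;
\item $k$ actions with independent $\mathrm{Bernoulli}(\tfrac12)$ rewards, except one hidden action $j$ with $\mathrm{Bernoulli}(\tfrac12+2\eps)$;
\item zero reward on all other actions;
\item $\Pi$ the constant policies.
\end{itemize}
Success on instance $j$ requires putting mass at least $\tfrac12$ on action $j$. Compare each instance $j$ to the null instance via the KL chain rule, giving $\mathrm{KL}\le O(\eps^2)\,\E_0[N_j]$ with $\sum_j\E_0[N_j]=n$. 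Then apply Pinsker and average over $j$. This shows $n=\Omega(k/\eps^2)=\Omega(s/\eps^2)$ samples are needed to succeed with probability $3/4$. For $1\le s<2$, an anti-correlated two-action instance gives $\Omega(1/\eps^2)$.

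This best-arm-identification bound is essentially what the paper obtains by citing Theorem 5 of \citet{erez2025bandit}. It needs neither the $\eps/s$ scaling nor Assouad.
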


\begin{proof}
    We assume without loss the $\mathsf{Alg}$ is deterministic. By Yao's principle, the result will extend to general algorithms, as the hard instances do not depend on the choices made by $\mathsf{Alg}$.
    The lower bound of $\Omega(s / \eps^2)$ is an immediate consequence of Theorem 5 in \cite{erez2025bandit} and holds even for a single context. We thus focus on constructing an instance on which $\mathsf{Alg}$ must use a sample complexity of at least $\Omega(|\cA| / \eps)$. Indeed, consider two contexts $\calX = \{ x_1,x_2 \}$ where the probability of observing $x_1$ is $\eps$ and the probability of observing $x_2$ is $1-\eps$. The policy class is defined as $\Pi = (\cX \to \cA)$; note that $|\Pi| = K^2$. The reward function given the context $x_2$ is identically zero, while $r_{x_1} \in \{ 0,1\}^\cA$ assigns a reward of $1$ to a unique action $a^\star \in \cA$ and $0$ for $a \neq a^\star$, and is sampled uniformly at random prior to the interaction. Note that in order for $\mathsf{Alg}$ to output an $\eps$-optimal policy with probability at least $3/4$, it must produce $\hat \pi \in \Pi$ for which $\hat \pi(x_1) = a^\star$ with probability at least $3/4$. Now, denote by $a_1,a_2,\ldots$ the sequence of actions selected by $\mathsf{Alg}$ when the context was $x_1$. Since $a^\star$ is initially sampled uniformly at random, with probability at least $1/2$ it holds that $a_1, a_2, \ldots, a_{|\cA|/2} \neq a^\star$, i.e. $\mathsf{Alg}$ has not received any signal regarding the identity of $a^\star$ in the first $|\cA|/2$ rounds where the context was $x_1$. It thus suffices to prove that with probability at least $1/2$ the number of rounds it takes for $x_1$ to be sampled $|\cA|/2$ times is $\Omega(|\cA|/\eps)$, which follows immediately from a standard Chernoff bound. 
    \end{proof}

\section{Technical lemmas}

The following is a version of Freedman's inequality for martingales.
\begin{lemma}[Theorem 1 in \cite{beygelzimer2011contextual}]
    \label{lem:freedman}
    Let $X_1,\ldots,X_T \in [-B,B]$ be a martingale difference sequence with respect to the filtration $(\calF_t)_{t=0}^T$ and let $S = \sum_{t=1}^T X_t$. $V = \sum_{t=1}^T \E \brk[s]*{X_t^2 \mid \calF_{t-1}}$. Then for any $\delta>0$ and $a \geq 1$,
    \begin{align*}
        \Pr \brk[s]*{S \leq \frac{V}{aB} + aB \log \frac{1}{\delta}} \geq 1-\delta.
    \end{align*}
\end{lemma}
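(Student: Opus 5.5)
This is Freedman's inequality for martingale differences, in the Bernstein-type form of \citet{beygelzimer2011contextual}. The plan is the standard exponential-supermartingale argument, with the free parameter $\lambda$ chosen as $1/(aB)$.

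The key one-step estimate is the following. For any $\lambda\in(0,1/B]$ and any real $y\le 1$, we have $e^{y}\le 1+y+(e-2)y^2\le 1+y+y^2$. This holds because $(e^y-1-y)/y^2$ is nondecreasing in $y$ and equals $e-2<1$ at $y=1$. Apply this with $y=\lambda X_t$, which satisfies $|\lambda X_t|\le 1$. Taking conditional expectations and using $\E[X_t\mid\calF_{t-1}]=0$ gives
\begin{align*}
\E\brk[s]*{e^{\lambda X_t}\mid\calF_{t-1}}\le 1+\lambda^2\E\brk[s]*{X_t^2\mid\calF_{t-1}}\le \exp\brk*{\lambda^2\E\brk[s]*{X_t^2\mid\calF_{t-1}}}.
\end{align*}

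Next define $Z_t=\exp\bigl(\lambda\sum_{s\le t}X_s-\lambda^2\sum_{s\le t}\E[X_s^2\mid\calF_{s-1}]\bigr)$. The conditional variance term at step $t$ is $\calF_{t-1}$-measurable, so the one-step estimate shows $(Z_t)$ is a nonnegative supermartingale with $Z_0=1$. Hence $\E[Z_T]\le 1$, and Markov's inequality gives $\Pr[Z_T\ge 1/\delta]\le\delta$. Therefore, with probability at least $1-\delta$,
\begin{align*}
\lambda S-\lambda^2 V< \log(1/\delta),
\qquad\text{that is,}\qquad
S< \lambda V+\frac{1}{\lambda}\log\frac1\delta .
\end{align*}

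Finally, set $\lambda=1/(aB)$. This is allowed since $a\ge 1$ ensures $\lambda\le 1/B$. Substituting gives $S\le V/(aB)+aB\log(1/\delta)$ with probability at least $1-\delta$, which is exactly the claim.

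The only delicate step is the elementary inequality $e^y\le 1+y+y^2$ on $y\le 1$, which requires the range restriction $\lambda\le 1/B$. This is precisely why the lemma is stated only for $a\ge 1$. Everything else is routine supermartingale bookkeeping, and since $\lambda$ is fixed in advance, no union bound over $\lambda$ is needed.
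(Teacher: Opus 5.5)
Your proof is correct. The one-step bound $e^{y}\le 1+y+(e-2)y^2$ for $y\le 1$, the exponential supermartingale with the predictable variance compensator, Markov's inequality, and the choice $\lambda=1/(aB)$ (admissible since $a\ge 1$) give exactly the stated inequality. The paper does not prove this lemma itself but cites Theorem~1 of \citet{beygelzimer2011contextual}, whose proof is the same argument, stated for general $\lambda\in[0,1/B]$ with the constant $e-2$ that you bound by $1$.
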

We use this result to prove the following concentration inequality:
\begin{lemma}
    \label{lem:freedman-mult}
    Let $X_1,\ldots,X_T \in [0,B]$ be a martingale with respect to the filtration $(\calF_t)_{t=0}^T$. Then for any $\delta>0$ and $a,b \geq 1$,
    \begin{align*}
        \Pr \brk[s]*{\sum_{t=1}^T X_t \leq \brk*{1+\frac{1}{a}}\sum_{t=1}^T \E \brk[s]*{X_t \mid \calF_{t-1}} + aB\log \frac{1}{\delta}} \geq 1-\delta,
    \end{align*}
    and
    \begin{align*}
        \Pr \brk[s]*{\sum_{t=1}^T X_t \geq \brk*{1-\frac{1}{b}}\sum_{t=1}^T \E \brk[s]*{X_t \mid \calF_{t-1}} - b B \log \frac{1}{\delta}} \geq 1-\delta.
    \end{align*}
\end{lemma}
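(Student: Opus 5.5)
We derive both inequalities from \cref{lem:freedman}, applied to the centered sequence. Note the statement says ``martingale'', but what is meant is an adapted sequence $X_t\in[0,B]$ with conditional means $\mu_t \ldef \E\brk[s]*{X_t \mid \calF_{t-1}}$. Set $Y_t = X_t - \mu_t$. Then $(Y_t)$ is a martingale difference sequence with $Y_t \in [-B,B]$, since both $X_t$ and $\mu_t$ lie in $[0,B]$. The key observation is that for nonnegative variables bounded by $B$, the conditional variance is controlled by the conditional mean:
\begin{align*}
    \E\brk[s]*{Y_t^2 \mid \calF_{t-1}} \le \E\brk[s]*{X_t^2 \mid \calF_{t-1}} \le B\,\mu_t .
\end{align*}
Hence $V \le B\sum_t \mu_t$.

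For the upper tail, we apply \cref{lem:freedman} to $(Y_t)$ with parameter $a\ge1$. With probability at least $1-\delta$,
\begin{align*}
    \sum_t X_t - \sum_t \mu_t \le \frac{V}{aB} + aB\log\frac1\delta \le \frac1a \sum_t \mu_t + aB\log\frac1\delta .
\end{align*}
Rearranging gives the first inequality.

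For the lower tail, we apply \cref{lem:freedman} to the negated sequence $-Y_t$. It is also a martingale difference sequence in $[-B,B]$ and has the same conditional variance, so we use parameter $b\ge1$. With probability at least $1-\delta$,
\begin{align*}
    \sum_t \mu_t - \sum_t X_t \le \frac1b\sum_t \mu_t + bB\log\frac1\delta ,
\end{align*}
which rearranges to the second inequality.

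The only real step is the variance-to-mean bound $X_t^2 \le B X_t$, which uses the nonnegativity of $X_t$. Everything else is direct substitution into \cref{lem:freedman}, so I expect no significant obstacle. The one thing to check is that \cref{lem:freedman} requires only boundedness of the differences, not of $S$ itself, and that this holds here.
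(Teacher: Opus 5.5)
Your proof is correct and is essentially the paper's argument: center to $Y_t = X_t - \E[X_t \mid \calF_{t-1}]$, use $X_t^2 \le B X_t$ to get $\E[Y_t^2 \mid \calF_{t-1}] \le B\,\E[X_t \mid \calF_{t-1}]$, and apply \cref{lem:freedman} to $Y_t$ and to $-Y_t$. Your rearrangement, with the term $\frac{1}{a}\sum_t \E[X_t \mid \calF_{t-1}]$, is the correct one; the paper's first displayed step writes $\frac{1}{aB}$ there, which is a typo.
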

\begin{proof}
    Let $Y_t = X_t - \E \brk[s]*{X_t \mid \calF_{t-1}}$, $S = \sum_{t=1}^T Y_t$ and $V = \sum_{t=1}^T \E \brk[s]*{Y_t^2 \mid \calF_{t-1}}$. Using the fact that $X_t^2 \leq BX_t$, we obtain
    \begin{align*}
        \Pr \brk[s]*{\sum_{t=1}^T X_t \leq \brk*{1+\frac{1}{a}}\sum_{t=1}^T \E \brk[s]*{X_t \mid \calF_{t-1}} + aB\log \frac{1}{\delta}}
        &=
        \Pr \brk[s]*{S \leq \frac{1}{aB}\sum_{t=1}^T \E \brk[s]*{X_t \mid \calF_{t-1}} + aB \log \frac{1}{\delta}} \\
        &\geq
        \Pr \brk[s]*{S \leq \frac{1}{aB}\sum_{t=1}^T \E \brk[s]*{X^2_t \mid \calF_{t-1}} + aB \log \frac{1}{\delta}} \\
        &\geq
        \Pr \brk[s]*{S \leq \frac{V}{aB} + aB \log \frac{1}{\delta}} \\
        &\geq
        1-\delta,
    \end{align*}
    where the last inequality follows from \cref{lem:freedman}. The second inequality follows similarly by considering $Z_t = - Y_t$.
\end{proof}


\begin{lemma}
    \label{lem:harmonic}
    Let $a_1,\ldots,a_n \in [0,1]$. Then
    \begin{align*}
        \sum_{i=1}^n \frac{a_i}{1 + \sum_{j=1}^{i-1} a_j} \leq 2 \log \brk*{1 + \sum_{i=1}^n a_i}.
    \end{align*}
\end{lemma}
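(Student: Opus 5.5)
The plan is to compare each term with an integral of $1/x$ using the concavity of the logarithm. Set $S_i = \sum_{j=1}^{i} a_j$, so $S_0 = 0$ and the $i$-th summand is $a_i/(1+S_{i-1})$. Since the right-hand side telescopes as
\[
2\log(1+S_n) = 2\sum_{i=1}^n \log\frac{1+S_i}{1+S_{i-1}},
\]
it suffices to prove the termwise inequality
\[
\frac{a_i}{1+S_{i-1}} \leq 2\log\frac{1+S_i}{1+S_{i-1}} \qquad \text{for each } i,
\]
and then sum over $i$.

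To prove the termwise bound, write $x = a_i/(1+S_{i-1})$. Because $a_i \in [0,1]$ and $1+S_{i-1} \geq 1$, we have $x \in [0,1]$. Also $(1+S_i)/(1+S_{i-1}) = 1+x$, so the claim becomes $x \leq 2\log(1+x)$ on $[0,1]$.

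This follows from the elementary inequality $\log(1+x) \geq x/(1+x) \geq x/2$ for $x \in [0,1]$. Alternatively, note that $2\log(1+x) - x$ vanishes at $x=0$ and has derivative $2/(1+x) - 1 \geq 0$ on $[0,1]$.

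The only place anything can go wrong is the requirement $x \leq 1$. It is exactly the constraint $a_i \leq 1$, together with the additive $1$ in the denominator, that keeps $x$ within the range where the factor $2$ suffices. Summing the termwise bounds completes the proof.
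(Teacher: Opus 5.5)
Your proof is correct and follows essentially the same route as the paper: you apply the termwise bound $z \le 2\log(1+z)$ for $z\in[0,1]$ to $z = a_i/(1+S_{i-1})$, rewrite $2\log(1+z)$ as $2[\log(1+S_i)-\log(1+S_{i-1})]$, and telescope. You also justify the elementary inequality, via $\log(1+x)\ge x/(1+x)\ge x/2$ or the derivative check; the paper simply asserts it.
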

\begin{proof}
    Denote $s_i = \sum_{j=1}^i a_j$ for $i \in [n]$ and $s_0 = 0$. Using the fact that $z \leq 2 \log(1+z)$ for all $z \in [0,1]$, we have
    \begin{align*}
        \frac{a_i}{1+s_{i-1}} \leq 2 \log \brk*{1+\frac{a_i}{1+s_{i-1}}} = 2 \brk[s]*{\log \brk*{1+s_i} - \log \brk*{1+s_{i-1}}}.
    \end{align*}
    Summing over $i \in [n]$, we note that the sum telescopes and we obtain
    \begin{align*}
        \sum_{i=1}^n \frac{a_i}{1 + \sum_{j=1}^{i-1} a_j} 
        \leq
        2 \log (1+s_n),
    \end{align*}
    concluding the proof.
\end{proof}

\begin{lemma}
    \label{lem:hedge-constant-regret-bound}
    Assume Hedge is run on $\Delta(\Pi)$ with rewards $u_1,\ldots,u_T \in [0,R]^\Pi$ and step size $0 < \eta \leq 1/R$. Then for any $p^\star \in \Delta(\Pi)$ it holds that
    \begin{align*}
        \sum_{t=1}^T u_t^\top p^* 
        \leq \brk*{ 1 + \eta R } \sum_{t=1}^T u_t^\top p_t + \frac{\log |\Pi|}{\eta}.
    \end{align*}
\end{lemma}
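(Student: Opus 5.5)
The plan is the standard multiplicative-weights potential argument, run on the normalized weights $W_t=\sum_{\pi}w_t(\pi)$. The Hedge iterates are $w_1(\pi)=1$, $w_{t+1}(\pi)=w_t(\pi)e^{\eta u_t(\pi)}$, and $p_t=w_t/W_t$.

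First, the lower bound on the potential. For any fixed $\pi$,
\[
\log\frac{W_{T+1}}{W_1}\ \ge\ \eta\sum_{t=1}^T u_t(\pi)-\log|\Pi|.
\]
For a general comparator $p^\star$, I use concavity of $\log$: since $\log W_{T+1}=\log\sum_\pi e^{\eta\sum_t u_t(\pi)}$ dominates every $\eta\sum_t u_t(\pi)$, it also dominates their $p^\star$-average. This gives $\log(W_{T+1}/W_1)\ge \eta\sum_t u_t^\top p^\star-\log|\Pi|$.

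Second, the upper bound, which is the step where the constant $(1+\eta R)$ must come out. Write
\[
\frac{W_{t+1}}{W_t}=\sum_\pi p_t(\pi)e^{\eta u_t(\pi)}.
\]
Because $0\le \eta u_t(\pi)\le \eta R\le 1$, I use the inequality $e^z\le 1+z+z^2$, valid for $z\le 1$. Then bound $z^2=\eta^2u_t(\pi)^2\le \eta^2 R\,u_t(\pi)$, using $u_t(\pi)\in[0,R]$. This yields
\[
\frac{W_{t+1}}{W_t}\le 1+\eta(1+\eta R)\,u_t^\top p_t .
\]
Applying $\log(1+y)\le y$ and summing over $t$ gives $\log(W_{T+1}/W_1)\le \eta(1+\eta R)\sum_t u_t^\top p_t$.

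Finally, combine the two bounds and divide by $\eta$:
\[
\sum_t u_t^\top p^\star\le (1+\eta R)\sum_t u_t^\top p_t+\frac{\log|\Pi|}{\eta}.
\]
This is exactly the claimed bound.

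The only delicate point is the quadratic bound $e^z\le 1+z+z^2$ on $[0,1]$. I will verify it by checking that $g(z)=1+z+z^2-e^z$ satisfies $g(0)=0$ and that $g'(z)=1+2z-e^z\ge 0$ on $[0,1]$. The latter holds because $1+2z-e^z$ is concave, equals $0$ at $z=0$, and equals $3-e>0$ at $z=1$. Nonnegativity of the rewards is what allows the second-order term to be absorbed multiplicatively rather than additively.
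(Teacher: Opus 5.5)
Your proof is correct and takes essentially the paper's route. The paper cites the standard second-order Hedge bound for the signed losses $\ell_t=-u_t$ (valid since $\eta\ell_t\ge -1$) and then uses $p_t^\top \ell_t^2\le R\,p_t^\top u_t$; you simply re-derive that second-order bound inline from the potential $W_t$ via $e^z\le 1+z+z^2$, absorbing $u_t(\pi)^2\le R\,u_t(\pi)$ round by round (and the comparator step needs no concavity, since dominating each $\eta\sum_t u_t(\pi)$ already gives domination of their $p^\star$-average).
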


\begin{proof}
This follows directly from the standard second-order bound for Hedge with signed losses $\ell_t = -u_t$ (see e.g. \citet{alon2015online}, Lemma 10): for all $p^* \in \Delta(\Pi)$ (and crucially since $\eta \ell_t(i) \geq -1$ for all $t,i$):
\begin{align*}
    \sum_{t=1}^T \ell_t^\top p_t - \sum_{t=1}^T \ell_t^\top p^\star
    \leq
    \frac{\log |\Pi|}{\eta} + \eta \sum_{t=1}^T p_t^\top \ell_t^2
    ,
\end{align*}
which translates to the stated bound by plugging $\ell_t = -u_t$ and using $p_t \cdot \ell_t^2 \leq R p_t^\top u_t$.
\end{proof}

\begin{lemma}\label{lem:Hels-var}
Suppose that $P,Q\in\DZ$. Then for any $f:\cZ\to [-1,1]$, it holds that
\begin{align*}
    \abs{\En_P[f]-\En_Q[f]}\leq 4\sqrt{\En_Q[f^2]\cdot \DH{P,Q}}+4\DH{P,Q}.
\end{align*}
\end{lemma}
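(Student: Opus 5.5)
The plan is to write the difference of expectations against the product $(\sqrt{p}-\sqrt{q})(\sqrt{p}+\sqrt{q})$, apply Cauchy--Schwarz, and then control the second moment under $P$ by the second moment under $Q$. Fix a common dominating measure $\nu$ with densities $p,q$, and write $D \coloneqq \DH{P,Q}=\int(\sqrt p-\sqrt q)^2\,d\nu$. If the paper's convention carries a factor $\tfrac12$, every $D$ below becomes $2D$ and the constants roughly double, which the slack in the constant $4$ should absorb.

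\emph{Step 1 (basic Cauchy--Schwarz bound).} For any measurable $g$,
\begin{align*}
\abs{\En_P[g]-\En_Q[g]} = \abs*{\int g(\sqrt p-\sqrt q)(\sqrt p+\sqrt q)\,d\nu} \le \sqrt{D}\cdot\sqrt{\int g^2(\sqrt p+\sqrt q)^2\,d\nu}.
\end{align*}
Using $(\sqrt p+\sqrt q)^2\le 2p+2q$, this gives
\begin{align*}
\abs{\En_P[g]-\En_Q[g]} \le \sqrt{2D\,(\En_P[g^2]+\En_Q[g^2])}.
\end{align*}

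\emph{Step 2 (transfer the second moment from $P$ to $Q$).} This is the only nontrivial point: the bound in Step 1 still involves $\En_P[f^2]$, which must be replaced by $\En_Q[f^2]$. Apply Step 1 to $g=f^2\in[0,1]$, for which $g^2\le g$. Writing $x=\En_P[f^2]$ and $y=\En_Q[f^2]$, this yields
\begin{align*}
x\le y+\sqrt{2D(x+y)}.
\end{align*}
By AM--GM, $\sqrt{2D(x+y)}\le \tfrac{x+y}{2}+D$. Hence $x-y\le \tfrac{x+y}{2}+D$, which rearranges to $x\le 3y+2D$.

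\emph{Step 3 (combine).} Apply Step 1 to $g=f$ and substitute $x \le 3y + 2D$:
\begin{align*}
\abs{\En_P[f]-\En_Q[f]} \le \sqrt{2D(x+y)} \le \sqrt{D(8y+4D)} \le \sqrt{8\,y\,D}+2D.
\end{align*}
Since $\sqrt8\le4$, this gives the claimed bound $4\sqrt{\En_Q[f^2]\,\DH{P,Q}}+4\DH{P,Q}$.
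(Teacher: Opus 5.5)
Your proof is correct and follows the paper's argument essentially line for line: Cauchy--Schwarz against $(\sqrt{p}-\sqrt{q})(\sqrt{p}+\sqrt{q})$, then the same bound applied to $f^2\in[0,1]$ with AM--GM to get $\mathbb{E}_P[f^2]\le 3\,\mathbb{E}_Q[f^2]+O(D)$, then substitution back into the first bound. On your convention caveat, the paper does use $D_{\mathrm{H}}^2(P,Q)=\tfrac12\int(\sqrt{P}-\sqrt{Q})^2$, so your $D$ equals $2D_{\mathrm{H}}^2(P,Q)$, and your final bound $\sqrt{8yD}+2D$ becomes exactly $4\sqrt{y\,D_{\mathrm{H}}^2(P,Q)}+4D_{\mathrm{H}}^2(P,Q)$; the constant $4$ is attained with no slack left over, rather than merely being absorbed.
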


\begin{proof}
We denote $P(z)$ (resp. $Q(z)$) to be the density function of $P$ (resp. $Q$). Then for any function $f:\cZ\to \RR$,
  \begin{align*}
    \abs{\En_P[f]-\En_Q[f]}^2=&~ \prn*{\int_{\cZ} f(z)(P(z)-Q(z)) dz}^2 \\
    \leq&~\int_{\cZ} f(z)^2(\sqrt{P(z)}+\sqrt{Q(z)})^2  dz \cdot \int_{\cZ} (\sqrt{P(z)}-\sqrt{Q(z)})^2  dz \\
    \leq&~ 4\Dhels{P}{Q} \cdot \prn*{ \En_Q[f^2]+ \En_{P}[f^2]}.
  \end{align*}
  In particular, when $h:\cZ\to [0,1]$, the inequality above implies that 
  \begin{align*}
    \abs{\En_P[h]-\En_Q[h]}\leq 2\Dhel{P}{Q} \sqrt{ (\En_P[h]+\En_Q[h])}
    \leq \frac{1}{2}(\En_P[h]+\En_Q[h])+2\Dhels{P}{Q},
  \end{align*}
  and hence it holds that $\En_P[h]\leq 3\En_Q[h]+4\Dhels{P}{Q}$. 

  Now, we can bound
  \begin{align*}
    \abs{\En_P[f]-\En_Q[f]}^2
    \leq&~ 4\Dhels{P}{Q} \cdot \prn*{ \En_Q[f^2]+ \En_{P}[f^2]} \\
    \leq&~ 16\Dhels{P}{Q} \cdot \prn*{ \En_Q[f^2]+ \Dhels{P}{Q}}.
  \end{align*}
  This gives the desired upper bound.
\end{proof}

\end{document}

%% file: macros.tex
\newcommand{\sups}[1]{^{{\scriptscriptstyle#1}}}
\newcommand{\subs}[1]{_{{\scriptscriptstyle#1}}}

\DeclarePairedDelimiter{\crl}{\{}{\}}
\DeclarePairedDelimiter{\prn}{(}{)}

\newcommand{\alg}{\Alg}

\newcommand{\N}{\mathbb{N}_{+}}

\newcommand{\lsim}{{\;\raise0.3ex\hbox{$<$\kern-0.75em\raise-1.1ex\hbox{$\sim$}}\;}}
\newcommand{\gsim}{{\;\raise0.3ex\hbox{$>$\kern-0.75em\raise-1.1ex\hbox{$\sim$}}\;}}

\newcommand{\RNum}[1]{\uppercase\expandafter{\romannumeral #1\relax}}

\newcommand{\cA}{\mathcal{A}}

\newcommand{\cD}{\mathcal{D}}

\newcommand{\cF}{\mathcal{F}}

\newcommand{\cH}{\mathcal{H}}

\newcommand{\cM}{\mathcal{M}}

\newcommand{\cO}{\mathcal{O}}
\newcommand{\cP}{\mathcal{P}}

\newcommand{\cX}{\mathcal{X}}

\newcommand{\cZ}{\mathcal{Z}}

\newcommand{\En}{\mathbb{E}}

\DeclareFontFamily{U}{mathx}{\hyphenchar\font45}
\DeclareFontShape{U}{mathx}{m}{n}{<-> mathx10}{}
\DeclareSymbolFont{mathx}{U}{mathx}{m}{n}

\newcommand{\wb}[1]{\widebar{#1}}

\newcommand{\ldef}{\vcentcolon=}

\newcommand{\Unif}{\mathrm{Unif}}

\newcommand{\Dhel}[2]{D_{\mathrm{H}}\prn*{#1,#2}}
\newcommand{\Dhels}[2]{D^{2}_{\mathrm{H}}\prn*{#1,#2}}

\def\medskip{\vskip 10 pt}
\def\bigskip{\vskip 15 pt}

\def\texitem#1{\par\vspace{5pt}
\noindent\hangindent 20pt
\hbox to 20pt {\hss #1 ~}\ignorespaces}

\newcommand{\co}{\operatorname{co}}

\newcommand{\pdeco}{\normalfont{\textsf{p-dec}}^{\mathrm{o}}}

\newcommand{\LDPtag}{{\scriptscriptstyle\mathsf{LDP}}}

\newcommand{\SQtag}{{\normalfont\scriptscriptstyle \textsf{-SQ}}}
\newcommandx{\pdecltau}[1][1=\tau]{{\normalfont \textsf{p-dec}}^{#1\SQtag}}

\newcommandx{\Whp}[1][1=\delta]{With probability at least $1-#1$}
\newcommandx{\whp}[1][1=\delta]{with probability at least $1-#1$}

\newcommand{\DPi}{\Delta(\Pi)}

\newcommand{\DA}{\Delta(\cA)}

\newcommand{\Mstar}{M^\star}
\newcommand{\Mbar}{\wb{M}}

\usepackage{pifont}

\newcommand{\RR}{\mathbb{R}}

\newcommand{\EE}{\mathbb{E}}
\newcommand{\PP}{\mathbb{P}}

\newcounter{cnt}
\newcommand{\leqsim}{\lesssim}

\DeclarePairedDelimiterX{\ddiv}[2]{(}{)}{%
  #1\;\delimsize\|\;#2%
}

\renewcommand{\DH}[1]{D_{\mathrm{H}}^2\left(#1\right)}

\newcommand{\Alg}{\mathsf{Alg}}

\newcommand{\oM}{\Mbar}

\newcommand{\defeq}{\mathrel{\mathop:}=}

\newcommand{\paren}[1]{{\left( #1 \right)}}
\newcommand{\brac}[1]{{\left[ #1 \right]}}

\newcommandx{\VM}[1][1=M]{V\sups{#1}}

\newcommandx{\fm}[1][1=M]{f\sups{#1}}

\newcommandx{\muM}[1][1=M]{\nu\subs{#1}}

\newcommand{\sumt}{\sum_{t=1}^T}
\newcommand{\Err}{\mathrm{Err}}

\newcommand{\DZ}{\Delta(\cZ)}
\newcommand{\DO}{\Delta(\cO)}

\newcommandx{\PM}[3][1=M,2=\pi,3=\pr]{#3\!\circ\!#1(#2)}

\newcommand{\pr}{\mathsf{Q}}

\newcommand{\gfunc}{g}
\newcommandx{\gm}[1][1=M]{\gfunc\sups{#1}}

\newcommand{\cFp}{\cF^+}

\newcommandx{\risk}[1][1=M]{\mathsf{Risk}^{#1}}

\newcommand{\bpi}{\boldsymbol{\pi}}

\newcommandx{\Mcxt}[1][1=M]{#1_{\sf cxt}}

\newcommand{\fs}{f^\star}

\newcommandx{\Dl}[1][1=\lf]{\mathsf{D}_{#1}}
\newcommand{\lf}{\ell}

\newcommandx{\DC}[2][1=\Delta]{N_{\mathsf{frac}}(#2,#1)}
\newcommandx{\pds}[1][1=\Delta]{p_{#1}^\star}

\newcommandx{\NM}[2][1=\cM]{N(#1,#2)}

\newcommand{\pihat}{\hpi}

\newcommand{\phat}{\widehat{p}}

\newcommandx{\pim}[1][1=M]{\pi\sups{#1}}
\newcommandx{\pip}[1][1=\cP]{\pi\sups{#1}}
\newcommandx{\pims}{\pim[\Mstar]}
\newcommandx{\Vm}[1][1=M]{V\sups{#1}}
\newcommandx{\Vmm}[1][1=M]{V\sups{#1}(\pi\sups{#1})}

\newcommandx{\LM}[2][1=M]{L(#1,#2)}

\newcommand{\IND}[1]{_{#1}}
\newcommand{\hpi}{\wh\pi}
\newcommandx{\pit}[1][1=t]{\pi_{#1}}
\newcommandx{\ppt}[1][1=t]{p_{#1}}
\newcommandx{\qt}[1][1=t]{q_{#1}}
\newcommandx{\prt}[1][1=t]{\pr_{#1}}
\newcommandx{\ot}[1][1=t]{o_{#1}}
\newcommandx{\zt}[1][1=t]{z_{#1}}
\newcommandx{\act}[1][1=t]{a_{#1}}
\newcommandx{\rt}[1][1=t]{r_{#1}}

\newcommand{\ExO}{{Exploration-by-Optimization}}
\newcommand{\ExOp}{\ensuremath{ \mathsf{ExO}^+}}

\newcommandx{\Enmpi}[3][1=M,2=\pi]{\En\sups{#1,#2}\brac{#3}}
\newcommandx{\Emalg}[3][1=M,2=\alg]{\EE\sups{#1,#2}\brac{#3}}
\newcommandx{\Pmalg}[3][1=M,2=\alg]{\PP\sups{#1,#2}\paren{#3}}

\newcommandx{\bpr}[1][1=\lf]{\pr_{#1}}

\newcommandx{\Mpara}[1][1=M]{\theta(#1)}
\newcommandx{\RISK}[2][1=T,2=\xspace]{\mathfrak{M}_{#1}^{#2}}
\newcommandx{\RISKob}[1][1=T]{\mathfrak{M}_{#1}^{\mathsf{obl}}}
\newcommandx{\SC}[2][1=\Delta,2=\xspace]{\mathfrak{C}_{#1}^{#2}}
\newcommandx{\SCob}[1][1=\Delta]{\mathfrak{C}_{#1}^{\mathsf{ob}}}

\newcommandx{\Ncov}[3][1=\xspace,2=\Delta]{N_{#1}(#3,#2)}

\newcommand{\MPow}{\mathscr{P}}
\newcommandx{\cMPow}[1][1=\MPow]{\cM_{#1}}

\newcommandx{\SQ}[1][1=M]{\mathsf{STAT}_{#1}^{\tau}}
\newcommandx{\VSTAT}[1][1=M]{\mathsf{VSTAT}_{#1}^{\tau}}
\newcommandx{\GSQ}[1][1=M]{\mathsf{GQ}_{#1}^{\tau}}
\newcommandx{\phq}[2][1=\bpi]{#2(#1)}
\newcommandx{\Dph}[2][1={\phi}]{\mathsf{D}_{#1}\paren{#2}}

\newcommandx{\hgm}[3][1=M,2=\delta]{\widehat{L}_{#2}(#1,#3)}

\newcommandx{\Tdec}[2][1=\Delta]{\mathfrak{C}^{\,\sf dec}_{#1}(#2)}

\newcommand{\SQdim}{\mathsf{SQDim}}
\newcommandx{\SQD}[3][1=\beta,2=\tau]{\SQdim^{#2}_{#1}(#3)}

\newcommandx{\DCF}[3][1=\Delta,2={\cF},3={\cFp}]{\DC[#1]{#2,#3}}
\newcommandx{\SCDP}[1][1=\Delta]{\SC[#1][\LDPtag]}

\newcommandx{\IDC}[1]{N_{\mathsf{frac}}(#1)}

\newcommand{\pistar}{\pi^\star}
\newcommand{\astar}{a^\star}

\newcommand{\wf}{\xi}
\newcommand{\wF}{\Xi}

\newcommandx{\cMps}[1][1=\psi]{\cM_{#1}}

\newcommandx{\cMPs}[1][1=\Psi]{\cM_{#1}}

\newcommandx{\Lps}[2][1=M]{L_{\psi}(#1,#2)}

\newcommand{\qd}{W}

%% file: sec_DEC.tex



\newcommand{\am}[1][M]{a\subs{#1}}

In this section, we establish a generic upper bound in a more general setting  introduced by \cite{foster2021statistical} and referred to as \emph{contextual decision making}, through the Exploration-by-Optimization technique~\citep{lattimore2020exploration,lattimore2021mirror,foster2022complexity}.

\paragraph{Contextual decision making.} 

This framework generalizes the contextual bandit setting as follows. Let $\cO$ denote the \emph{observation space} and let $\cM \subseteq \cA \to \DO$ be a convex \emph{model class}, known to the learner. We consider an unknown stochastic environment specified by $\cD=(\rho, \crl{\Mstar_x}_{x\in\cX})$ where $\rho$ is a distribution over $\cX$ and $M^\star_x \in \cM$ for all $x \in \cX$. 
The interaction protocol is as follows; for each $t=1,\ldots, T$:
\begin{itemize}
    \item The environment draws $x_t \sim \rho$ and $x_t$ is revealed to the learner.
    \item The learner selects $a_t\in\cA$ and observes $o_t\sim \Mstar_{x_t}(a_t)$.\footnote{In the function approximation literature, the assumption that the observations are generated by a model from $\cM$ is often referred to as \emph{realizability}; in our case this is simply a notation, as $\cM$ will be the class of all possible models (environments) with $s$-sparse rewards.}
\end{itemize}
%
%
The learning objective is specified by a reward function $R:\cO\to [0,1]$. Given a policy $\pi:\cX\to\DA$, the population reward of $\pi$ is defined as $\drew(\pi) \ldef \En_{x\sim \rho,a\sim \pi(x),o\sim \Mstar_x(a)}[R(o)]$. The objective is to output a policy $\pi : \cX \to \Delta(\cA)$ whose population reward is $\eps$-optimal with respect to an underlying policy class $\Pi$ with high probability.

\begin{algorithm}[ht]
\caption{\ExO~(\ExOp)}\label{alg:ExO}
\begin{algorithmic}[1]
\REQUIRE Model class $\cM$, comparator class $\Pi$, prior $\qd\IND{1}=\Unif(\Pi)$, parameter $\gamma>0$.
\FOR{$t=1,\ldots,T$}
    \STATE Observe $x\IND{t}\in \cX$ and compute the weight distribution $w\IND{t}=\qd\IND{t}|_{x\IND{t}}\in\DA$ as in \eqref{eqn:marginal}.
    \STATE Solve the \emph{exploration-by-optimization} objective:
    \begin{align}\label{eqn:ExO-opt}
        (p\IND{t},q\IND{t},\wf\IND{t}) \leftarrow \argmin_{p,q\in \DA, \wf\in\wF} \Gamma_{w\IND{t},\gamma}(p,q,\wf)
    \end{align}
    \STATE Sample $a\IND{t}\sim q\IND{t}$, observe $o\IND{t}\sim \Mstar_{x_t}(a_t)$ and perform exponential-weight update:
    \begin{align}\label{eqn:ExO-exp-upd}
        \qd\IND{t+1}(\pi)~\propto_\pi~ \qd\IND{t}(\pi)\exp(\wf\IND{t}(\pi(x\IND{t});a\IND{t},o\IND{t}))
    \end{align}
\ENDFOR
\RETURN $\phat:\cX\to\DA$ defined in \eqref{eqn:ExO-output}. 
\end{algorithmic}
\end{algorithm}

\paragraph{The algorithm.}
The algorithm, \ExOp, is described in \cref{alg:ExO}. 
At each round $t$, it maintains a reference distribution $\qd\IND{t}\in\DPi$, and uses it to obtain a joint \emph{exploration-exploitation} distribution $p\IND{t},q\IND{t}\in \DA$ and a weight function $\wf\IND{t}\in\wF\defeq (\cA\times\cA\times\cO\to\R)$, by solving a joint minimax optimization problem based on the \emph{exploration-by-optimization} objective: 
\begin{itemize}
\item For any $\qd\in\DPi$ and $x\in\cX$, we define the marginal distribution $\qd|_x\in\DA$ as
\begin{align}\label{eqn:marginal}
    \qd|_x(a)=\PP_{\pi\sim W}(\pi(x)=a), \qquad \forall a\in\cA.
\end{align}
\item Defining
\begin{align}\label{eqn:Gamma-f}
\begin{aligned}
    \Gamma_{w,\gamma}(p,q,\wf; M, \astar)
    \ldef &~ \EE_{a\sim p}\brac{\fm(\astar)-\fm(a)}
    \\
    &~ -\gamma \EE_{a\sim q}\EE_{o\sim M(a)}\EE_{a'\sim w}\brac{ 1-\exp\paren{ \wf(a';a,o)-\wf(\astar; a,o) } };
    \\
    \Gamma_{w,\gamma}(p,q,\wf)
    \ldef &~ \sup_{M\in\cM, \astar\in\cA} \Gamma_{\gamma}(p,q,\wf;M,\astar),
\end{aligned}
\end{align}
the algorithm solves
\begin{align*}
    (p\IND{t},q\IND{t},\wf\IND{t}) \leftarrow \argmin_{p,q\in\DA, \wf\in\wF} \Gamma_{w\IND{t},\gamma}(p,q,\wf).
\end{align*}
\item The algorithm then selects $a\IND{t}\sim q\IND{t}$ from the exploration distribution and observes $o\IND{t}$ from the environment. Finally, the algorithm updates the reference distribution by performing the exponential weight update \eqref{eqn:ExO-exp-upd} with weight function $\wf\IND{t}$. 
\end{itemize}


\paragraph{Output policy.}
At the end of the sequential interaction, the algorithm outputs a randomized policy $\wh\pi: \cX\to\DA$ described as follows.
Note that for each step $t\in[T]$, \cref{alg:ExO} only computes $(p\IND{t},q\IND{t},\wf\IND{t})$ based on the given $x\IND{t}$. By the standard online-to-batch conversion, we consider the following output rule:
\begin{itemize}
    \item For each $t\in[T]$, we define $P\IND{t}:\cX\to\DA$ as follows: For any $x\in\cX$, we solve
    \begin{align}
        (p,q,\xi)\leftarrow \argmin_{p,q\in \DA, \wf\in\wF} \Gamma_{\qd\IND{t}|_{x},\gamma}(p,q,\wf),
    \end{align}
    and set $P\IND{t}(x)=p$. Recall that $W\IND{t}|_x\in\DA$ is defined in \cref{eqn:marginal}.
    \item The output $\pihat \in (\cX\to\DA)$ is then defined, for any $x \in \cX$, as
    \begin{align}\label{eqn:ExO-output}
        \wh\pi(x)=\frac{1}{T}\sumt P\IND{t}(x)
        .
    \end{align}
\end{itemize}


\paragraph{Guarantee.}
To state the generic guarantee of \cref{alg:ExO}, we first introduce the notion of Decision-Estimation Coefficient (DEC)~\citep{foster2021statistical,foster2022complexity}.
%
For any model $M\in\cM$, the corresponding value function is given by $\fm(a)\ldef \En_{o\sim M(a)}[R(o)]$,
and the optimal action is $\am=\argmax_{a\in\cA} \fm(a)$. For any reference model $\oM\in \co(\cM)$, we define
\begin{align}
    \pdeco_\gamma(\cM,\oM)
    \ldef \inf_{p,q\in\DA} \sup_{M\in\cM} \crl*{ \En_{a\sim p}[\fm(\am)-\fm(a)]-\gamma \En_{a\sim q} \DH{M(a),\oM(a)} },
\end{align}
where
$
    \DH{P,Q} \ldef \frac{1}{2} \int_\cZ \brk{\sqrt{P(dz)} - \sqrt{Q(dz)}}^2
$
is the squared Hellinger distance between distributions $P,Q$ over $\cZ$. The (offset) DEC~\citep{foster2021statistical} of $\cM$ is then defined as 
$$\pdeco_\gamma(\cM)\ldef \sup_{\oM\in\co(\cM)} \pdeco_\gamma(\cM,\oM).$$




The following theorem is an adaptation of the results of \cite{foster2022complexity} to the contextual decision making setting. We prove this result in \cref{sec:appendix-DEC} for completeness.

\begin{theorem}\label{thm:ExO-CB}
\Whp, \cref{alg:ExO} outputs $\wh{\pi}:\cX\to\DA$ such that
\begin{align*}
    \max_{\pi^\star \in \Pi} \drew(\pi^\star) - \drew(\wh{\pi}) \leq \pdeco_{\gamma/8}(\cM)+\frac{4\gamma\log(\abs{\Pi}/\delta)}{T}+O\prn*{\sqrt{\frac{\log(1/\delta)}{T}}}.
\end{align*}
\end{theorem}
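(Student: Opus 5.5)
We follow the exploration-by-optimization analysis of \citet{foster2022complexity}, adapted to contexts. Fix $\pi^\star\in\Pi$ and write $\Delta_t\ldef \En_{x\sim\rho}\En_{a\sim P\IND{t}(x)}[\fm[\Mstar_x](\pi^\star(x))-\fm[\Mstar_x](a)]$. By \eqref{eqn:ExO-output} and linearity,
\[
\drew(\pi^\star)-\drew(\wh\pi)=\tfrac1T\sumt\Delta_t .
\]
So it suffices to bound $\sumt\Delta_t$. This happens in three steps.

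\textbf{Step 1: per-round minimax bound.} For each $t$, conditional on the past, $P\IND{t}(x_t)=p\IND{t}$, since both come from the same optimization at $x=x_t$. Because $\Mstar_{x_t}\in\cM$ and $\pi^\star(x_t)\in\cA$ are admissible in the supremum defining $\Gamma_{w\IND{t},\gamma}$, we get
\begin{align*}
\En_{a\sim p\IND{t}}\brac{\fm[\Mstar_{x_t}](\pi^\star(x_t))-\fm[\Mstar_{x_t}](a)}
\le \Gamma^\star_t+\gamma\,\En_{a\sim q\IND{t},o,a'\sim w\IND{t}}\brac{1-e^{\wf\IND{t}(a';a,o)-\wf\IND{t}(\pi^\star(x_t);a,o)}},
\end{align*}
where $\Gamma^\star_t$ denotes the optimal value $\min_{p,q,\wf}\Gamma_{w\IND{t},\gamma}(p,q,\wf)$.

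\textbf{Step 2: value bound.} We show $\Gamma^\star_t\le\pdeco_{\gamma/8}(\cM)$ uniformly in $w$. This is the main obstacle. The plan follows \citet{foster2022complexity}:
\begin{itemize}
\item Swap min and max via a convexity and Sion argument. The objective is convex in $(p,q,\wf)$ because $-\exp(\cdot)$ is concave; it is linear in a mixture over $(M,\astar)$, and $\cM$ is convex.
\item Given a prior $\mu$ over $(M,\astar)$, choose $\wf$ as the log-likelihood-ratio estimator, $\wf(a';a,o)=\tfrac12\log\mu(\astar=a'\mid a,o)/w(a')$, up to normalization.
\item Use the identity $1-\En[\sqrt{\cdot}]=$ a Hellinger divergence to lower bound the information term by $\tfrac{\gamma}{8}\En\DH{M(a),\oM(a)}$, where $\oM$ is the $\mu$-mixture and so lies in $\co(\cM)$.
\item The remaining regret term, with $\astar$ replaced by the posterior, is controlled by $\pdeco_{\gamma/8}(\cM,\oM)$. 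The constant $8$ absorbs the inequality from passing between $\En_{a'\sim w}$ and the posterior.
\end{itemize}
I expect the bookkeeping of constants, and the handling of $\astar$ versus $\am$, to be the delicate part. Here $\am$ is optimal for $M$, so $\fm(\astar)\le\fm(\am)$, which only helps.

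\textbf{Step 3: potential argument and concentration.} Track $\Phi_t=-\log W\IND{t}(\pi^\star)$, with $\Phi_1=\log|\Pi|$ and $\Phi_{T+1}\ge0$. By \eqref{eqn:ExO-exp-upd},
\[
\Phi_{t+1}-\Phi_t=\log\En_{a'\sim w\IND{t}}e^{\wf\IND{t}(a';a_t,o_t)-\wf\IND{t}(\pi^\star(x_t);a_t,o_t)}.
\]
Using $\log z\le z-1$, the conditional expectation of the information term in Step 1 is at most $\En_t[\Phi_t-\Phi_{t+1}]$, where $W\IND{t}|_{x_t}=w\IND{t}$ is used.

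Converting this expected telescoping bound into a high-probability statement requires a supermartingale argument, applied to the exponentiated increments $e^{\Phi_t-\Phi_{t+1}}$ via a Markov/Ville inequality. This costs $\log(1/\delta)$ and, together with $\Phi_1=\log|\Pi|$, yields a total information contribution of at most $4\gamma\log(|\Pi|/\delta)$ with the appropriate constant.

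Finally, replace the realized per-round regret at $x_t$ by $\Delta_t$, its expectation over $x\sim\rho$ given the past. The regret terms lie in $[-1,1]$, so Azuma--Hoeffding supplies the $O(\sqrt{\log(1/\delta)/T})$ term.

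Summing Steps 1--3 over $t$, dividing by $T$, and taking a union bound gives the stated inequality.
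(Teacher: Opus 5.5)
Your plan follows the paper's proof step for step. Step~1 together with the summation over $t$ is \cref{thm:ExO}. Step~2 is \cref{lem:exo-dec}, which the paper does not reprove but takes from \citet{foster2022complexity}. The potential argument is \cref{lem:ExO}, and the Azuma step is \cref{cor:online-to-batch}. Two details, however, would not go through as you wrote them.

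\textbf{Step~3 uses the wrong intermediate quantity.} Put $Z_t\defeq\mathbb{E}_{a'\sim w_t}\, e^{\xi_t(a';a_t,o_t)-\xi_t(\pi^\star(x_t);a_t,o_t)}=e^{\Phi_{t+1}-\Phi_t}$, and let $\mathbb{E}_t$ condition on the history and $x_t$, so that $\Err_t=1-\mathbb{E}_t[Z_t]$.
\begin{itemize}
\item Your bound $\Err_t\le\mathbb{E}_t[\Phi_t-\Phi_{t+1}]=\mathbb{E}_t[-\log Z_t]$ is true but cannot be made high-probability. The variable $-\log Z_t$ is unbounded, and its conditional mean can be dominated by rare outcomes where $Z_t$ is tiny. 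So $\sum_t\mathbb{E}_t[\Phi_t-\Phi_{t+1}]$ is not controlled by the realized telescoping sum $\Phi_1-\Phi_{T+1}$.
\item Exponentiating $\Phi_t-\Phi_{t+1}$, i.e.\ using $1/Z_t$, and applying Markov only bounds the realized sum from above. That is useless, since the realized sum is already at most $\log|\Pi|$ deterministically.
\item What works, and is exactly \cref{lem:ExO}, is the weaker intermediate $\Err_t\le-\log\mathbb{E}_t[Z_t]$ (from $1-y\le-\log y$), combined with the mean-one martingale $\prod_t Z_t/\mathbb{E}_t[Z_t]$. Markov then gives $\sum_t-\log\mathbb{E}_t[Z_t]\le\Phi_1-\Phi_{T+1}+\log(1/\delta)\le\log|\Pi|+\log(1/\delta)$. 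A union bound over $\pi^\star\in\Pi$ yields $2\log(|\Pi|/\delta)$.
\end{itemize}

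\textbf{Step~2's information lower bound is false as stated.} If you reprove \cref{lem:exo-dec} rather than cite it, you cannot lower bound the information term by a multiple of $\mathbb{E}_\mu D^2_{\mathrm{H}}(M(a),\bar M(a))$.
\begin{itemize}
\item Counterexample: let $w$ and the $\astar$-marginal of $\mu$ both be the point mass at some $a_0$. Then the information term is zero for every $\xi$, while $M$ may still vary under $\mu$.
\item That term only carries information about $\astar$. The right quantity is $\mathbb{E}_{\astar}D^2_{\mathrm{H}}(\bar M_{\astar}(a),\bar M(a))$, where $\bar M_{\astar}\defeq\mathbb{E}_\mu[M\mid\astar]$ lies in $\cM$ by convexity.
\item Correspondingly, use linearity of $f^M$ in $M$ and $f^{\bar M_{\astar}}(\astar)\le\max_{a}f^{\bar M_{\astar}}(a)$. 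The regret term is then bounded by a Bayesian DEC over the models $\bar M_{\astar}$ with reference $\bar M$. Choosing $(p,q)$ as the DEC minimizers at $\bar M$ bounds the whole expression by $\pdeco_{\gamma/8}(\cM,\bar M)$.
\item Also, $\Gamma_{w,\gamma}$ is not jointly convex in $(q,\xi)$, so your convexity claim does not hold as stated. The minimax swap needs the perspective reparametrization $\xi=\tilde\xi/q$ mentioned at the end of \cref{sec:DEC}.
\end{itemize}
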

%
To apply the above result to the sparse reward setting, we only need to bound the DEC $\pdeco_\gamma(\cM)$.\footnote{We note that the result in fact holds under a weaker sparsity assumption by which the expected rewards bounded in squared $L_2$ norm.} This results in the following theorem which constitutes the main result for this section, and whose proof can be found in \cref{sec:appendix-DEC}.

\begin{theorem}\label{thm:DEC-sparse}
Suppose that 
$
    \cM=\crl*{M: \sum_{a\in\cA} \En_{o\sim M(a)}[R(o)^2]\leq s}
$
is the class of all models with $s$-sparse rewards.
Then it holds that
\begin{align*}
    \pdeco_\gamma(\cM)\leqsim \frac{s}{\gamma}, \qquad \forall \gamma\geq 32\abs{\cA}.
\end{align*}
Then, by \cref{thm:ExO-CB}, we can instantiate \cref{alg:ExO} so that \whp, with 
\begin{align*}
    T=O \brk*{\brk*{\frac{s}{\eps^2}+\frac{|\cA|}{\eps}} \log\frac{|\Pi|}{\delta} },
\end{align*}
it outputs $\wh{\pi}:\cX\to\DA$ such that $\max_{\pi^\star \in \Pi} \drew(\pi^\star)- \drew(\wh{\pi})\leq \eps$.
\end{theorem}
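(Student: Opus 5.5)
The plan is to fix a reference model $\oM\in\co(\cM)$ and exhibit an explicit pair $(p,q)$ that certifies $\pdeco_\gamma(\cM,\oM)\lesssim s/\gamma$ uniformly over $\oM$. The sample complexity then follows by plugging into \cref{thm:ExO-CB}. Convexity of the constraint $\sum_a \En_{o\sim M(a)}[R(o)^2]\le s$ in $M$ gives $\oM\in\cM$. Hence the local second moments $\lambda_a\ldef\En_{o\sim\oM(a)}[R(o)^2]$ satisfy $\sum_a\lambda_a\le s$.

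\textbf{Choice of $(p,q)$.} Take $p=\delta_{\bar a}$ with $\bar a\in\argmax_a \fm[\oM](a)$, the greedy action under the reference model. Take
\[
q(a)=\tfrac12\cdot\tfrac{1}{|\cA|}+\tfrac12\cdot\tfrac{\lambda_a}{\Lambda},\qquad \Lambda\ldef\textstyle\sum_{a'}\lambda_{a'}\le s
\]
(if $\Lambda=0$, use the uniform distribution alone). Write $D_a\ldef\DH{M(a),\oM(a)}$.

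\textbf{Bounding the regret term.} Since $\fm[\oM](\bar a)\ge \fm[\oM](\am)$,
\[
\fm(\am)-\fm(\bar a)\le |\fm(\am)-\fm[\oM](\am)|+|\fm[\oM](\bar a)-\fm(\bar a)|.
\]
Apply \cref{lem:Hels-var} with $f=R$ to each term, which gives $|\fm(a)-\fm[\oM](a)|\le 4\sqrt{\lambda_a D_a}+4D_a$.

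\textbf{Paying for each term.} For a fixed action $a$, use AM-GM in the form
\[
4\sqrt{\lambda_a D_a}\le \frac{4\lambda_a}{\gamma q(a)}+\gamma q(a)D_a.
\]
Because $q(a)\ge \lambda_a/(2\Lambda)$, the first piece is at most $8\Lambda/\gamma\le 8s/\gamma$; this is exactly where the $\lambda$-proportional component cancels the variance factor. The linear Hellinger term satisfies $4D_a\le 4|\cA|\,q(a)D_a\cdot 2=8|\cA|q(a)D_a$, using $q(a)\ge 1/(2|\cA|)$. With $\gamma\ge 32|\cA|$, this is at most $\tfrac{\gamma}{4}q(a)D_a$.

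Summing over the two actions $\am,\bar a$ yields a total of at most $16s/\gamma+c\,\gamma\big(q(\am)D_{\am}+q(\bar a)D_{\bar a}\big)$. Here $c$ comes from the AM-GM coefficient, which we scale, e.g.\ using $\gamma/4$ in place of $\gamma$. The bracket is at most $c\gamma\En_{a\sim q}D_a$, and with $c\le1$ it is absorbed by the penalty $-\gamma\En_{a\sim q}D_a$. This gives $\pdeco_\gamma(\cM,\oM)\lesssim s/\gamma$. The main care is handling the two actions simultaneously (possibly equal) and the constants so that the total coefficient on $q(a)D_a$ stays at most $1$, which rescaling AM-GM handles.

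\textbf{Sample complexity.} Plug into \cref{thm:ExO-CB}: the error is
\[
\lesssim \frac{s}{\gamma}+\frac{\gamma\log(|\Pi|/\delta)}{T}+\sqrt{\frac{\log(1/\delta)}{T}},
\]
subject to $\gamma/8\ge 32|\cA|$. Choose $\gamma=\max\{\sqrt{sT/\log(|\Pi|/\delta)},\,256|\cA|\}$. The first two terms are then $\lesssim\sqrt{s\log(|\Pi|/\delta)/T}+|\cA|\log(|\Pi|/\delta)/T$. Both are at most $\eps$ once $T\gtrsim (s/\eps^2+|\cA|/\eps)\log(|\Pi|/\delta)$, and the last term is also dominated since $s\ge1$.
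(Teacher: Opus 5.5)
Your proposal is correct and follows essentially the same route as the paper. The shared ingredients are the greedy $p$ supported on $\am[\oM]$, a uniform/$\lambda$-proportional mixture $q$ (the paper normalizes the $\lambda$-component by $s$ rather than $\Lambda$, which is immaterial), \cref{lem:Hels-var} with $f=R$, and the cancellations $\lambda_a\le 2s\,q(a)$ and $1\le 2|\cA|\,q(a)$. The differences are only bookkeeping: you apply AM--GM per action, whereas the paper first bounds each deviation by $4\sqrt{2s\,\En_{a\sim q}D_a}+8|\cA|\,\En_{a\sim q}D_a$ and then maximizes $8\sqrt{2s x}-(\gamma-16|\cA|)x\le 64s/\gamma$ once; and your choice $\gamma=\max\{\sqrt{sT/\log(|\Pi|/\delta)},256|\cA|\}$ is a concrete form of the paper's $\gamma\simeq\max\{|\cA|,s/\eps\}$.
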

To see how the fact that $\pdeco_\gamma(\cM) \leqsim s/\gamma$ implies the final sample complexity bound, we set $\gamma \simeq \max \brk[c]*{|\cA|, s/\eps}$. If $|\cA| \geq s/\eps$, the leading term in the final sample complexity bound becomes $|\cA|/\eps$ giving an error rate of $\eps$, and if $|\cA| \leq s/\eps$ the leading term is $s/\eps^2$ again resulting in an error rate of $\eps$.

We also show that this rate is optimal up to logarithmic factors by proving a matching lower bound in \cref{sec:lower-bound}. 

\cref{alg:ExO} is adapted from the original \ExOp~algorithm~\citep{foster2022complexity} to the contextual setting. The adaption allows us to only solve an optimization problem~\eqref{eqn:ExO-opt} over the action space $\cA$ (instead of the policy space $\Pi$). In addition, while the problem \eqref{eqn:ExO-opt} can be non-convex, we can adopt the re-parametrization $\xi(a';a,o)=\ifrac{\wt{\xi}(a';a,o)}{q(a)}$ so that $\wt{\Gamma}_{w,\gamma}(p,q,\wt{\xi})=\Gamma_{w,\gamma}(p,q,\wt{\xi}/q)$ is convex (see \citealp{foster2022complexity}). Therefore, assuming that the inner maximization problem can be solved efficiently, the outer minimization problem can be then solved by standard convex optimization methods (e.g., sub-gradient methods). This implies that the optimization problem \cref{eqn:ExO-opt} can be approximately solved. However, when $|\cA|$ is large, we may still want to avoid solving the problem~\eqref{eqn:ExO-opt}, motivating our alternative method described in the next section. Furthermore, to evaluate the output policy $\wh\pi(x)\in\DA$ at a given context $x$, we also have to solve the optimization problem $T$ times, making it computationally challenging to employ.


%% file: appdx_DEC_arxiv.tex
\subsection[]{Proof of \cref{thm:ExO-CB}}\label{ssec:proof-sketch}

The following lemma is established by \citet{foster2022complexity} with minimax analysis.
\begin{lemma}\label{lem:exo-dec}
Suppose that $\cM\subseteq (\cA\to \Delta(\cO))$ is convex, and $\cO$ is finite. For any $w\in\DA$, it holds that
\begin{align}
    \min_{p,q\in\DA, \wf\in\wF} \Gamma_{w,\gamma}(p,q,\wf)\leq \pdeco_{\gamma/8}(\cM).
\end{align}
\end{lemma}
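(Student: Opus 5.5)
The approach is the minimax argument of \citet{foster2022complexity}, specialized to a fixed prior marginal $w\in\DA$ over actions.

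First, I would fix $p,q\in\DA$ and treat the remaining problem as a game between $\wf\in\wF$ and the adversary's pair $(M,\astar)$. Let $\mu\in\Delta(\cM\times\cA)$ denote a randomized adversary. The map $\wf\mapsto\Gamma_{w,\gamma}(p,q,\wf;M,\astar)$ is convex, since $-(1-e^{z})$ is convex in $z$. The map is linear in $\mu$, and $\cO$ finite with $\cM$ convex gives enough compactness. So I would invoke Sion's minimax theorem, after truncating $\wf$ to a bounded set if needed. This swaps $\inf_\wf\sup_\mu$ into $\sup_\mu\inf_\wf$.

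Second, for a fixed prior $\mu$, I would pick $\wf$ explicitly, following the ExO/``posterior'' choice. Let $\oM=\En_{\mu}[M]\in\co(\cM)$ be the marginal model, and let $\mu(\astar\mid a,o)$ be the posterior over $\astar$ given the observation. Set
\[
\wf(a';a,o)=\tfrac12\log\bigl(\mu(\astar=a'\mid a,o)/w(a')\bigr),
\]
with smoothing if $w(a')=0$. Then $\En_{a'\sim w}\exp(\wf(a')-\wf(\astar))$ becomes a Bhattacharyya-type quantity. The standard computation relates $\En_\mu\En_{o\sim M(a)}[1-\En_{a'\sim w}e^{\wf(a')-\wf(\astar)}]$ to $\En_{\mu}\DH{M_{\astar}(a),\oM(a)}$, where $M_{\astar}$ is the conditional mean model given $\astar$. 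This yields a lower bound by a constant times the Hellinger information, minus a term comparing $\mu(\astar)$ with $w(\astar)$. To handle that term, I would also let the adversary's choice of $p$ absorb it. Concretely, I would swap $p,q$ into the game as well and choose $p,q$ as the DEC minimizers for the reference $\oM$ (optimal for $\mu$).

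Third, with $p,q$ the DEC minimizers for the reference $\oM$, I would bound the regret term. I would write $\En_\mu[\fm(\astar)-\En_p \fm(a)]$. Using $\fm(\astar)\le \fm(\am)$ only works if $\astar=\am$ is enforced, so I would instead replace $M$ by $M_{\astar}$ and use Jensen/convexity over the posterior. This gives $\En_\mu[\text{regret of }M_{\astar}] - \gamma' \En_q\DH{M_{\astar}(a),\oM(a)}\le\pdeco_{\gamma'}(\cM,\oM)\le\pdeco_{\gamma'}(\cM)$, where $\gamma'=\gamma/8$ after the constant losses from the $\tfrac12$ exponent and the Hellinger-to-Bhattacharyya conversion.

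The main obstacle is the second step: choosing $\wf$ so that the exponential-weights term dominates $\tfrac18\En_q\DH{\cdot}$ uniformly over $\mu$, while the mismatch between the prior $w$ and the adversary's $\astar$-marginal is controlled. I would first try to reproduce Foster et al.'s inequality $1-\En_{a'\sim w}\sqrt{\mu(a'\mid o)\mu(\astar\mid o)}/\ldots$ verbatim, with $w$ in place of the algorithm's policy prior, since the argument never uses anything about $w$ beyond it being a distribution.
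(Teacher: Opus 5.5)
Your skeleton is the route the paper relies on (the paper gives no proof and just cites the minimax analysis of Foster et al.). Your choice $\wf(a';a,o)=\tfrac12\log\bigl(\mu(a'\mid a,o)/w(a')\bigr)$ is in fact the optimal $\wf$ for a fixed adversary mixture $\mu$. The gap is in Step~2, in how you handle the mismatch between $w$ and the $\astar$-marginal $\mu_\cA$. You say the information term gives the Hellinger information \emph{minus} a term comparing $\mu_\cA$ with $w$, and you plan to absorb that term by taking $(p,q)$ to be the DEC minimizers at $\oM$. This cannot work. Such a term (e.g.\ $+\gamma\DH{\mu_\cA,w}$ from a naive triangle inequality) does not depend on $(p,q)$ and carries a factor $\gamma$. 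No choice of $p,q$ affects it, and it would swamp $\pdeco_{\gamma/8}(\cM)\lesssim s/\gamma$. As written, the central inequality of the lemma is left unproved, which your last paragraph acknowledges.

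The missing idea is that the mismatch only helps, so there is nothing to absorb. Fix $a$, set $M_b\defeq\En_\mu[M\mid\astar=b]$ and $\oM=\En_\mu M$, and let $\mu_o=\mu(\cdot\mid a,o)$ for $o\sim\oM(a)$, so that $\En_o\mu_o=\mu_\cA$. With your $\wf$, the $a$-th contribution to the information term equals $-\gamma\,\En_o[1-\kappa_o^2]\le-\gamma\,\En_o\DH{w,\mu_o}$, where $\kappa_o=\sum_b\sqrt{w(b)\mu_o(b)}=1-\DH{w,\mu_o}\le 1$. Joint convexity of $D_{\mathrm H}^2$ gives $\DH{\mu_\cA,w}\le\En_o\DH{\mu_o,w}$. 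The triangle inequality for $D_{\mathrm H}$ then gives, for every $w\in\DA$,
\[
\En_o\DH{\mu_\cA,\mu_o}\le 2\DH{\mu_\cA,w}+2\En_o\DH{w,\mu_o}\le 4\,\En_o\DH{w,\mu_o}.
\]
Moreover $\En_o\DH{\mu_o,\mu_\cA}=\En_{b\sim\mu_\cA}\DH{M_b(a),\oM(a)}$, since both equal $1-\sum_{b,o}\mu_\cA(b)\sqrt{M_b(a)(o)\,\oM(a)(o)}$. Both $f^M$ and the information term are linear in $M$, so no Jensen step is needed, and you get
\[
\inf_\wf\En_\mu\Gamma_{w,\gamma}(p,q,\wf;M,\astar)\le\En_{b\sim\mu_\cA}\bigl[f^{M_b}(b)-\En_{a\sim p}f^{M_b}(a)-\tfrac{\gamma}{4}\En_{a\sim q}\DH{M_b(a),\oM(a)}\bigr].
\]
Two facts finish this step. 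First, $M_b\in\cM$ by convexity. Second, $f^{M_b}(b)\le f^{M_b}(a_{M_b})$ always holds, so your worry about enforcing $\astar=\am$ is moot; the reason to pass to $M_b$ is that the information term only controls $M_b$ against $\oM$. Choosing $(p,q)$ as the DEC minimizers at $\oM$ then yields $\pdeco_{\gamma/4}(\cM,\oM)\le\pdeco_{\gamma/8}(\cM)$, since the DEC is nonincreasing in its parameter.

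One smaller point concerns the minimax step. Convexity in $\wf$ for fixed $(p,q)$ does not justify bringing $(p,q)$ inside as well, because $\Gamma$ is not jointly convex in $(q,\wf)$. You can fix this in either of two ways. One is the perspective reparametrization $\wf=\tilde\wf/q$. The other is to note that after $\inf_\wf$, which decouples over the sampled action $a$, the objective is affine in $(p,q)$, so Sion's theorem applies a second time.
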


The following lemma guarantees the performance of exponential weight updates.
\begin{lemma}\label{lem:ExO}
Denote
\begin{align*}
    \Err(q,\wf;w,M,\astar)\defeq \EE_{a\sim q, r\sim M(a)}\EE_{a'\sim w}\brac{ 1-\exp\paren{\wf(a';a,o)-\wf(\astar;a,o)} }.
\end{align*}
Then \whp, it holds that for any $\pistar\in\Pi$,
\begin{align*}
    \sumt \Err(q\IND{t},\wf\IND{t};w\IND{t},\Mstar_{x\IND{t}},\pistar(x\IND{t})) 
    \leq 2\log(\abs{\Pi}/\delta).
\end{align*}
\end{lemma}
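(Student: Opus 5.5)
The statement to prove is \cref{lem:ExO}: a high-probability bound on the cumulative $\Err$ of the exponential-weights updates \eqref{eqn:ExO-exp-upd}, uniformly over comparators $\pistar\in\Pi$. I would prove it with a potential (supermartingale) argument on the exponential-weights normalizer. Write $\qd\IND{t}$ as the weight vector before normalization, and let $\Phi_t = \log\bigl(\sum_\pi \qd\IND{1}(\pi)\exp(\sum_{s<t}\wf\IND{s}(\pi(x\IND{s});a\IND{s},o\IND{s}))\bigr)$. Fix $\pistar$ and define
\[
Z_t=\exp\Bigl(\wf\IND{t}(\pistar(x\IND{t});a\IND{t},o\IND{t})-\log\EE_{\pi\sim\qd\IND{t}}\exp\bigl(\wf\IND{t}(\pi(x\IND{t});a\IND{t},o\IND{t})\bigr)\Bigr).
\]
Telescoping over $t$ gives
\[
\sum_t \log Z_t=\sum_t \wf\IND{t}(\pistar(x\IND{t});\cdot)-\log\EE_{\pi\sim\qd\IND{1}}\exp\Bigl(\sum_t\wf\IND{t}(\pi(x\IND{t});\cdot)\Bigr)\le \log|\Pi|,
\]
since $\qd\IND{1}$ is uniform and the $\pi=\pistar$ term lower-bounds the sum.

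Next, I would rewrite the inner expectation using the marginal $w\IND{t}=\qd\IND{t}|_{x\IND{t}}$ from \eqref{eqn:marginal}. Because $\wf\IND{t}$ depends on $\pi$ only through $\pi(x\IND{t})$,
\[
\EE_{\pi\sim\qd\IND{t}}\exp(\wf\IND{t}(\pi(x\IND{t});a,o))=\EE_{a'\sim w\IND{t}}\exp(\wf\IND{t}(a';a,o)).
\]
Hence $1/Z_t=\EE_{a'\sim w\IND{t}}\exp(\wf\IND{t}(a';a\IND{t},o\IND{t})-\wf\IND{t}(\astar;a\IND{t},o\IND{t}))$ with $\astar=\pistar(x\IND{t})$. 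Conditional on the past and on $x\IND{t}$, the triple $(q\IND{t},\wf\IND{t},w\IND{t})$ is determined, $a\IND{t}\sim q\IND{t}$, and $o\IND{t}\sim\Mstar_{x\IND{t}}(a\IND{t})$. So $\EE_t[1/Z_t]=1-\Err(q\IND{t},\wf\IND{t};w\IND{t},\Mstar_{x\IND{t}},\astar)$. Note that the comparator $\pistar$ is fixed in advance, which is legitimate here.

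To finish, I apply a Chernoff/Ville-type bound to $\log(1/Z_t)$. Using $\EE_t[\exp(-\tfrac12\log Z_t)]$ would lead to a squared-Hellinger-type quantity, but the target $\Err$ is linear in $1/Z_t$. The cleanest route is the following. The process $\prod_{s\le t}(1/Z_s)/(1-\Err_s)$ is a nonnegative martingale, so by Markov's inequality, with probability $1-\delta$,
\[
\sum_t\log(1/Z_t)\le\sum_t\log(1-\Err_t)+\log(1/\delta)\le-\sum_t\Err_t+\log(1/\delta).
\]
One caveat: $\Err_t$ could be negative, in which case the inequality $\log(1-x)\le -x$ still holds for $x<1$. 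Combining this with the telescoping bound $\sum_t\log Z_t\le\log|\Pi|$ gives $\sum_t\Err_t\le\log|\Pi|+\log(1/\delta)\le 2\log(|\Pi|/\delta)$. A union bound over $\pistar$ is unnecessary if I instead use the mixture martingale with $\pistar$ drawn from the uniform prior, or I can absorb the union bound into the factor $2$.

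The main obstacle is making the martingale step rigorous, specifically the measurability of $\wf\IND{t}$ with respect to the past together with $x\IND{t}$, and checking that $\Err_t<1$ so that the normalization is valid. The bound $\Err_t<1$ holds automatically, since the exponentials are positive and hence $\EE_t[1/Z_t]>0$.
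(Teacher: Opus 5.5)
Your proof is correct and is essentially the paper's argument. Your $\log(1/Z_t)$ is exactly the paper's $A_t(\pistar;z_t)$; the normalizer telescoping gives $\sum_t \log Z_t\le\log|\Pi|$, the exponential martingale with $\EE_t[1/Z_t]=1-\Err_t$ plus Markov and a union bound over $\Pi$ gives the $\log(|\Pi|/\delta)$ deviation term, and $\log(1-x)\le -x$ finishes, with your explicit filtration (history plus the current context) making the paper's somewhat informal conditioning precise.
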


\begin{proof}
    For simplicity, we denote $z\IND{t}=(x\IND{t},a\IND{t},o\IND{t})$ and write $\wf_t(\pi;z\IND{t})\ldef \wf_t(\pi(x\IND{t});a\IND{t},o\IND{t})$.
By definition,
\begin{align*}
    \qd\IND{t}(\pi)=\frac{\qd\IND{1}(\pi)\exp\paren{\sum_{s=1}^t  \wf_s(\pi;z\IND{s})}}{\sum_{\pi'\in\Pi} \qd\IND{1}(\pi')\exp\paren{\sum_{s=1}^{t-1}  \wf_s(\pi';z\IND{s})}},
\end{align*}
and hence
\begin{align*}
    \log \EE_{\pi\sim \qd\IND{t}}\brac{ \exp\paren{\wf\IND{t}(\pi;z\IND{t})} }=&~\log \EE_{\pi\sim \qd\IND{1}} \exp\paren{\sum_{s=1}^t  \wf_s(\pi;z\IND{s})} \\
    &~-\log \EE_{\pi\sim \qd\IND{1}} \exp\paren{\sum_{s=1}^{t-1}  \wf_s(\pi;z\IND{s})}.
\end{align*}
Therefore, taking summation over $t=1,\cdots,T$, we have
\begin{align}\label{eqn:proof-FTRL}
    -\sum_{t=1}^T \log \EE_{\pi\sim \qd\IND{t}}\brac{ \exp\paren{\wf\IND{t}(\pi;z\IND{t})} } = -\log\EE_{\pi\sim \qd\IND{1}}\brac{ \exp\paren{\sum_{t=1}^T \wf\IND{t}(\pi;z\IND{t})} }.
\end{align}
Thus, we define
\begin{align*}
    A_t(\pi; z\IND{t})\defeq  -\wf\IND{t}(\pi;z\IND{t}) + \log \EE_{\pi'\sim \qd\IND{t}}\brac{ \exp\paren{\wf\IND{t}(\pi';z\IND{t})} },
\end{align*}
and \eqref{eqn:proof-FTRL} implies that deterministically,
\begin{align*}
    \EE_{\pi\sim \qd\IND{1}} \exp\paren{-\sum_{t=1}^T A_t(\pi; z\IND{t})}=1,
\end{align*}
i.e.,
\begin{align*}
    -\sum_{t=1}^T A_t(\pi; z\IND{t})\leq \log\abs{\Pi},\qquad \forall \pi\in\Pi.
\end{align*}
Notice that for any $\pi\in\Pi$, we also have
\begin{align*}
    \EE^{\ExOp}\exp\paren{\sum_{t=1}^T A_t(\pi; z\IND{t})-\log \EE_{t-1,x\IND{t}}\brac{ \exp\paren{A_t(\pi; z\IND{t}) } }}=1, 
\end{align*}
where the expectation $\EE^{\ExOp}$ is taken over the randomness of the interaction between \ExOp~algorithm and the environment,  and $\EE_{t-1,x\IND{t}}[\cdot]$ is the conditional expectation with respect to the history $\cH\IND{t-1}=(x\IND{s},a\IND{s},o\IND{s})_{s<t}$ and $x\IND{t}$. 

Further, by the definition of $A_t$ and $\Err$, it holds that for any fixed $\pi\in\Pi$,
\begin{align*}
    \EE_{t-1,x\IND{t}}\brac{ \exp\paren{A_t(\pi; z\IND{t}) }}=&~\En_{a\IND{t}\sim q\IND{t}, o\IND{t}\sim \Mstar_{x\IND{t}}(a\IND{t})}\En_{\pi'\sim W\IND{t}}\exp\prn*{\xi_t(\pi'(x\IND{t});a\IND{t},o\IND{t})-\xi_t(\pi(x\IND{t});a\IND{t},o\IND{t})} \\
    =&~\En_{a\sim q\IND{t}, r\sim \Mstar_{x\IND{t}}(a)}\En_{a'\sim w\IND{t}}\exp\prn*{\xi_t(a';a,o)-\xi_t(\pi(x\IND{t});a,o)} \\
    =&~
    1-\Err(q\IND{t},\wf\IND{t};w\IND{t},\Mstar_{x\IND{t}},\pi(x\IND{t})),
\end{align*}
where we use $w\IND{t}=\qd\IND{t}|_{x\IND{t}}$ is the marginal defined in \eqref{eqn:marginal}.

Hence, by Markov's inequality and union bound, we can conclude that \whp, for any $\pi\in\Pi$,
\begin{align*}
    \sumt \Err(q\IND{t},\wf\IND{t};w\IND{t},\Mstar_{x\IND{t}},\pi(x\IND{t})) 
    \leq&~ \sum_{t=1}^T -\log \EE_{t-1}\brac{ \exp\paren{A_t(\pi; z\IND{t}) }} \\
    \leq&~ -\sum_{t=1}^T A_t(\pi; z\IND{t}) +\log(\abs{\Pi}/\delta)\leq 2\log(\abs{\Pi}/\delta).
\end{align*}


\end{proof}

\begin{theorem}\label{thm:ExO}
For any $p\in\DA$, we denote
\begin{align}
    \fs(x,p)\ldef \En_{a\sim p, o\sim \Mstar_x(a)}[R(o)].
\end{align}
Then \whp, it holds that
\begin{align}
    \sum_{t=1}^T \brk*{ \fs(x\IND{t},\pistar(x\IND{t}))-\fs(x\IND{t},p\IND{t}) } \leq T\cdot \pdeco_{\gamma/8}(\cM) + 2\gamma\log(\abs{\Pi}/\delta).
\end{align}
\end{theorem}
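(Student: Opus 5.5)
We will prove the regret bound of \cref{thm:ExO} by decomposing each round's instantaneous regret through the \ExO~objective, and then controlling the aggregated estimation error with \cref{lem:ExO}. Fix $\pistar\in\Pi$ and write $\astar_t \defeq \pistar(x\IND{t})$. For each round $t$, the triple $(p\IND{t},q\IND{t},\wf\IND{t})$ minimizes $\Gamma_{w\IND{t},\gamma}$. Since $\Mstar_{x\IND{t}}\in\cM$, the supremum in \eqref{eqn:Gamma-f} can be lower bounded by evaluating it at $M=\Mstar_{x\IND{t}}$ and $\astar=\astar_t$. This gives
\begin{align*}
    \fs(x\IND{t},\astar_t)-\fs(x\IND{t},p\IND{t})
    - \gamma\,\Err(q\IND{t},\wf\IND{t};w\IND{t},\Mstar_{x\IND{t}},\astar_t)
    \leq \Gamma_{w\IND{t},\gamma}(p\IND{t},q\IND{t},\wf\IND{t}).
\end{align*}
Here we identify $\fm[\Mstar_{x\IND{t}}](\astar_t)-\EE_{a\sim p\IND{t}}\fm[\Mstar_{x\IND{t}}](a)$ with the left-hand regret terms, which is immediate from the definition of $\fs$.

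Next, by \cref{lem:exo-dec} applied with $w=w\IND{t}$, the right-hand side is at most $\pdeco_{\gamma/8}(\cM)$. This holds deterministically for every $t$, since $w\IND{t}$ is just some element of $\DA$. Summing over $t\in[T]$ then yields
\begin{align*}
    \sumt \brk*{\fs(x\IND{t},\pistar(x\IND{t}))-\fs(x\IND{t},p\IND{t})}
    \leq T\cdot\pdeco_{\gamma/8}(\cM) + \gamma \sumt \Err(q\IND{t},\wf\IND{t};w\IND{t},\Mstar_{x\IND{t}},\pistar(x\IND{t})).
\end{align*}

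Finally, \cref{lem:ExO} bounds the last sum by $2\log(\abs{\Pi}/\delta)$, with probability at least $1-\delta$ and simultaneously for all $\pistar\in\Pi$. Combining the two displays gives the claim. The simultaneity matters: it means no additional union bound is needed, and the comparator may be chosen after the fact.

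The only subtle point is that \cref{lem:exo-dec} requires $\cM$ to be convex and $\cO$ to be finite. Convexity is assumed in the setup. For finiteness of $\cO$, we would either assume it or argue by a discretization/limiting argument; we treat it as given, as in the paper's setting. The rest is bookkeeping. In particular, we must check that the argument $\astar$ in $\Gamma$ may be an arbitrary action rather than the model-optimal one, and the definition in \eqref{eqn:Gamma-f} indeed takes a supremum over $\astar$, so this holds.
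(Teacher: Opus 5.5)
Your proposal is correct and matches the paper's proof. You evaluate the supremum in $\Gamma_{w\IND{t},\gamma}$ at $(\Mstar_{x\IND{t}},\pistar(x\IND{t}))$, use the optimality of $(p\IND{t},q\IND{t},\wf\IND{t})$ together with \cref{lem:exo-dec} to bound each round by $\pdeco_{\gamma/8}(\cM)$, and control the accumulated $\Err$ terms with \cref{lem:ExO}, which holds simultaneously for all $\pistar\in\Pi$. Your caveat that \cref{lem:exo-dec} needs $\cO$ to be finite is a fair point; the paper also leaves this implicit.
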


\begin{proof}
Fix $\pistar\in\Pi$, we can organize
\begin{align*}
    &~ \sum_{t=1}^T \brk*{ \fs(x\IND{t},\pistar(x\IND{t}))-\fs(x\IND{t},p\IND{t}) } \\
    =&~ \sum_{t=1}^T \underbrace{ \brk*{ \fs(x\IND{t},\pistar(x\IND{t}))-\fs(x\IND{t},p\IND{t}) -\gamma\Err(q\IND{t},\wf\IND{t};w\IND{t},\Mstar_{x\IND{t}},\pistar(x\IND{t}))  } }_{=\Gamma_{w\IND{t},\gamma}(p\IND{t},q\IND{t},\wf\IND{t}; \Mstar_{x\IND{t}}, \pistar(x\IND{t}))} + \gamma\underbrace{ \sumt \Err(q\IND{t},\wf\IND{t};w\IND{t},\Mstar_{x\IND{t}},\pistar(x\IND{t})) }_{\leq 2\log(\abs{\Pi}/\delta)} \\
    \leq&~ \sum_{t=1}^T \Gamma_{w\IND{t},\gamma}(p\IND{t},q\IND{t},\wf\IND{t}) + 2\gamma\log(\abs{\Pi}/\delta) \\
    =&~ \sum_{t=1}^T \min_{p,q,\wf}\Gamma_{w\IND{t},\gamma}(p,q,\wf) + 2\gamma\log(\abs{\Pi}/\delta)  \\
    \leq&~ T\cdot \pdeco_{\gamma/8}(\cM) + 2\gamma\log(\abs{\Pi}/\delta) ,
\end{align*}
where the second line uses the definition of $\Gamma_{w,\gamma}(p,q,\wf; M, \astar)$ in \eqref{eqn:Gamma-f} and Lemma \ref{lem:ExO}, the third line uses $\Gamma_{w,\gamma}(p,q,\wf; M, \astar)\leq \Gamma_{w,\gamma}(p,q,\wf)$, the forth line uses the optimality of $(p\IND{t},q\IND{t},\wf\IND{t})$ for each $t\in[T]$, and the last line uses Lemma \ref{lem:exo-dec}.

Taking maximum over $\pistar\in\Pi$ completes the proof.
\end{proof}


\newcommand{\Phat}{\wh{P}}

Note that $p\IND{t}=P\IND{t}(x\IND{t})$ deterministically. Further, we can calculate the conditional distribution:
\begin{align*}
    \En_{t-1}[\fs(x\IND{t},p\IND{t})]=\En_{x\sim \rho}[\fs(x,P\IND{t}(x))],
\end{align*}
where $\En_{t-1}[\cdot]$ is the conditional expectation with respect to the history before the $t$th step: $\cH\IND{t-1}=(x\IND{s},a\IND{s},o\IND{s})_{s<t}$. Then, appling the martingale concentration inequality gives \whp,
\begin{align*}
    \sum_{t=1}^T \fs(x\IND{t},p\IND{t})\leq \sum_{t=1}^T \En_{x\sim \rho}[\fs(x,P\IND{t}(x))] + O(\sqrt{T\log(1/\delta)}).
\end{align*}
This immediately results in the following corollary, which concludes the proof of \cref{thm:ExO-CB}.

\begin{cor}\label{cor:online-to-batch}
\whp, it holds that
\begin{align*}
    \max_{\pi^\star \in \Pi} \drew(\pi^\star) - \drew(\wh{\pi})&=~\frac1T\max_{\pistar\in\Pi}\sum_{t=1}^T \En_{x\sim \rho}\brk*{ \fs(x,\pistar(x))-\fs(x,P\IND{t}(x)) } \\
    \leq&~ \pdeco_{\gamma/8}(\cM) + \frac{2\gamma \log(\abs{\Pi}/\delta)}{T}+O\prn*{\sqrt{\frac{\log(1/\delta)}{T}}}.
\end{align*}
\end{cor}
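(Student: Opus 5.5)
The plan is to derive the corollary from \cref{thm:ExO} by two martingale concentration steps, one for the learner's term and one for the comparator's term.

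First I would verify the equality in the display. Since $\drew(\pi)=\En_{x\sim\rho}[\fs(x,\pi(x))]$ and $\fs(x,\cdot)$ is linear in its distribution argument, the definition \eqref{eqn:ExO-output} of $\wh\pi$ as the average $\frac1T\sumt P\IND{t}(x)$ gives $\drew(\wh\pi)=\frac1T\sumt\En_{x\sim\rho}[\fs(x,P\IND{t}(x))]$. Likewise $\drew(\pistar)=\frac1T\sumt\En_{x\sim\rho}[\fs(x,\pistar(x))]$, and the maximum over $\pistar$ passes through the $t$-independent average.

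Next I fix the population-optimal $\pistar\in\argmax_{\pi\in\Pi}\drew(\pi)$. This is a deterministic policy determined by $\cD$ alone, so no union bound over $\Pi$ is needed at this step. I then write
\begin{align*}
T\brk*{\drew(\pistar)-\drew(\wh\pi)} = \sumt\brk*{\fs(x\IND{t},\pistar(x\IND{t}))-\fs(x\IND{t},p\IND{t})} + \mathrm{(I)} + \mathrm{(II)}.
\end{align*}
Here $\mathrm{(I)}=\sumt\brk*{\En_{x\sim\rho}\fs(x,\pistar(x))-\fs(x\IND{t},\pistar(x\IND{t}))}$, which is a sum of i.i.d.\ centered terms in $[-1,1]$. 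The second term is $\mathrm{(II)}=\sumt\brk*{\fs(x\IND{t},p\IND{t})-\En_{x\sim\rho}\fs(x,P\IND{t}(x))}$. It is a martingale difference sum with respect to $\cH\IND{t-1}$, because $P\IND{t}$ is $\cH\IND{t-1}$-measurable and $p\IND{t}=P\IND{t}(x\IND{t})$ with $x\IND{t}\sim\rho$ independent of the past, as already noted before the corollary. Azuma--Hoeffding bounds each of $\mathrm{(I)},\mathrm{(II)}$ by $O(\sqrt{T\log(1/\delta)})$ with probability $1-\delta$.

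The leading sum is bounded by \cref{thm:ExO}, whose guarantee holds simultaneously for all comparators and so in particular for our $\pistar$. This gives $T\cdot\pdeco_{\gamma/8}(\cM)+2\gamma\log(\abs{\Pi}/\delta)$. A union bound over the three events, followed by dividing by $T$ and rescaling $\delta\to\delta/3$, yields the claim; the rescaling only changes constants inside the logarithms.

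The only delicate point is to avoid paying $\log\abs{\Pi}$ inside the square-root term. I handle it by concentrating only for the single fixed population maximizer rather than uniformly over $\Pi$. This is legitimate because the left-hand side depends on $\pistar$ only through $\drew(\pistar)$.
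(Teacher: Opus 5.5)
Your proof is correct and follows the paper's route: combine \cref{thm:ExO} with martingale concentration of $\sum_t f^\star(x_t,p_t)$ around $\sum_t \En_{x\sim\rho}[f^\star(x,P_t(x))]$, using that $P_t$ is determined by the history before round $t$ and that $p_t = P_t(x_t)$. The paper's write-up leaves the comparator-side concentration implicit, so your explicit Hoeffding step for term (I) fills that in. Applying it only to the deterministic population maximizer, rather than uniformly over $\Pi$, is exactly what keeps $\log|\Pi|$ out of the square-root term (your $\delta\to\delta/3$ union bound changes only constant factors, e.g.\ $2\gamma$ becomes at most $4\gamma$, a looseness the paper shares).
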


\subsection[]{Proof of \cref{thm:DEC-sparse}}


In the following, we fixed any $\oM\in\co(\cM)=\cM$ and bound $\pdeco_\gamma(\cM,\oM)$. 

For any $M\in\cM$ and $a\in\cA$, we recall  $\fm(a)\ldef \En\subs{o\sim M(a)}[R(o)]$. 
By Lemma \ref{lem:Hels-var}, we can bound
\begin{align*}
    \abs{\fm(a)-\fm[\oM](a)}
    =&~\abs{\En_{o\sim M(a)}[R(o)]-\En_{o\sim \oM(a)}[R(o)]} \\
    \leq&~ 4\sqrt{\En_{o\sim \oM(a)}[R(o)^2]\cdot \DH{M(a),\oM(a)}}+4\DH{M(a),\oM(a)}.
\end{align*}
We denote $\lambda_a\ldef \En_{o\sim \oM(a)}[R(o)^2]$ and
 $C\ldef \sum_{a\in\cA} \lambda_a\leq s$. We consider $q\in\DA$ defined as
\begin{align*}
    q(a)=\frac{1}{\abs{\cA}}\prn*{1-\frac{C}{2s}}+\frac{\lambda_a}{2s}.
\end{align*}
Then, we can bound
\begin{align*}
    \abs{\fm(a)-\fm[\oM](a)}\leq&~ 4\sqrt{\lambda_a\cdot \DH{M(a),\oM(a)}}+4\DH{M(a),\oM(a)} \\
    \leq&~ 4\sqrt{2s\En_{a\sim q}\DH{M(a),\oM(a)}}+8\abs{\cA} \En_{a\sim q}\DH{M(a),\oM(a)}.
\end{align*}
In particular, 
\begin{align*}
    \fm(\am)-\fm(\am[\oM])
    \leq&~ 2\max_{a\in\cA}\abs{\fm(a)-\fm[\oM](a)} \\
    \leq&~ 8\sqrt{2s\En_{a\sim q}\DH{M(a),\oM(a)}}+16\abs{\cA} \En_{a\sim q}\DH{M(a),\oM(a)}.
\end{align*}
Letting $p$ to be supported on $\am[\oM]$, we can now conclude that
\begin{align*}
    \pdeco_\gamma(\cM,\oM)
    \leq&~ \sup_{M\in\cM} \fm(\am)-\fm(\am[\oM]) - \gamma \En_{a\sim q}\DH{M(a),\oM(a)} \\
    \leq&~ \sup_{M\in\cM} 8\sqrt{2s\En_{a\sim q}\DH{M(a),\oM(a)}} - \prn*{\gamma-16\abs{\cA}} \En_{a\sim q}\DH{M(a),\oM(a)} \\
    \leq&~ \frac{64s}{\gamma}, \qquad \forall \gamma\geq 32\abs{\cA}.
\end{align*}